\theoremstyle{plain}
\newenvironment{theoremcopy}[1]{\innercustomthm}{\endinnercustomthm}
\newcommand{\update}{\small \textsc{Upd}\xspace}
\newcommand{\aggregate}{\small\textsc{Agg}\xspace}
\newcommand{\readout}{\small\textsc{Read}\xspace}
\newcommand{\mes}{\small\textsc{Msg}\xspace}
\newcommand{\init}{\small\textsc{Init}\xspace}
\def\eqref#1{equation~\ref{#1}}
\def\1{\bm{1}}
\newcommand{\enc}{\textsf{enc}\xspace}
\newcommand{\dec}{\textsf{dec}\xspace}
\newcommand{\cmpnn}{{\sf C-MPNN}\xspace}
\newcommand{\rdwl}{{\sf rawl}}
\newcommand{\rawl}{{\sf rawl}}
\newcommand{\rwl}{{\sf rwl}}
\newcommand{\llbrace}{\{\!\!\{}
\newcommand{\rrbrace}{\}\!\!\}}
\def\vb{{\bm{b}}}
\def\vh{{\bm{h}}}
\def\vv{{\bm{v}}}
\def\vx{{\bm{x}}}
\def\vy{{\bm{y}}}
\def\vz{{\bm{z}}}
\def\mA{{\bm{A}}}
\def\mB{{\bm{B}}}
\def\mD{{\bm{D}}}
\def\mE{{\bm{E}}}
\def\mH{{\bm{H}}}
\def\mJ{{\bm{J}}}
\def\mM{{\bm{M}}}
\def\mW{{\bm{W}}}
\def\mX{{\bm{X}}}
\DeclareMathAlphabet{\mathsfit}{\encodingdefault}{\sfdefault}{m}{sl}
\SetMathAlphabet{\mathsfit}{bold}{\encodingdefault}{\sfdefault}{bx}{n}
\def\sR{{\mathbb{R}}}
\def\cmpnn{{\textnormal{C-MPNN}}}
\newcommand{\true}{{\sf true}}
\newcommand{\false}{{\sf false}}
\DeclareMathOperator{\sign}{sign}
\theoremstyle{plain}
\newtheorem{theorem}{Theorem}[section]
\newtheorem{proposition}[theorem]{Proposition}
\newtheorem{lemma}[theorem]{Lemma}
\theoremstyle{definition}
\theoremstyle{remark}
\newtheorem{remark}[theorem]{Remark}
\title{A Theory of Link Prediction via \\ 
Relational Weisfeiler-Leman on Knowledge Graphs}
\author{
  Xingyue Huang\\
  Department of Computer Science\\
  University of Oxford\\
  Oxford, UK. \\
  \texttt{xingyue.huang@cs.ox.ac.uk} \\
    \And 
  Miguel Romero\\
  Department of Computer Science\\ 
  Universidad Católica de Chile\\
  \& CENIA Chile\\
  \texttt{mgromero@uc.cl} \\
    \And 
  {\.I}smail {\.I}lkan Ceylan\\
  Department of Computer Science\\
  University of Oxford\\
  Oxford, UK. \\
  \texttt{ismail.ceylan@cs.ox.ac.uk} \\
    \And 
  Pablo Barcel{\'o}\\
  Inst. for Math. and Comp. Eng.\\
  Universidad Católica de Chile\\
  \& IMFD Chile \& CENIA Chile \\
  \texttt{pbarcelo@uc.cl} \\
}
\begin{document}

\maketitle

\begin{abstract}
Graph neural networks are prominent models for representation learning over graph-structured data. While the capabilities and limitations of these models are well-understood for simple graphs, our understanding remains incomplete in the context of knowledge graphs. Our goal is to provide a systematic understanding of the landscape of graph neural networks for knowledge graphs pertaining to the prominent task of link prediction. Our analysis entails a unifying perspective on seemingly unrelated models and unlocks a series of other models. The expressive power of various models 
is characterized via a corresponding relational Weisfeiler-Leman algorithm. 
This analysis is extended to provide a precise logical characterization of the class of functions captured by a class of graph neural networks. The theoretical findings presented in this paper explain the benefits of some widely employed practical design choices, which are validated empirically.
\end{abstract}

\section{Introduction}
Graph neural networks (GNNs) \cite{Scarselli09,Gori2005}  are prominent models for representation learning over graph-structured data, where the idea is to iteratively compute vector representations of nodes of an input graph through a series of invariant (resp., equivariant) transformations. While the landscape of GNNs is overwhelmingly rich, the vast majority of such models are instances of \emph{message passing neural networks}~\cite{GilmerSRVD17} which are well-studied, leading to a theoretical understanding of their capabilities and limitations \cite{Keyulu18,MorrisAAAI19}. In turn, 
our understanding is rather limited for GNN models dedicated to learning over \emph{knowledge graphs}, which are applied in a wide range of domains.

To make our context precise, we first consider an extension of message passing neural networks with relation-specific message functions, which we call \emph{relational} message passing neural networks. Two prominent examples of this framework are RGCN~\cite{SchlichtkrullKB18} and CompGCN~\cite{vashishth2020compositionbased}, and their expressive power has recently been 
characterized through a dedicated relational Weisfeiler-Leman test~\cite{wl-relational}. 

While offering principled means for learning over knowledge graphs, the standard relational message passing framework is tailored for computing \emph{unary} node representations and therefore models of this class are better suited for node-level tasks (e.g., node/entity classification). Actually, it is well-known that even a good node-level representation might not necessarily induce a good edge representation, hindering the applicability of such models for the crucial task of link prediction~\cite{LabelingTrick}. This has led to the design of GNN architectures specifically tailored for link prediction over knowledge graphs \cite{GraIL,nbfnet,pathcon,gdgnn}, for which our understanding remains limited.

The goal of this paper is to offer a theory of the capabilities and limitations of a class of relational GNN architectures which compute \emph{pairwise} node representations to be utilized for link prediction. 
Although most such architectures can be seen to be subsumed by {\em higher-order} message passing neural networks that compute pairwise representations on nodes, the inherently quadratic behavior of the latter justifies local approximations which align better with models used in practice.
Of particular interest to us is Neural Bellman-Ford Networks~(NBFNets)~\cite{nbfnet}, which define a message passing approach 
inspired by the Bellman-Ford algorithm. We argue in salient detail that the crucial insight of this approach is in leveraging the idea of computing \emph{conditional} pairwise-node representations, which leads to more expressive models at a more reasonable computational cost, given its local nature.

Building on this fundamental aspect, we define \emph{conditional} message passing neural networks, which extend traditional ones by a conditional message passing paradigm: every node representation is conditional on a source node and a query relation, which allows for computing pairwise node representations. This framework strictly contains NBFNets and allows for a systematic treatment of various other models. Through a careful study of this framework, we can explain the conceptual differences between different models along with their respective expressive power.

Our contributions can be summarized as follows:

\begin{enumerate} [$\bullet$,leftmargin=*]
\item We introduce \emph{conditional} message passing neural networks which encode representations of nodes $v$ conditioned on a (source) node $u$ and a query relation $q$, yielding pairwise node representations. We discuss the model design space, including a discussion on 
different initialization regimes and the presence of global readout functions in each layer of the network. 

\item We define a relational Weisfeiler-Leman algorithm (building on similar works such as \citet{wl-relational}), and prove that \emph{conditional} message passing neural networks can match the expressive power of this algorithm. This study reveals interesting insights about NBFNets, suggesting that their strong empirical performance is precisely due to the expressive power, which can be matched by other instances of this framework.
\item Viewing \emph{conditional} message passing neural networks (with or without global readouts) as classifiers over pairs of nodes, we give logical characterizations of their expressive power
based on formulas definable in some 
\emph{binary} variants of \emph{graded modal logic} ~\cite{DBLP:journals/sLogica/Rijke00, extended-modal}. 
This provides us with a  declarative and well-studied formal counterpart of C-MPNNs. 
\item We conduct an experimental analysis to verify the impact of various model choices, particularly pertaining to initialization, history, message computation, and global readout functions. We also conduct both inductive and transductive experiments on various real-world datasets, empirically validating our theoretical findings. 
\end{enumerate}

\section{Related work and motivation}
\label{sec-relatedwork}

Early GNNs for knowledge graphs are relational variations of message passing neural networks. A prototypical example is the RGCN architecture~\cite{SchlichtkrullKB18}, which extends graph convolutional networks~(GCNs)~\cite{Kipf16} with relation-specific message functions. CompGCN~\cite{vashishth2020compositionbased} and several other architectures~\cite{Yu2020GeneralizedMG} follow this line of work with differences in their aggregate, update, and message functions. 
These architectures encode \emph{unary} node representations and typically rely on a \emph{pairwise decoder} function to predict the likelihood of a link which is known to be suboptimal for link prediction~\cite{LabelingTrick}. There is a good understanding of the expressive power of these architectures~\cite{wl-relational}: we generalize these results in \Cref{sec:background}, since they form the basis for the rest of our results.

A different approach is given for single-relational graphs by SEAL~\cite{seal}, where the idea is to encode (labeled) subgraphs (instead of nodes). GraIL~\cite{GraIL} extends this idea to knowledge graphs, and one important virtue of these models is that they are \emph{inductive} even if there are no node features in the input graph. The idea is to use a form of labeling trick~\cite{LabelingTrick} based on pairwise shortest path distances in sampled subgraphs, but these architectures suffer from scalability issues. More recent inductive architectures integrate ideas from earlier path-based link prediction approaches~\cite{PerozziAS14, node2vec} into modern GNN architectures, resulting in proposals such as PathCon~\cite{pathcon}, Geodesic GNNs~\cite{gdgnn}, and NBFNets~\cite{nbfnet}.
Our study is very closely related to NBFNets which is inspired by the generalized version of the Bellman-Ford algorithm for finding shortest paths. These architectures aggregate over relational paths by keeping track of conditional pairwise-node representations. While NBFNets can be intuitively seen as the neural counterpart of the \emph{generalized} Bellman-Ford algorithm, they do \emph{not} provably align with this algorithm since the ``semiring assumption'' is invalidated through the use of non-linearities (which is explicit in~\citet{nbfnet}). This leaves open many questions regarding the capabilities and limitations of these architectures. 

In this paper, we argue that the key insight behind architectures such as NBFNets is in locally computing \emph{pairwise representations through conditioning} on a source node, and this has roots in earlier works, such as ID-GNNs~\cite{YouGYL21}. To formally study this, we introduce conditional message passing neural networks as a strict generalization of NBFNets~\cite{nbfnet} and related models such as NeuralLP~\cite{NeuralLP}, or DRUM~\cite{DRUM}. Through this abstraction, we theoretically study the properties of a large class of models in relation to local variants of relational Weisfeiler-Leman algorithms. Broadly, our study can be seen as the relational counterpart of the expressiveness studies conducted for GNNs~\cite{Keyulu18, MorrisAAAI19,BarceloKM0RS20}, particularly related to higher-order GNNs~\cite{MorrisAAAI19}, which align with higher-order dimensional variants of the WL test. Our characterization relies on \emph{local} versions of higher-order WL tests \cite{MorrisRM-NeurIPS20}, albeit not in a relational context. This can be seen as a continuation and generalization of the results given for relational message passing neural networks~\cite{wl-relational} to a  broader class of models.

\section{Background} 
\label{sec:background}

\subsection{Knowledge graphs and invariants}
\label{sec:kg-invariants}

\textbf{Knowledge graphs.} A \emph{knowledge graph} is a tuple ${G=(V,E,R,c)}$, where $V$ is a set of nodes, $E\subseteq R\times V \times V$ is a set of labeled edges, or facts, $R$ is the set of relation types and $c:V\to D$ is a node coloring. When $D=\mathbb{R}^d$, we also say that $c$ is a $d$-dimensional \emph{feature map}, and typically use $\vx$ instead of $c$. We write $r(u,v)$ to denote a labeled edge, or a fact, where $r\in R$ and $u,v\in V$. 
The \emph{neighborhood} of a node $v\in V$ relative to a relation $r \in R$ is defined as $\mathcal{N}_r(v) := \{u \mid r(u,v) \in E\}$. 

\textbf{Graph invariants.} We define \emph{$k$-ary graph invariants} following the terminology of \citet{GroheLogicGNN}, for which we first define isomorphism over knowledge graphs. An \emph{isomorphism} from a knowledge graph ${G=(V,E,R,c)}$ to a knowledge graph ${G'=(V',E',R,c')}$ is a bijection $f:V\to V'$ such that $c(v)=c'(f(v))$ for all $v\in V$, and $r(u,v)\in E$ if and only if $r(f(u),f(v))\in E'$, for all $r\in R$ and $u,v\in V$. 
A \emph{$0$-ary graph invariant} is a function $\xi$ defined on knowledge graphs such that $\xi(G)=\xi(G')$ for all isomorphic knowledge graphs $G$ and $G'$. For $k\geq 1$, a \emph{$k$-ary graph invariant} is a function $\xi$ that associates with each knowledge graph $G=(V,E,R,c)$ a function $\xi(G)$ defined on $V^k$ such that for all knowledge graphs $G$ and $G'$, all isomorphisms $f$ from $G$ to $G'$, and all $k$-tuples of nodes $\vv \in V^k$, it holds that $\xi(G)(\vv)=\xi(G')(f(\vv))$. If $k=1$, this defines a \emph{node invariant}, or \emph{unary invariant}, and if $k=2$, this defines a \emph{binary invariant}, which is central to our study. 

\textbf{Refinements.} A function $\xi(G):V^k\to D$ \emph{refines} a function $\xi'(G):V^k\to D$, denoted as $\xi(G) \preceq \xi'(G)$, if for all $\vv,\vv' \in V^k$, $\xi(G)(\vv)=\xi(G)(\vv')$ implies $\xi'(G)(\vv)=\xi'(G)(\vv')$. We call such functions \emph{equivalent}, denoted as $\xi(G) \equiv \xi'(G)$, if $\xi(G) \preceq \xi'(G)$ and $\xi'(G) \preceq \xi(G)$.
A $k$-ary graph invariant $\xi$ \emph{refines} a $k$-ary graph invariant $\xi'$, if $\xi(G)$ refines $\xi'(G)$ for all knowledge graphs $G$.

\subsection{Relational message passing neural networks} 
\label{sec:rmpnn}

We introduce \emph{relational message passing neural networks~(R-MPNNs)}, which encompass several known models such as RGCN~\cite{SchlichtkrullKB18} and CompGCN~\cite{vashishth2020compositionbased}. The idea is to iteratively update the feature of a node $v$ based on the different relation types $r\in R$ and the features of the corresponding neighbors in $\mathcal{N}_r(v)$. In our most general model we 
also allow readout functions, that allow further updates to the feature of $v$ by aggregating over the features of all nodes in the graph.  

Let $G=(V,E,R,\vx)$ be a knowledge graph, where $\vx$ is a feature map. An \emph{R-MPNN} 
computes a sequence of feature maps $\vh^{(t)} : V\to \mathbb{R}^{d(t)}$, for $t \geq 0$. For simplicity, we write $\vh^{(t)}_v$ instead of $\vh^{(t)}(v)$. For each node $v\in V$, the representations $\vh^{(t)}_v$ are iteratively computed as:  
\begin{align*}
 \vh_v^{(0)} & = \vx_v \\
\vh_v^{(t+1)} &= \ \update \left(\vh_v^{(f(t))}, \aggregate(\{\!\! \{ \mes_r(\vh_w^{(t)})|~  w \in \mathcal{N}_r(v), r \in R\}\!\!\}, \readout(\llbrace \vh_{w}^{(t)} \mid  w \in V \rrbrace) \right),
\end{align*}
where $\update$, $\aggregate$, $\readout$, and $\mes_r$ are differentiable \emph{update}, \emph{aggregation},  \emph{global readout}, and relation-specific \emph{message} functions, respectively, and $f:\mathbb{N}\to \mathbb{N}$ is a \emph{history} function\footnote{The typical choice is $f(t)=t$, but other options are considered in the literature, as we discuss later.}, which is always non-decreasing and satisfies 
$f(t)\leq t$. An R-MPNN has a fixed number of layers $T\geq 0$, and then, the final node representations are given by the map $\vh^{(T)}:V\to \mathbb{R}^{d(T)}$. 

The use of a readout component in message passing is well-known~\cite{BattagliaPLRK16} but its effect is not well-explored in a relational context.  It is of interest to us since it has been shown that standard GNNs can capture a larger class of functions with global readout~\cite{BarceloKM0RS20}. 

An R-MPNN can be viewed as an encoder function $\enc$ that associates with each knowledge graph $G$ a function ${\enc(G): V \to \mathbb{R}^{d(T)}}$, which defines a node invariant corresponding to $\vh^{(T)}$. The final representations can be used for node-level predictions. For link-level tasks, we use a binary decoder ${\dec_q: \mathbb{R}^{d(T)} \times  \mathbb{R}^{d(T)}\to  \sR}$, which produces a score for the likelihood of the fact $q(u,v)$, for $q \in R$. 

In Appendix \ref{sec:rlwl1} we provide a useful characterization of the expressive power of 
R-MPNNs in terms of a relational variant of the Weisfeiler–Leman test~\cite{wl-relational}. This characterization is essential for the rest of the 
results that we present in the paper. 

\section{Conditional message passing neural networks}
\label{sec:cmpnns}

R-MPNNs have serious limitations for the task of link prediction~\cite{LabelingTrick}, which has led to several proposals that compute pairwise representations directly. In contrast to the case of R-MPNNs, our understanding of these architectures is limited. In this section, we introduce the framework of conditional MPNNs that offers a natural framework for the systematic study of these architectures.

Let $G=(V,E,R,\vx)$ be a knowledge graph, where $\vx$ is a feature map. A \emph{conditional message passing neural network} (C-MPNN) iteratively computes pairwise representations, relative to a fixed query $q\in R$ and a fixed node $u\in V$, as follows:
\begin{align*}
\vh_{v|u,q}^{(0)} &= \init(u,v,q)\\
\vh_{v|u,q}^{(t+1)} &= \update(\vh_{v|u,q}^{f(t)}, \aggregate(\{\!\! \{ \mes_r(\vh_{w|u,q}^{(t)},\vz_q)|~  w \in \mathcal{N}_r(v), r \in R \}\!\!\}), \readout(\llbrace \vh_{w|u,q}^{(t)} \mid  w \in V \rrbrace)), 
\end{align*} 
where $\init$, $\update$, $\aggregate$, $\readout$, and $\mes_r$ are differentiable \emph{initialization}, \emph{update}, \emph{aggregation},  \emph{global readout}, and relation-specific \emph{message} functions, respectively, and $f$ is the history function.
We denote by $\vh_q^{(t)}:V\times V\to \mathbb{R}^{d(t)}$ the function $\vh_q^{(t)}(u,v):= \vh_{v|u,q}^{(t)}$, and denote $\vz_q$ to be a learnable vector representing the query $q\in R$. A C-MPNN has a fixed number of layers $T\geq 0$, and the final pair representations are given by $\vh_q^{(T)}$.
We sometimes write C-MPNNs \emph{without global readout} to refer to the class of models which do not use a readout component.

Intuitively, C-MPNNs condition on a source node $u$ in order to compute representations of $(u,v)$ for all target nodes $v$. To further explain, we show a visualization of C-MPNNs and R-MPNNs in Figure \ref{fig:comparison} to demonstrate the differences in the forward pass. Contrary to the R-MPNN model where we carry out relational message passing first and rely on the binary decoder to compute a query fact $q(u,v)$, the $\cmpnn$ model first initializes all node representations with the zero vector except the representation of the node $u$ which is assigned a vector with non-zero entry. Following the initialization, we carry out relational message passing and decode the hidden state of the target node $v$ to obtain the output, which yields the representation of $v$ conditioned on $u$.

\begin{figure}[ht]
    \centering
    \begin{tikzpicture}[
        every node/.style={circle, draw, fill=white, inner sep=2.5pt, line width=0.5pt}, 
        line width=1.2pt, 
        >=stealth,
        shorten >=2pt, 
        shorten <=2pt,
        transform shape
    ]
    \definecolor{cartoonred}{RGB}{190,80,80}
    \definecolor{cartoonblue}{RGB}{80,80,190}
    \definecolor{cartoongreen}{RGB}{80,190,80}
    
    \begin{scope}[yshift=-1.5cm, scale=0.7, transform shape]
        \node (A)[label={[font=\large]135:$u$}]  at (0.0, 0.0) {}; 
        \node (B) at (1.152, 0.64) {}; 
        \node (C) at (2.4, 1.28) {}; 
        \node (D) at (0.96, -0.64) {}; 
        \node (E) at (2.112, -0.768) {}; 
        \node (F) at (2.88, 0.192) {};  
        \node (G)[label={[font=\large]45:$v$}] at (3.84, 0) {};

        \draw[->, color=cartoonred, bend left=15] (A) to (B);
        \draw[->, color=cartoonblue, bend left=15] (B) to (C);
        \draw[->, color=cartoongreen, bend right=15] (B) to (D);
        \draw[->, color=cartoonblue, bend right=5] (D) to (E);
        
        \draw[->, color=cartoongreen, bend right=15] (E) to (F);
        \draw[->, color=cartoongreen, bend left=5] (C) to (F);
        \draw[->, color=cartoonred, bend left=15] (F) to (B);
        \draw[->, color=cartoonred] (F) to (G);
    
        \draw[->, dashed, color=cartoonblue, bend right=80, looseness=1.2] (A) to node[midway, above, draw=none, fill=none] {\large $q$} (G);
    
    \end{scope}
    \node[draw=none, fill=none,inner sep=10pt,label={[yshift=0.1cm]center:$G$}] (Graph) at (1.2,0) {};
    \node[draw=none, fill=none,inner sep=10pt,label={[yshift=0.1cm]center:Input Graph}] (InputGraph) at (1.2,-4.2) {};

    \begin{scope}[xshift=6cm,scale=0.7, transform shape]
        \node (A)[fill=green, draw=black, 
        label={[xshift=-3pt,font=\large]135:$\vh_{u}$}
        ] at (0.0, 0.0) {}; 
        \node (B)[fill=orange!50] at (1.152, 0.64) {}; 
        \node (C)[fill=blue!50] at (2.4, 1.28) {}; 
        \node (D)[fill=purple!50] at (0.96, -0.64) {}; 
        \node (E)[fill=gray!50] at (2.112, -0.768) {}; 
        \node (F)[fill=cyan!50] at (2.88, 0.192) {};  
        \node (G)[fill=red, draw=black,
        label={[font=\large]45:$\vh_{v}$},
        ] at (3.84, 0) {};

        \draw[->, color=cartoonred, bend left=15] (A) to (B);
        \draw[->, color=cartoonblue, bend left=15] (B) to (C);
        \draw[->, color=cartoongreen, bend right=15] (B) to (D);
        \draw[->, color=cartoonblue, bend right=5] (D) to (E);
        
        \draw[->, color=cartoongreen, bend right=15] (E) to (F);
        \draw[->, color=cartoongreen, bend left=5] (C) to (F);
        \draw[->, color=cartoonred, bend left=15] (F) to (B);
        \draw[->, color=cartoonred] (F) to (G);

    \end{scope}
    \node[draw=none, fill=none,inner sep=10pt,label={[yshift=0.1cm]center:Message Passing}] (MP) at (7.2,-1.2) {};

    \node[draw=none, fill=none,inner sep=10pt,label={[yshift=0.1cm]center:$\dec_{\textcolor{blue}{q}}(\quad, \quad) \rightarrow \mathbb{R}$}] (Decode) at (11.5,0) {};
    
    \node (A'')[fill=green, draw=black, scale=0.7] at (11.25, 0.15) {};
    \node (G'')[fill=red, draw=black, scale=0.7] at (11.7, 0.15) {};
     
    \draw[->,dashed, color=black, shorten >=4pt, shorten <=4pt, line width=0.7pt] 
    (A) to[out=90, in=180] ($(A)!0.5!(A'')+(0,1)$) 
    to[out=0, in=180] ($(A)!0.5!(A'')+(1,1)$) 
    to[out=0, in=90] (A'');

    \draw[->,dashed, color=black, bend right=70, shorten >=4pt, shorten <=6pt, looseness=1, line width=0.7pt] (G) to (G'');

    \draw[dotted, line width=0.5pt] (3.35, -1.6) -- (13.3, -1.6);

    \begin{scope}[yshift=-3cm]
        \begin{scope}[xshift=4cm, scale=0.7, transform shape]
            \node (A)[
            label={[font=\large]135:$\vz_{\textcolor{blue}{q}}$},
            fill=blue]  at (0.0, 0.0) {}; 
            \node (B)[fill=yellow] at (1.152, 0.64) {}; 
            \node (C)[fill=yellow] at (2.4, 1.28) {}; 
            \node (D)[fill=yellow] at (0.96, -0.64) {}; 
            \node (E)[fill=yellow] at (2.112, -0.768) {}; 
            \node (F)[fill=yellow] at (2.88, 0.192) {};  
            \node (G)[
            label={[font=\large]90:$\mathbf{0}$},
            fill=yellow] at (3.84, 0) {};
            
            \draw[->, color=cartoonred, bend left=15] (A) to (B);
            \draw[->, color=cartoonblue, bend left=15] (B) to (C);
            \draw[->, color=cartoongreen, bend right=15] (B) to (D);
            \draw[->, color=cartoonblue, bend right=5] (D) to (E);
            
            \draw[->, color=cartoongreen, bend right=15] (E) to (F);
            \draw[->, color=cartoongreen, bend left=5] (C) to (F);
            \draw[->, color=cartoonred, bend left=15] (F) to (B);
            \draw[->, color=cartoonred] (F) to (G);
        \end{scope}
    \node[draw=none, fill=none,inner sep=10pt,label={[yshift=0.1cm]center:$\init(u,v,\textcolor{blue}{q})$}] (Init) at (5.2,-1.2) {};

        \node[fill=none, draw=none]  (G'') at (6, -1.1) {};
        \draw[->,dashed, color=black, bend left=20, shorten >=2pt, 
        shorten <=4pt, looseness=1, line width=0.7pt,out=0, in=120] (G'') to (A);

    \begin{scope}[xshift=7.3cm, scale=0.7, transform shape]
        \node (A')[fill=violet!50, draw=black, 
        ] at (0.0, 0.0) {}; 
        \node (B')[fill=lime!50] at (1.152, 0.64) {}; 
        \node (C')[fill=purple!50] at (2.4, 1.28) {}; 
        \node (D')[fill=gray!50] at (0.96, -0.64) {}; 
        \node (E')[fill=cyan!50] at (2.112, -0.768) {}; 
        \node (F')[fill=orange!50] at (2.88, 0.192) {};  
        \node (G')[fill=brown, draw=black,
        label={[yshift=-3pt,xshift=-3pt, font=\large]45:$\vh_{v|u,\textcolor{blue}{q}}$}
        ] at (3.84, 0) {};

        \draw[->, color=cartoonred, bend left=15] (A') to (B');
        \draw[->, color=cartoonblue, bend left=15] (B') to (C');
        \draw[->, color=cartoongreen, bend right=15] (B') to (D');
        \draw[->, color=cartoonblue, bend right=5] (D') to (E');
        
        \draw[->, color=cartoongreen, bend right=15] (E') to (F');
        \draw[->, color=cartoongreen, bend left=5] (C') to (F');
        \draw[->, color=cartoonred, bend left=15] (F') to (B');
        \draw[->, color=cartoonred] (F') to (G');

    \end{scope}
    \node[draw=none, fill=none,inner sep=10pt,label={[yshift=0.1cm]center:Message Passing}] (Decode) at (8.5,-1.2) {};
    
    \node[draw=none, fill=none,inner sep=10pt,label={[yshift=0.1cm]center:$\dec(\quad) \rightarrow \mathbb{R}$}] (Decode) at (11.8,0) {};
    \node (G'')[fill=brown, draw=black, scale=0.7] at (11.7, 0.1) {};
    
        \draw[->,dashed, color=black, bend right=80, shorten >=4pt, 
        shorten <=4pt, looseness=1.2, line width=0.7pt] (G') to (G'');
    \end{scope}

        \draw[decorate,decoration={brace,amplitude=10pt}] (4, 1.2) -- node[draw=none, fill=none,  xshift=0cm ,yshift=0.6cm ] {Encoder} (10, 1.2);
        \draw[decorate,decoration={brace,amplitude=10pt}] (10.1, 1.2) -- node[draw=none, fill=none,  xshift=0cm ,yshift=0.6cm ] {Decoder} (12.9, 1.2);

    \fill[gray!50, opacity=0.2, rounded corners] (-0.5,1.2) rectangle (3.2,-4.4);

    \fill[red!50, opacity=0.2, rounded corners] (3.3,1.2) rectangle (13.3,-1.5);

    \fill[blue!50, opacity=0.2, rounded corners] (3.3,-1.7) rectangle (13.3,-4.4);

    \node[draw=none, fill=none,inner sep=10pt,label={[yshift=0.1cm]center:{R-MPNN}}] (R-MPNN) at (12.5,-1.3) {};

    \node[draw=none, fill=none,inner sep=10pt,label={[yshift=0.1cm]center:{C-MPNN}}] (C-MPNN) at (12.5,-4.2) {};
    
    \end{tikzpicture}
    \vspace{-2em}
    \caption{Visualization of R-MPNN and $\cmpnn$. The dashed arrow is the target query $q(u,v)$. Arrow colors indicate distinct relation types, while node colors indicate varying hidden states. R-MPNN considers a unary encoder and relies on a binary decoder, while $\cmpnn$ first initializes binary representation based on the target query $q(u,v)$, and then uses a unary decoder.}
    \label{fig:comparison}
\end{figure}
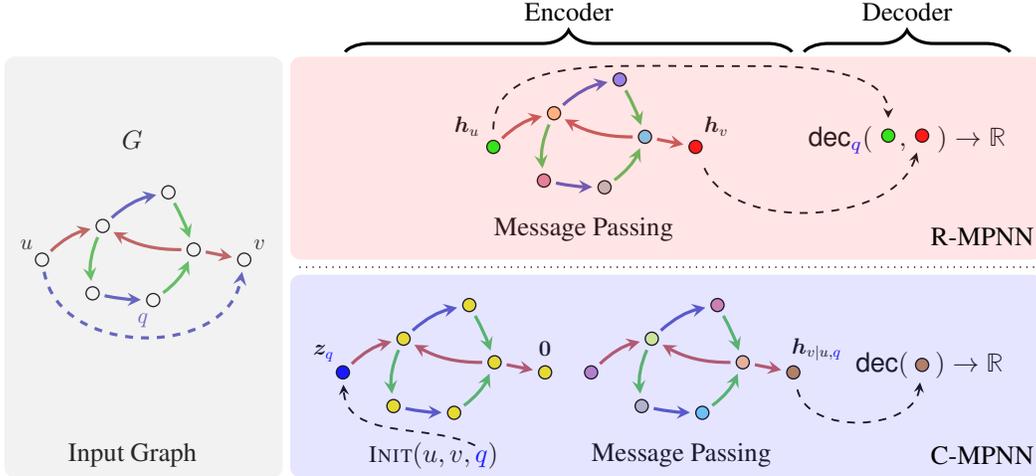

Observe that C-MPNNs compute binary invariants, provided that the initialization $\init$ is a binary invariant. To ensure that the resulting model computes pairwise representations, we require $\init(u,v,q)$ to be a nontrivial function in the sense that it needs to satisfy \emph{target node distinguishability}: for all $q \in R$ and $ v \neq u \in V$, it holds that $\init(u,u,q) \neq \init(u,v,q)$.

This is very closely related to the \emph{Labeling Trick} proposed by~\citet{LabelingTrick}, which is an initialization method aiming to differentiate a set  $\{u,v\}$ of target nodes from the remaining nodes in a graph. However, the \emph{Labeling Trick} only applies when both the source $u$ and target nodes $v$ are labeled. Recent state-of-the-art models such as ID-GNN~\cite{YouGYL21} and NBFNet~\cite{nbfnet} utilize a similar method, but only with the source node $u$ labeled differently in initialization. Our definition captures precisely this, and we offer a theoretical analysis of the capabilities of the aforementioned architectures accordingly.

One alternative is to directly learn pairwise representations following similar ideas to those of higher-order GNNs, but these algorithms are not scalable. Architectures such as NBFNets represent a trade-off between computational complexity and expressivity. The advantage of learning conditional representations $\vh_{v|u,q}$ is to be able to learn such representations in parallel for all $v \in V$, amortizing the computational overhead; see~\citet{nbfnet} for a discussion. We have also carried out a runtime analysis comparison among different classes of models in \cref{app:complexity}.

\subsection{Design space and basic model architectures}
\label{sec:design-cmpnns}
To specify a $\cmpnn$ architecture, we need to specify the functions $\init$, $\aggregate$, $\mes_r$, $f$, and $\readout$. In the following, we consider three initialization functions ${\{\init^1, \init^2, \init^3\}}$, two aggregation functions $\{\operatorname{sum},\operatorname{PNA}\}$, three message functions $\{\mes_r^{1},\mes_r^{2},\mes_r^{3}\}$, two history functions $\{f(t)=t, f(t)=0\}$, and either \emph{sum global readout} or no readout term.

\textbf{Initialization.} 
We consider the following natural variants for initialization:
\begin{align*}
    \init^1(u,v,q) = \mathbb{1}_{u = v} * \mathbf{1},\quad \ \
    \init^2(u,v,q) = \mathbb{1}_{u = v} * \vz_q, \quad \ \
    \init^3(u,v,q) = \mathbb{1}_{u = v} * (\vz_q + \mathbf{\epsilon}_{u}), 
\end{align*}
where $*$ represents element-wise multiplication, the function $\mathbb{1}_{u = v}(v)$ is the indicator function which returns the all-ones vector $\mathbf{1}$ if $u = v$ and the all-zeros vector $\mathbf{0}$ otherwise with corresponding size.

Clearly, both $\init^1$ and $\init^2$ satisfy \emph{target node distinguishability} assumption if we assume $\vz_q$ has no zero entry, where $\init^2$, in addition, allows query-specific initialization to be considered. Suppose we further relax the condition to be \emph{target node distinguishability in expectation}. Then, $\init^3$ can also distinguish between each conditioned node $u$ given the same query vector $\vz_q$ by adding an error vector $\epsilon_{u}$ sampled from $\mathcal{N}(0,1)$ to the conditioned node's initialization.

\textbf{Aggregation.} We consider sum aggregation and Principal Neighborhood Aggregation (PNA) \cite{corso2020pna}.

\textbf{Message.} We consider the following variations of \emph{message} functions:  
\begin{align*}
\mes_r^{1}(\vh_{w|u,q}^{(t)},\vz_q) &=  \vh_{w|u,q}^{(t)} * \mW_{r}^{(t)}\vz_q, \\ 
\mes_r^{2}(\vh_{w|u,q}^{(t)},\vz_q) &= \vh_{w|u,q}^{(t)} * \vb_r^{(t)}, \\
\mes_r^{3}(\vh_{w|u,q}^{(t)},\vz_q) &= \mW_{r}^{(t)}\vh_{w|u,q}^{(t)},
\end{align*}
where $\mW_r^{(t)}$ are relation-specific transformations. $\mes_r^{1}$ computes a query-dependent message, whereas $\mes_r^{2}$ and $\mes_r^{3}$ are analogous to message functions of CompGCN and RGCN, respectively.

\textbf{History.} In addition, we can set $f$, which intuitively is the function that determines the history of node embeddings to be considered. By setting $f(t) = t$, we obtain a standard message-passing algorithm where the update function considers the representation of the node in the previous iteration. We can alternatively set $f(t) = 0$, in which case we obtain (a generalization of) NBFNets. 

\textbf{Readout.} We consider a standard readout which sums the representations and applies a linear transformation on the resulting representations. Alternatively, we consider the special case, where we omit this component (we discuss a dedicated readout operation in our empirical analysis later). 

\textbf{A simple architecture.} Consider a \emph{basic} $\cmpnn$ architecture with global readout, which, for a query relation $q \in R$ and a fixed node $u$, updates the representations as:
\begin{align*}
\vh_{v|u,q}^{(0)} &= \mathbb{1}_{u = v} * \vz_q \\
\vh_{v|u,q}^{(t+1)} &= \sigma 
        \bigg(
          \mW_0^{(t)} \Big( \vh_{v|u,q}^{(t)} + \sum_{r\in R}\sum_{w\in \mathcal{N}_r(v)} \mes_r^1(\vh_{w|u,q}^{(t)},\vz_q) \Big) +
         \mW_1^{(t)} \sum_{w \in V}\vh_{w|u,q}^{(t)}
        \bigg),
\end{align*}
where $\mW_0^{(t)}, \mW_1^{(t)}$ are linear transformations followed by a non-linearity $\sigma$.

\section{Characterizing the expressive power}
\label{sec:power-cmpnns}

\subsection{A relational Weisfeiler-Leman characterization}
To analyze the expressive power of $\cmpnn$s, we introduce the 
\emph{relational asymmetric local $2$-WL}, denoted by $\rdwl_2$. 
In this case, we work with knowledge graphs of the form $G = (V,E,R,c,\eta)$, where $\eta: V\times V\to D$ is a \emph{pairwise coloring}. We say that $\eta$ satisfies \emph{target node distinguishability} if $\eta(u,u)\neq \eta(u,v)$ for all $u\neq v\in V$. The notions of isomorphism and invariants extend to this context in a natural way. 
 For each $t\geq 0$, we update the coloring as: 
\begin{align*}
\rdwl_{2}^{(0)}(u,v) & = \eta(u,v), \\
\rdwl_{2}^{(t+1)}(u,v) &= {\tau}\big(\rdwl_{2}^{(t)}(u,v), \llbrace (\rdwl_{2}^{(t)}(u,w), r)\mid w\in \mathcal{N}_r(v), r \in R)\rrbrace\big),
\end{align*} 
where $\tau$ injectively maps the above pair to a unique color, which has not been used in previous iterations. Observe that $\rdwl_{2}^{(t)}$ defines a binary invariant, for all $t\geq 0$.  

The test is asymmetric: given a pair $(u,v)$, we only look at neighbors of $(u,v)$ obtained by changing the second coordinate of the pair. 
In contrast, usual versions of (local) $k$-WL are symmetric as neighbors may change any coordinate.
Interestingly, this test characterizes the power of C-MPNNs in terms of distinguishing pairs of nodes.

\begin{theorem}
\label{thm:cmpnn-rdwl2}
Let $G=(V,E,R,\vx,\eta)$ be a knowledge graph, where $\vx$ is a feature map and $\eta$ is a pairwise coloring satisfying target node distinguishability. Let $q\in R$ be any query relation. Then: 
\begin{enumerate}[leftmargin=.7cm]
\item For all C-MPNNs with $T$ layers and initializations $\init$ with 
$\init\equiv \eta$, and $0\leq t\leq T$, we have $\rdwl_2^{(t)}\preceq \vh_q^{(t)}$.
\item For all $T\geq 0$ and history function $f$, there is a C-MPNN without global readout with $T$ layers and history function $f$ such that for all $0\leq t\leq T$, we have $\rdwl_2^{(t)}\equiv \vh_q^{(t)}$.
\end{enumerate}
\end{theorem}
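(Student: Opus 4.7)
The argument parallels Theorem~\ref{thm:rmpnn-rwl1} but lifts every construction from single nodes to ordered pairs, so I would organise the proof as two inductions on $t$, one for each item.

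For item (1), proceed by induction on $t$. The base case $t=0$ is immediate: $\vh_q^{(0)}(u,v)=\delta(u,v,q)$, $\rdwl_2^{(0)}(u,v)=\eta(u,v)$, and the hypothesis $\delta\equiv\eta$ gives $\rdwl_2^{(0)}\preceq \vh_q^{(0)}$. For the inductive step, suppose $\rdwl_2^{(s)}\preceq \vh_q^{(s)}$ for all $s\leq t$ and assume $\rdwl_2^{(t+1)}(u,v)=\rdwl_2^{(t+1)}(u',v')$. Injectivity of $\tau$ yields both $\rdwl_2^{(t)}(u,v)=\rdwl_2^{(t)}(u',v')$ and the multiset identity
\[
\llbrace(\rdwl_2^{(t)}(u,w),r)\mid w\in\mathcal{N}_r(v), r\in R\rrbrace=\llbrace(\rdwl_2^{(t)}(u',w'),r)\mid w'\in\mathcal{N}_r(v'), r\in R\rrbrace.
\]
The standard monotonicity of Weisfeiler-Leman refinements then gives $\rdwl_2^{(s)}(u,v)=\rdwl_2^{(s)}(u',v')$ for every $s\leq t$, in particular for $s=f(t)$. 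Applying the inductive hypothesis at level $f(t)$ produces $\vh_{v|u,q}^{(f(t))}=\vh_{v'|u',q}^{(f(t))}$, and applying it at level $t$ to pairs matched by the multiset identity yields a bijection under which the corresponding values $\vh_{w|u,q}^{(t)}$ and $\vh_{w'|u',q}^{(t)}$ coincide. Pushing these equalities through the fixed C-MPNN components $\theta_r$, $\psi$, and $\phi$ delivers $\vh_q^{(t+1)}(u,v)=\vh_q^{(t+1)}(u',v')$, as required.

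For item (2), fix $T$, $f$, and $q\in R$. The plan is to exhibit a C-MPNN whose $t$-th layer representation $\vh_{v|u,q}^{(t)}$ injectively encodes the entire WL history $(\rdwl_2^{(s)}(u,v))_{s=0}^{t}$. Since the colour alphabet at any finite iteration is finite, one can fix injective encodings into $\mathbb{R}^{d_t}$ and design each component to realise a WL step: take $\delta$ to be a faithful encoding of $\eta$ (available because $\delta\equiv\eta$ is the only requirement and $\eta$ already satisfies target node distinguishability); take $\theta_r$ to injectively couple $(\vh_{w|u,q}^{(t)},r)$ via the query-dependent message slot; take $\psi$ to be sum aggregation realised by an injective multiset embedding (standard GIN-style construction on countable sets); and design $\phi(\vh_{v|u,q}^{(f(t))},\psi(\cdot))$ so that it (i) reads the history of $v$ up to time $f(t)$ from its first argument, (ii) recovers, for each $f(t)\leq s\leq t$, the multiset $\llbrace(\rdwl_2^{(s)}(u,w),r)\rrbrace$ from the neighbour encodings, which store their full histories up to $t$, and (iii) iteratively applies the WL update $\tau$ to lift $v$'s history from $f(t)$ through to $t+1$, appending the new colour $\rdwl_2^{(t+1)}(u,v)$. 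Once these maps are specified on the finite colour alphabet, they can be realised as differentiable $\phi,\psi,\theta_r$ by standard universal approximation. Equivalence $\rdwl_2^{(t)}\equiv\vh_q^{(t)}$ at each $0\leq t\leq T$ then follows by combining item~(1) with the fact that $\rdwl_2^{(t)}(u,v)$ can be read off the encoded history injectively.

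\textbf{Main obstacle.} The delicate point is accommodating an arbitrary history function $f$ with $f(t)<t$: the update rule only sees $\vh_{v|u,q}^{(f(t))}$, yet must produce a representation equivalent to $\rdwl_2^{(t+1)}(u,v)$, whose WL definition explicitly depends on $\rdwl_2^{(t)}(u,v)$. The full-history encoding device resolves this only if the neighbour messages themselves carry all past colours, so that the WL update can be re-simulated locally from time $f(t)$ up to $t+1$ using only the data available at layer $t$. Verifying that this in-update recursion reproduces the WL colouring at every intermediate step, uniformly over all admissible $f$, is the main bookkeeping and the place where care is required to keep $\phi$ well-defined and differentiable while remaining independent of the graph instance.
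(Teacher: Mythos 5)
Your proposal is correct in substance but follows a genuinely different route from the paper. The paper proves Theorem~\ref{thm:cmpnn-rdwl2} by a reduction to the unary case: it builds the auxiliary knowledge graph $G^2$ on node set $V\times V$ with edges $r((u,w),(u,v))$ for $r(w,v)\in E$ and coloring $c_\eta((u,v))=\eta(u,v)$, shows that C-MPNNs on $G$ correspond exactly to R-MPNNs on $G^2$ (Proposition~\ref{prop:reduction-cmpnn-rmpnn}) and that $\rdwl_2$ on $G$ coincides with $\rwl_1$ on $G^2$ (Proposition~\ref{prop:reduction-rdwl2-rwl1}), and then simply invokes Theorem~\ref{thm:rmpnn-rwl1}; in particular, the awkwardness of an arbitrary history function is absorbed at the unary level by introducing the variant test $\rwl_{1,f}$ and proving it equivalent to $\rwl_1$ (Proposition~\ref{prop:rwl-f-same}), after which an explicit sign/ReLU construction with a single parameter matrix simulates $\rwl_{1,f}$ exactly. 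You instead argue directly at the pair level: your item (1) induction (injectivity of $\tau$, monotonicity of the standard test to reach level $f(t)$, then the inductive hypothesis at levels $f(t)$ and $t$) is sound and effectively folds the paper's Propositions~\ref{prop:rwl-refine}--\ref{prop:rwl-f-same} into one induction; your item (2) replaces the paper's exact simulation by a full-history encoding in which each feature injectively stores $(\rdwl_2^{(s)}(u,v))_{s\le t}$ and the update re-simulates the WL recursion from step $f(t)$ to $t{+}1$ using the neighbours' stored histories, which does close correctly since each level-$s$ multiset is recoverable from an injective multiset embedding of the neighbour histories. What the paper's route buys is modularity and an explicit, fully constructive simulation (concrete weights, sign or ReLU activations, and as a byproduct the fact that the basic model with $\theta_r^1,\theta_r^2,\theta_r^3$ already suffices, cf.\ Remark~\ref{app:remark-1-4}); what your route buys is avoiding the auxiliary graph at the cost of heavier bookkeeping, and your item (2) is still a plan: the appeal to ``universal approximation'' should be replaced by exact realization of the finitely many required input--output pairs by differentiable maps (as the paper does explicitly), and injectivity of the sum-based multiset encoding needs the standard countable-domain argument spelled out. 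Also, your worry about the model ``remaining independent of the graph instance'' is unnecessary: the theorem fixes $G$, so the constructed C-MPNN may depend on it, exactly as the paper's construction does.
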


The idea of the proof is as follows: we first show a correspondent characterization of the expressive power of R-MPNNs in terms of a relational variant of the WL test (\Cref{thm:rmpnn-rwl1}). This result generalizes results from \citet{wl-relational}. We then apply a reduction from $\cmpnn$s to R-MPNNs, that is, we carefully build an auxiliary knowledge graph (encoding the pairs of nodes of the original knowledge graph) to transfer our R-MPNN characterization to our sought $\cmpnn$ characterization.

Note that the lower bound (item (2)) holds even for the \emph{basic} model of C-MPNNs (without global readout) and the three proposed message functions from Section~\ref{sec:design-cmpnns}. The expressive power of C-MPNNs is independent of the history function as in any case it is matched by $\rawl_2$.  
This suggests that the difference between traditional message passing models using functions $f(t)=t$ and path-based models (such as NBFNets~\cite{nbfnet}) using $f(t)=0$ is not relevant from a theoretical point of view.

\subsection{Logical characterization}
\label{sec:log}

We now turn to the problem of which {\em binary classifiers} 
can be expressed as C-MPNNs. That is, we look at C-MPNNs that classify each pair of nodes in a knowledge graph as $\true$ or $\false$. 
Following \citet{BarceloKM0RS20}, we study {\em logical} binary classifiers, i.e., those that can be defined in the formalism of first-order logic (FO). Briefly, a first-order formula $\phi(x,y)$ with two free variables $x,y$ defines a logical binary classifier that assigns value $\true$ to the pair $(u,v)$ of nodes in knowledge graph $G$ whenever $G \models \phi(u,v)$, i.e., $\phi$ holds in $G$ when $x$ is interpreted as $u$ and $y$ as $v$. A logical classifier $\phi(x,y)$ is {\em captured} by a C-MPNN $\cal A$ if over every knowledge graph $G$ the pairs $(u,v)$ of nodes that are classified as $\true$ by $\phi$ and $\cal A$ are the same.  

A natural problem then is to understand what are the logical classifiers captured by C-MPNNs.  Fix a set of relation types $R$ and a set of pair colors $\mathcal{C}$. We consider knowledge graphs of the form $G=(V,E,R,\eta)$ where $\eta$ is a  mapping assigning colors from $\mathcal{C}$ to pairs of nodes from $V$. In this context, FO formulas can refer to the different relation types in $R$ and the different pair colors in $\mathcal{C}$. Our first characterization is established in terms of a simple fragment of FO, which we call ${\sf rFO}_{\rm cnt}^3$, and is inductively defined as follows: First, $a(x,y)$ for $a\in\mathcal{C}$, is in ${\sf rFO}_{\rm cnt}^3$. Second, if $\varphi(x,y)$ and $\psi(x,y)$ are in ${\sf rFO}_{\rm cnt}^3$, $N\geq 1$ is a positive integer, and $r\in R$, then the formulas 
\begin{equation*}
\neg\varphi(x,y),\quad \varphi(x,y)\land \psi(x,y),\quad \exists^{\geq N} z\, (\varphi(x,z)\land  r(z,y))
\end{equation*}
are also in ${\sf rFO}_{\rm cnt}^3$. Intuitively, $a(u,v)$ holds in 
$G=(V,E,R,\eta)$ if $\eta(u,v) = a$, and $\exists^{\geq N} z\, (\varphi(u,z)\land  r(z,v))$ holds in $G$ if $v$ has at least $N$ incoming edges labeled $r\in R$ from nodes $w$ for which $\varphi(u,w)$ holds in $G$. 
We use the acronym ${\sf rFO}_{\rm cnt}^3$ as 
this logic corresponds to a restriction of FO with three variables and counting. 
We can show the following result which is the first of its kind in the context of knowledge graphs:

\begin{theorem}
\label{thm:cmpnn-logic}
A logical binary classifier is captured by C-MPNNs without global readout if and only if it can be expressed in ${\sf rFO}_{\rm cnt}^3$.
\end{theorem}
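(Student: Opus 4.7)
The plan is to prove both implications separately, following the general strategy of Barcel\'o et al.~\cite{BarceloKM0RS20} for unary classifiers, now adapted to our binary conditional setting and leveraging Theorem~\ref{thm:cmpnn-rdwl2}.

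For the ``$\Leftarrow$'' direction (logic implies captured by C-MPNN), I would proceed by structural induction on $\varphi(x,y)\in{\sf rGFO}_{\rm cnt}^3$, constructing a C-MPNN that captures $\varphi$. The atomic case $a(x,y)$ is handled by choosing an initialization $\delta(u,v,q)$ that encodes both the pair color $\eta(u,v)$ and the source indicator $\mathbb{1}_{u=v}$ (so target node distinguishability is preserved), after which a depth-zero network reads off whether $\eta(u,v)=a$ via a thresholding MLP. For $\neg\varphi$ and $\varphi\land\psi$, I would run the two inductively constructed C-MPNNs in parallel (concatenating their feature spaces) and combine outputs through a final MLP. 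The interesting case is the guarded counting quantifier $\exists^{\geq N}z\,(\varphi(x,z)\land r(z,y))$: given the inductive C-MPNN whose representation $\vh_{w|u,q}^{(t)}$ encodes the truth of $\varphi(u,w)$, I would append one additional message-passing layer that aggregates $\theta_r$-messages over $w\in \mathcal{N}_r(v)=\{w:r(w,v)\in E\}$, which matches precisely the semantics of the guard $r(z,y)$. Sum aggregation produces the desired count, and a thresholding MLP implements the ``$\geq N$'' test. A final MLP then yields the Boolean classification.

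For the ``$\Rightarrow$'' direction (captured by C-MPNN implies logic), suppose $\varphi(x,y)$ is FO-definable and captured by some C-MPNN with $T$ layers and initialization $\delta\equiv\eta$. By Theorem~\ref{thm:cmpnn-rdwl2}, pairs of nodes sharing the same $\rdwl_2^{(T)}$ color receive identical C-MPNN outputs, and hence the same truth value under $\varphi$; thus $\varphi$ is invariant under $\rdwl_2^{(T)}$. I would first show by induction on $t$ that every $\rdwl_2^{(t)}$-color class is definable by an ${\sf rGFO}_{\rm cnt}^3$ formula $\chi_c^{(t)}(x,y)$: the base case uses atoms $a(x,y)$ for $a\in\mathcal{C}$, and the inductive step encodes the relevant multiset of neighbor colors by taking a conjunction of guarded counting formulas $\exists^{\geq N}z\,(\chi_{c'}^{(t)}(x,z)\land r(z,y))$ together with negations of ``$\geq N{+}1$'' variants, for each $(c',r)$ appearing in the multiset. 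Combining FO-definability of $\varphi$ with $\rdwl_2^{(T)}$-invariance, a Hanf-locality/type-counting argument as in~\cite{BarceloKM0RS20} produces a finite Boolean combination of the $\chi_c^{(T)}$ that defines $\varphi$ uniformly across all knowledge graphs, placing $\varphi$ in ${\sf rGFO}_{\rm cnt}^3$.

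The main obstacle I anticipate is the uniformity step in the ``$\Rightarrow$'' direction: producing a \emph{single} ${\sf rGFO}_{\rm cnt}^3$ formula valid on all knowledge graphs, rather than a different formula per graph. This requires bounding uniformly the set of $\rdwl_2^{(T)}$-types that a fixed FO sentence can distinguish, typically via Hanf locality or an Ehrenfeucht--Fra\"\i ss\'e style argument. A second delicate point is confirming that the asymmetric refinement of $\rdwl_2$ and the one-sided guard $r(z,y)$ interact correctly: since the test only updates the second coordinate of $(u,v)$ and the logic only quantifies over a node $z$ adjacent to the second free variable $y$, no symmetric or genuinely three-variable information should leak in, but this alignment must be verified carefully so that the fragment ${\sf rGFO}_{\rm cnt}^3$ is both sufficient for the $\Leftarrow$ direction and not exceeded by the $\Rightarrow$ direction.
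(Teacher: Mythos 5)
Your ``$\Leftarrow$'' direction is correct and is essentially the construction the paper uses: the paper first translates $\varphi$ into graded modal logic over the auxiliary graph $G^2$ (Proposition~\ref{prop:translations-logic}), builds an R-MPNN capturing it (Theorem~\ref{app:rmpnn-logic}), and pulls the result back to a C-MPNN via Proposition~\ref{prop:reduction-cmpnn-rmpnn}; your direct layer-by-layer construction on the C-MPNN side is the same argument carried out without that detour, and the guarded counting case is handled the same way (one relational layer with sum aggregation plus thresholding).

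The genuine gap is in the ``$\Rightarrow$'' direction. The step you describe as ``a Hanf-locality/type-counting argument as in~\cite{BarceloKM0RS20}'' is not a routine step, and it is not what that paper (or this one) actually does: the uniformity you need is exactly the content of a van~Benthem--Rosen--Otto style characterization theorem. Concretely, for fixed $T$ there are infinitely many $\rdwl_2^{(T)}$-types across knowledge graphs (counting thresholds are unbounded), so expressing $\varphi$ as a \emph{finite} Boolean combination of the type formulas $\chi^{(T)}_c$ requires a uniform bound on the thresholds a fixed FO formula can be sensitive to. Over finite structures compactness is unavailable, and Hanf locality concerns local isomorphism types, which are strictly finer than counting-bisimulation types; bridging the two is precisely Otto's theorem that graded modal logic is the counting-bisimulation-invariant fragment of FO, valid also in the finite, which \cite{BarceloKM0RS20} cites rather than re-derives. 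The paper does the same: it reduces the binary case to the unary one via $G^2$ (Proposition~\ref{prop:reduction-cmpnn-rmpnn} together with Proposition~\ref{prop:translations-logic}), shows R-MPNN outputs are invariant under unravelling equivalence (Propositions~\ref{prop:unr-wl} and~\ref{prop:unr-rmpnn}), and then invokes Theorem~\ref{thm:otto}. Note also that by working directly with the asymmetric test $\rdwl_2$ you would need a binary, asymmetric analogue of Otto's theorem, which is not available off the shelf---the $G^2$ reduction is what makes the known unary result applicable---and that the invariance you extract from Theorem~\ref{thm:cmpnn-rdwl2} is stated per graph, whereas the argument needs cross-graph invariance (handled in the paper through unravellings). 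You correctly flag the uniformity step as the obstacle, but as written the proposal leaves the essential theorem neither proved nor invoked.
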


The idea of the proof is to show a logical characterization for R-MPNNs (without global readout) in terms of a variant of graded modal logic called ${\sf rFO}_{\rm cnt}^2$ (\Cref{app:rmpnn-logic}), which generalizes results from \citet{BarceloKM0RS20} to the case of multiple relations. Then, as in the case of \Cref{thm:cmpnn-rdwl2}, we apply a reduction from $\cmpnn$s to R-MPNNs (without global readout) using an auxiliary knowledge graph and a useful translation between the logics ${\sf rFO}_{\rm cnt}^2$ and ${\sf rFO}_{\rm cnt}^3$.

Interestingly, arbitrary C-MPNNs (with global readout) are strictly more powerful than C-MPNNs without global readout in capturing logical binary classifiers: they can at least capture a strict extension of ${\sf rFO}_{\rm cnt}^3$, denoted by ${\sf erFO}_{\rm cnt}^3$. 

\begin{theorem}
\label{thm:cmpnn-readout-logic}
Each logical binary classifier expressible in ${\sf erFO}_{\rm cnt}^3$ can be captured by a C-MPNN.
\end{theorem}

 Intuitively speaking, our logic ${\sf rFO}_{\rm cnt}^3$ from \Cref{thm:cmpnn-logic} only allows us to navigate the graph by moving to neighbors of the “current node”. The logic ${\sf erFO}_{\rm cnt}^3$ is a simple extension that allows us to move also to non-neighbors (\Cref{prop:extended-rFO-power} shows that this logic actually gives us more power). Adapting the translation from logic to GNNs (from \Cref{app:rmpnn-logic}), we can easily show that C-MPNNs with global readout can capture this extended logic.

The precise definition of ${\sf erFO}_{\rm cnt}^3$ together with the proofs of Theorems~\ref{thm:cmpnn-logic} and~\ref{thm:cmpnn-readout-logic} can be found in Appendices~\ref{app:sec-logics} and~\ref{app:sec-logics-readout}, respectively. Let us stress that ${\sf rFO}_{\rm cnt}^3$ and ${\sf erFO}_{\rm cnt}^3$ correspond to some binary variants of \emph{graded modal logic}~\cite{DBLP:journals/sLogica/Rijke00, extended-modal}. As in \citet{BarceloKM0RS20}, these connections are exploited in our proofs.

\subsection{Locating $\rawl_2$ in the relational WL landscape}
\label{sec:beyond-rawl}

Let us note that $\rawl_2$ strictly contains $\rwl_1$, since intuitively, we can degrade the $\rawl_2$ test to compute unary invariants such that it coincides with $\rwl_1$. As a result, this allows us to conclude that R-MPNNs are less powerful than C-MPNNs.
The $\rawl_2$ test itself is upper bounded by a known relational variant of $2$-WL, namely, the \emph{relational (symmetric) local 2-WL} test, denoted by $\rwl_2$. Given a knowledge graph $G=(V,E,R,c,\eta)$, this test assigns pairwise colors via the following update rule: 
\begin{align*}
\rwl_{2}^{(t+1)}(u,v) =\, & {\tau}\big(\rwl_{2}^{(t)}(u,v), \llbrace (\rwl_{2}^{(t)}(w,v), r)\mid w\in \mathcal{N}_r(u), r \in R)\rrbrace, \\ & \llbrace (\rwl_{2}^{(t)}(u,w), r)\mid w\in \mathcal{N}_r(v), r \in R)\rrbrace\big)
\end{align*} 
This test and a corresponding neural architecture, for arbitrary order $k\geq 2$, have been recently studied in~\citet{wl-relational} under the name of \emph{multi-relational local $k$-WL}.

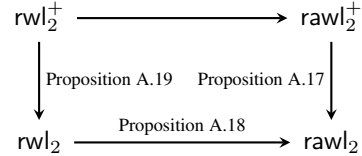
\begin{wrapfigure}{r}{5cm}
\centering
\begin{tikzcd}[column sep=8em,row sep=3em]
	{{\rwl_2^+}} & {{\rawl_2^+}} \\
	{\rwl_2} & {\rawl_2} \\ 
	\arrow[-stealth, thick,"{\text{\Cref{prop:dlwl-2-symm-app}}}", from=1-1, to=2-1]
	\arrow[-stealth, thick,from=1-1, to=1-2]
	\arrow[-stealth, thick,"{\text{\Cref{prop:rawl-vs-rawl+}}}"', from=1-2, to=2-2]
	\arrow[-stealth, thick,"{\text{\Cref{prop:lwl2-vs-dlwl2-app}}}", from=2-1, to=2-2]
\end{tikzcd}
\vspace{-2.5em}
\caption{Expressiveness hierarchy: $A \rightarrow B$ iff $A \preceq B$.  By~\Cref{prop:dlwlsym-lwlasym-app}, $\rwl_2$ and $\rdwl_2^+$ are incomparable. The case of $\rwl_2^+ \preceq \rdwl_2^+$ is analogous to ~\Cref{prop:lwl2-vs-dlwl2-app}. }
\label{fig:hierachy-graph}
\end{wrapfigure} 
This helps us to locate the test $\rawl_2$ within the broader WL hierarchy, but it does not align perfectly with the practical setup: one common practice in link prediction is to extend knowledge graphs with inverse relations~\cite{nbfnet,NeuralLP,DRUM,vashishth2020compositionbased} which empirically yields stronger results and hence is used in most practical setups. However, the effect of this choice has never been quantified formally. We formally explain the benefits of this design choice, showing that it leads to provably more powerful models. The idea is to consider tests (and architectures) which are augmented with the inverse edges: we write ${\rdwl_2^+}$ and ${\rwl_2^+}$ to denote the corresponding augmented tests and prove that it results in more expressive tests (and hence architectures) in each case, respectively. The precise tests and propositions, along with their proofs can be found in Appendix~\ref{sec:proof-in-appendix}. We present in \Cref{fig:hierachy-graph} the resulting expressiveness hierarchy for all these tests.

\section{Experimental evaluation}
\label{sec: inductive-experiment}
We experiment on knowledge graph benchmarks and aim to answer the following questions: 
\textbf{Q1.} What is the impact of the history function on the model performance? In particular, do models with $f(t) = t$ perform comparably to those with $f(t) = 0$ as our theory suggests? 
\textbf{Q2.} How do the specific choices for aggregation and message functions affect the performance? 
\textbf{Q3.} What is the impact of the initialization function on the performance? What happens when the target identifiability property does not hold?
\textbf{Q4.} Do $\cmpnn$s outperform R-MPNNs empirically?
\textbf{Q5.} Does the use of a global readout, or a relational version affect the performance?

\subsection{Experimental setup}
\textbf{Datasets.} We use the datasets WN18RR~\cite{wordnet} and FB15k-237~\cite{freebase}, for inductive relation prediction tasks, following a standardized train-test split given in four versions~\cite{GraIL}. We augment each fact $r(u,v)$ with an inverse fact $r^{-1}(v,u)$. 
There are no node features for either of the datasets, and the initialization is given by the respective initialization function $\init$. This allows all the proposed GNN models to be applied in the inductive setup and to better align with the corresponding relational Weisfeiler-Leman algorithms. The statistics of the datasets are reported in Table~\ref{inductive-dataset-statistics} of Appendix~\ref{app:hyper-and-more}. The code for experiments is reported in \url{https://github.com/HxyScotthuang/CMPNN}.

\textbf{Implementation.} All models use 6 layers, each with 32 hidden dimensions. The decoder function parameterizes the probability of a fact $q(u,v)$ as $p(v \mid u,q) = \sigma(f(\vh^{(T)}_{v \mid u, q}))$, where $\sigma$ is the sigmoid function, and $f$ is a 2-layer MLP with 64 hidden dimensions.
We adopted layer-normalization~\cite{Layer_normalization} and short-cut connection after each aggregation and before applying ReLU. For the experiments concerning the message function $\mes_r^3$, we follow the basis decomposition for the FB15k-237 dataset with 30 basis functions for sum aggregation, and 15 for PNA aggregation. We ran the experiments for 20 epochs on 1 Tesla T4 GPU. 
with mild modifications to accommodate all architectures studied in this paper. We discard the edges that directly connect query node pairs to prevent overfitting. The best checkpoint for each model instance is selected based on its performance on the validation set. All hyperparameter details are reported in Table~\ref{inductive-hyper-parameter-statistics} of Appendix~\ref{app:hyper-and-more}. 

\textbf{Evaluation.} We consider \emph{filtered ranking protocol}~\cite{transE}: for each test fact $r(u,v)$, we construct 50 negative samples $r(u',v')$, randomly replacing either the head entity or the tail entity, and we report Hits@10, the rate of correctly predicted entities appearing in the top 10 entries for each instance list prediction. We report averaged results of \emph{five} independent runs for all experiments.

\begin{table*}[t!] 
\centering
\caption{Inductive relation prediction with C-MPNNs using $\init^2$ initialization and \emph{no} readout. The best results for each category are shown in \textbf{bold} and the second best results are \underline{underlined}.}
\label{inductive-table-cmpnn}
\begin{tabular}{ccccccccccc}
\toprule  \multicolumn{3}{c}{ \textbf{Model architectures} } & \multicolumn{4}{c}{ \textbf{WN18RR} } & \multicolumn{4}{c}{\textbf{FB15k-237}} \\
 $\aggregate$ & $\mes_r$ & $f(t)$ & \textbf{v1} & \textbf{v2} & \textbf{v3} & \textbf{v4} & \textbf{v1} & \textbf{v2} & \textbf{v3} & \textbf{v4} \\
\midrule 
sum  & $\mes_r^{1}$ & $0$ & $0.934$ & $0.896$ & $\underline{0.894}$ & $\underline{0.881}$ & $0.784$ & $0.900$ & $\underline{0.940}$ & $0.923$ \\
sum  & $\mes_r^{1}$  & $t$ & $0.932$ & $0.896$ & $\mathbf{0.900}$ & $\underline{0.881}$ & $0.794$ & $0.906$ & $\mathbf{0.947}$ & $0.933$ \\
\midrule
sum  & $\mes_r^{2}$ & $0$ & $\underline{0.939}$ & $\mathbf{0.906}$ & $0.881$ & $\underline{0.881}$ & $0.734$ & $0.899$ & $0.911$ & $\underline{0.941}$ \\
sum  & $\mes_r^{2}$ & $t$ & $0.937$ & $\mathbf{0.906}$ & $0.865$ & $\mathbf{0.884}$ & $0.728$ & $0.883$ & $0.929$ & $0.931$ \\
\midrule
sum  & $\mes_r^{3}$ & $0$ & $\mathbf{0.943}$ & $0.898$ & $0.888$ & $0.877$  & $\mathbf{0.850}$ & $\underline{0.934}$ & $0.919$ & $\underline{0.941}$ \\
sum  & $\mes_r^{3}$ & $t$ & $0.934$ & $0.896$ & $0.892$ & $0.880$  & $\underline{0.844}$ & $\mathbf{0.943}$ & $0.926$ & $\mathbf{0.950}$ \\
\midrule
\midrule
PNA  & $\mes_r^{1}$ & $0$ & $0.943$ & $0.897$ & $0.898$ & $0.886$ & $\underline{0.801}$ & $\underline{0.945}$ & $\underline{0.934}$ & $\mathbf{0.960}$ \\
PNA  & $\mes_r^{1}$ & $t$ & $0.941$ & $0.895$ & $\mathbf{0.904}$ & $0.886$ & $\mathbf{0.804}$ & $\mathbf{0.949}$ & $\mathbf{0.945}$ & $\underline{0.954}$ \\
\midrule
PNA  & $\mes_r^{2}$ & $0$ & $\underline{0.946}$ & $\underline{0.900}$ & $0.896$ & $\underline{0.887}$ & $0.715$ & $0.896$ & $0.887$ & $0.886$ \\
PNA  & $\mes_r^{2}$ & $t$ & $\mathbf{0.947}$ & $\mathbf{0.902}$ & $\underline{0.901}$ & $\mathbf{0.888}$  & $0.709$  & $0.899$ & $0.875$ & $0.894$ \\
\midrule
PNA  & $\mes_r^{3}$ & $0$ & $\mathbf{0.947}$ & $0.898$ & $0.899$ & $0.884$ & $0.788$ & $0.908$ & $0.906$ & $0.927$ \\
PNA  & $\mes_r^{3}$ & $t$ & $0.944$ & $0.897$ & $0.894$ & $0.882$ & $0.795$ & $0.916$ & $0.908$ & $0.926$ \\
\bottomrule
\end{tabular}
\end{table*}

\subsection{Results for inductive link prediction with C-MPNN architectures}

We report inductive link prediction results for different $\cmpnn$ architectures in \Cref{inductive-table-cmpnn}, all initialized with $\init^2$. Each row of \Cref{inductive-table-cmpnn} corresponds to a specific architecture, which allows us to compare the model components. Note that while NBFNets~\cite{nbfnet} use different message functions for different datasets, we separately report for each model architecture to specifically pinpoint the impact of each model component.

\textbf{History functions ({Q1}).} First, we note that there is no significant difference between the models with different history functions. Specifically, for any choice of aggregate and message functions, the model which sets $f(t) = t$ performs comparably to the one which sets $f(t) = 0$. This supports our theoretical findings, which state that path-based message passing and traditional message passing have the same expressive power. This may appear as a subtle point, but it is important for informing future work: the strength of these architectures is fundamentally due to their ability to compute more expressive \emph{binary invariants}, which holds regardless of the choice of the history function. 

\textbf{Message functions ({Q2}).} We highlight that there is no significant difference between different message functions on WN18RR, which is unsurprising: WN18RR splits contain at most 11 relation types, which undermines the impact of the differences in message functions. In contrast, the results on FB15k-237 are informative in this respect: $\mes_r^2$ clearly leads to worse performance than all the other choices, which can be explained by the fact that $\mes_r^2$ utilizes fewer relation-specific parameters. Importantly, $\mes_r^3$ appears strong and robust across models. This is essentially the message function of RGCN and uses basis decomposition to regularize the parameter matrices. Architectures using $\mes_r^3$ with fewer parameters (see the appendix) can match or substantially exceed the performance of the models using $\mes_r^1$, where the latter is the primary message function used in NBFNets.  This may appear counter-intuitive since $\mes_r^3$ does not have a learnable query vector $\vz_q$, but this vector is nonetheless part of the model via the initialization function $\init^2$. 

\textbf{Aggregation functions ({Q2}).} We experimented with aggregation functions $\operatorname{sum}$ and $\operatorname{PNA}$. We do not observe significant trends on WN18RR, but $\operatorname{PNA}$ tends to result in slightly better-performing architectures. On FB15k-237, there seems to be an intricate interplay between aggregation and message functions. For $\mes_r^1$, $\operatorname{PNA}$ appears to be a better choice than $\operatorname{sum}$. On the other hand, for both $\mes_r^2$ and $\mes_r^3$, sum aggregation is substantially better. This suggests that a sophisticated aggregation, such as $\operatorname{PNA}$, may not always be necessary since it can be matched (and even outperformed) with a sum aggregation. In fact, the model with $\operatorname{sum}$ aggregation and $\mes_r^3$ is very closely related to RGCN and appears to be one of the best-performing models across the board. This supports our theory since, intuitively, this model can be seen as an adaptation of RGCN to compute binary invariants while keeping the choices for model components the same as RGCN. 

We conducted further experiments to understand the differences in different initializations at \Cref{sec:initialization}, confirming that the initializations that do not satisfy the target node distinguishability criteria perform significantly worse (\textbf{Q3}). We could not include R-MPNNs (e.g., RGCN and CompGCN) in these experiments, since these architectures are not applicable to the fully inductive setup. However, we conducted experiments on transductive link prediction in \Cref{app:transductive}, and carried out additional experiments on biomedical knowledge in \Cref{app: biomed}. In both experiments, we observed that C-MPNNs outperform R-MPNNs by a significant margin (\textbf{Q4}).

\subsection{Empirically evaluating the effect of readout}

To understand the effect of readout~(\textbf{Q5}), we experiment with C-MPNNs using $\init^2$, \text{PNA}, $\mes_r^{1}$, and $f(t) = t$. We consider three architectures: (i) no readout component, (ii) global readout component (as defined earlier), and finally (iii) relation-specific readout. The idea behind the relation-specific readout is natural: instead of summing over all node representations, we sum over the features of nodes $w$ which have an incoming $q$-edge  ($w \in V_q^+$) or an outgoing $q$-edge ($w \in V_q^-$), where $q$ is the query relation, as:
\[
\readout(\llbrace \vh_{w|u,q}^{(t)} \mid w \in V_q^+ \rrbrace, \llbrace \vh_{w|u,q}^{(t)} \mid w \in V_q^- \rrbrace) =\mathbf{W}_1^{(t)}\sum_{w\in V_q^+}\vh_{w|u,q}^{(t)} + \mathbf{W}_2^{(t)}\sum_{w\in V_q^-}\vh_{w|u,q}^{(t)}. 
\]

\begin{table}[t!] 
\centering
\caption{Results for inductive relation prediction on WN18RR and FB15k-237 using \emph{no readout}, \emph{global readout} and \emph{relation-specific readout, respectively}. We use $\cmpnn$ architecture with $\aggregate = \text{PNA}$, $\mes = \mes_r^1$, and $\init = \init^2$.}
\label{table:readout}
\begin{tabular}{ccccccccc}
\toprule  
\multicolumn{1}{c}{ \textbf{Model architectures} } & \multicolumn{4}{c}{ \textbf{WN18RR} } & \multicolumn{4}{c}{\textbf{FB15k-237}} \\
\readout & \textbf{v1} & \textbf{v2} & \textbf{v3} & \textbf{v4} & \textbf{v1} & \textbf{v2} & \textbf{v3} & \textbf{v4} \\
\midrule
no readout &  $0.941$ & $\mathbf{0.895}$ & $\mathbf{0.904}$ & $\mathbf{0.886}$ & ${0.804}$ & ${0.949}$ & ${0.945}$ & ${0.954}$ \\
global readout &  $\mathbf{0.946}$ & $0.890$ & $0.881$ & $0.872$ & $0.799$ & $0.897$ & $0.942$ & $0.864$  \\
relation-specific &  $0.932$ & $0.885$ & $0.882$ & $0.874$ & $\mathbf{0.835}$ & $\mathbf{0.959}$ & $\mathbf{0.953}$ & $\mathbf{0.960}$\\
\bottomrule
\end{tabular}
\vspace{-0.8em}
\end{table}

We report all results in \Cref{table:readout}. First, we observe that the addition of global readout does not lead to significant differences on WN18RR, but it degrades the model performance on FB15k-237 in two splits (v2 and v4). We attribute this to having more relation types in FB15k-237, so a readout over all nodes does not necessarily inform us about a relation-specific prediction. This is the motivation behind the relation-specific readout: we observe that it leads to significant improvements compared to using no readout (or a global readout) over all splits, leading to state-of-the art results. In contrast, relation-specific readout does not yield improvements on the relationally sparse WN18RR. These findings suggest that the global information can be very useful in practice, assuming the graph is rich in relation types. We further conduct a synthetic experiment to validate the expressive power of using global readout: architectures without global readout cannot exceed random performance, while those using a global readout can solve the synthetic link prediction task (see Appendix~\ref{app:readout-synt}). 

\section{Outlook and limitations}
We studied a broad general class of GNNs designed for link prediction over knowledge graphs with a focus on formally characterizing their expressive power. Our study shows that recent state-of-the-art models are at a `sweet spot': while they compute binary invariants, they do so through some local variations of higher-dimensional WL tests. This is precisely characterized within our framework. To capture global properties, we propose readout components and show that their relational variant is useful on datasets with many relation types. This study paints a more complete picture for our overall understanding of existing models and informs future work on possible directions. 

In terms of limitations, it is important to note that the presented approach has certain restrictions. For instance, it is limited to binary tasks, such as link prediction. Thus, this approach does not directly extend to higher arity tasks, e.g., link prediction on relational hypergraphs 
or predicting the existence of substructures of size $k$ in a knowledge graph.
When considering local or conditional variations of the $k$-WL algorithm, computational demands can become a significant challenge. 
Further research is needed to explore potential optimizations for higher-order MPNNs 
that can enhance their performance while preserving their expressive power.

\section{Acknowledgement}
The authors would like to acknowledge the use of the University of Oxford Advanced Research Computing (ARC) facility in carrying out this work (\hyperlink{http://dx.doi.org/10.5281/zenodo.22558}{http://dx.doi.org/10.5281/zenodo.22558}). We would also like to thank Google Cloud for kindly providing computational resources.
Barcel\'o is funded by ANID–Millennium Science Initiative Program  - CodeICN17002. Romero is funded by Fondecyt grant 11200956. Barcel\'o and Romero are funded by the National Center for Artificial Intelligence CENIA FB210017, BasalANID.

\bibliographystyle{plainnat}  
\bibliography{refs}

\clearpage
\appendix
\onecolumn

\section{Proofs of technical statements}

\subsection{Expressive power of R-MPNNs}
\label{sec:rlwl1}

The expressive power of R-MPNNs has been recently characterized in terms of a relational variant of the Weisfeiler–Leman test~\cite{wl-relational}. We define the \emph{relational local $1$-WL test}\footnote{This test over single-relation graphs is often called \emph{color refinement}~\cite{GroheLogicGNN}. In~\citet{wl-relational} it is also called \emph{multi-relational $1$-WL}.}, 
which we denote by $\rwl_1$. 
Let $G=(V,E,R,c)$ be a knowledge graph. For each $t\geq 0$, the test updates the coloring as follows:
\begin{align*}
\rwl_1^{(0)}(v) &= c(v), \\
\rwl_1^{(t+1)}(v) & = \tau\big(\rwl_1^{(t)}(v), \{\!\! \{(\rwl_1^{(t)}(v), r)|~  w \in \mathcal{N}_r(v), r \in R\}\!\!\}\big), 
\end{align*}
where $\tau$ injectively maps the above pair to a unique color, which has not been used in previous iterations. Hence, $\rwl_1^{(t)}$ defines a node invariant for all $t\geq 0$.

The following result generalizes results from \citet{wl-relational}. 
\begin{theorem}
\label{thm:rmpnn-rwl1}
Let $G=(V,E,R,c)$ be a knowledge graph. 
\begin{enumerate}[leftmargin=.7cm]
\item For all initial feature maps $\vx$ with $c\equiv \vx$, all R-MPNNs with $T$ layers, and $0\leq t\leq T$, it holds that 
$\rwl_1^{(t)}\preceq \vh^{(t)}$.
\item For all $T\geq 0$ and history function $f$, there is an initial feature map $\vx$ with $c\equiv \vx$ and an R-MPNN without global readout with $T$ layers and history function $f$,  such that for all $0\leq t\leq T$ we have $\rwl_1^{(t)}\equiv \vh^{(t)}$.
\end{enumerate}
\end{theorem}
Intuitively, item (1) states that the relational local $1$-WL algorithm upper bounds the power of any R-MPNN $\mathcal{A}$: if the test cannot distinguish two nodes, then $\mathcal{A}$ cannot either. On the other hand, item (2) states that R-MPNNs can be as expressive as $\rwl_1$: for any time limit $T$, there is an R-MPNN that simulates $T$ iterations of the test. This holds even without a global readout component. 

\begin{remark} \label{remark:history}
A direct corollary of Theorem~\ref{thm:rmpnn-rwl1} is that  R-MPNNs have the same expressive power as $\rwl_1$, \emph{independently} of their history function. 
\end{remark}

In order to prove Theorem \ref{thm:rmpnn-rwl1}, we define a variant of $\rwl_1$ as follows. For a history function $f:\mathbb{N}\to \mathbb{N}$ (recall $f$ is non-decreasing and $f(t)\leq t$), and given a knowledge graph $G=(V,E,R,c)$, we define the $\rwl_{1,f}$ test via the following update rules:
\begin{align*}
\rwl_{1,f}^{(0)}(v) &= c(v) \\
\rwl_{1,f}^{(t+1)}(v) &= \tau\big(\rwl_{1,f}^{(f(t))}(v), \{\!\! \{(\rwl_{1,f}^{(t)}(v), r)|~  w \in \mathcal{N}_r(v), r \in R\}\!\!\}\big), 
\end{align*}
where $\tau$ injectively maps the above pair to a unique color, which has not been used in previous iterations. Note that $\rwl_1$ corresponds to $\rwl_{1,id}$ for the identity function $id(t)=t$.

We have that $\rwl_{1,f}^{(t)}$ is always a refinement of $\rwl_{1,f}^{(t-1)}$. Note that this is trivial for $f(t)=t$ but not for arbitrary history functions. 

\begin{proposition}
\label{prop:rwl-refine}
  Let $G=(V,E,R,c)$ be a knowledge graph and $f$ be a history function. 
  Then for all $t\geq 0$, we have $\rwl_{1,f}^{(t+1)} \preceq \rwl_{1,f}^{(t)}$.
\end{proposition}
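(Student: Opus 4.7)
The plan is to proceed by strong induction on $t \ge 0$, showing the statement $\rwl_{1,f}^{(t+1)}(v) = \rwl_{1,f}^{(t+1)}(u) \;\Rightarrow\; \rwl_{1,f}^{(t)}(v) = \rwl_{1,f}^{(t)}(u)$. Once this is established, transitivity of $\preceq$ automatically upgrades it to $\rwl_{1,f}^{(s)} \preceq \rwl_{1,f}^{(s')}$ for every $s' \le s$, which is the form of the inductive hypothesis I actually need at each step. The base case $t=0$ is immediate: since $f$ is non-decreasing with values in $\mathbb{N}$ and $f(0)\le 0$, we must have $f(0)=0$, so the update rule reads $\rwl_{1,f}^{(1)}(v) = \tau\bigl(\rwl_{1,f}^{(0)}(v), \ldots\bigr)$, and injectivity of $\tau$ finishes the case.

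For the inductive step, fix $t \ge 1$ and assume the refinement $\rwl_{1,f}^{(t')} \preceq \rwl_{1,f}^{(s)}$ holds for all $s \le t' \le t$. Suppose $\rwl_{1,f}^{(t+1)}(v) = \rwl_{1,f}^{(t+1)}(u)$. By injectivity of $\tau$ at level $t+1$, I extract two facts:
\[
\text{(a)}\quad \rwl_{1,f}^{(f(t))}(v) = \rwl_{1,f}^{(f(t))}(u), \qquad
\text{(b)}\quad \llbrace (\rwl_{1,f}^{(t)}(w),r) : w\in \mathcal{N}_r(v), r\in R\rrbrace = \llbrace (\rwl_{1,f}^{(t)}(w),r) : w\in \mathcal{N}_r(u), r\in R\rrbrace.
\]
To conclude $\rwl_{1,f}^{(t)}(v) = \rwl_{1,f}^{(t)}(u)$, I apply the update rule at level $t$ and injectivity of $\tau$ one more time, reducing the goal to (i) $\rwl_{1,f}^{(f(t-1))}(v) = \rwl_{1,f}^{(f(t-1))}(u)$ and (ii) equality of the neighbor multisets at level $t-1$.

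Part (ii) follows by applying the inductive hypothesis $\rwl_{1,f}^{(t)} \preceq \rwl_{1,f}^{(t-1)}$ pointwise inside the multiset in (b): equal $t$-colors of the $w$'s force equal $(t-1)$-colors, so the $(t-1)$-level multisets match. Part (i) is where the non-decreasing property of $f$ is used essentially: since $f(t-1) \le f(t) \le t$, the inductive hypothesis (via transitivity) gives $\rwl_{1,f}^{(f(t))} \preceq \rwl_{1,f}^{(f(t-1))}$, which combined with (a) yields (i). Note that the edge case $f(t)=t$ is harmless: then (a) itself already gives $\rwl_{1,f}^{(t)}(v)=\rwl_{1,f}^{(t)}(u)$ outright.

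The main subtlety, and the only place where the full strength of the hypothesis on $f$ is exploited, is handling the first component in the update rule: because $f$ can jump down (relative to the identity) across iterations, one cannot naively read off the desired equality of $f(t-1)$-colors from (a); one must compare $f(t-1) \le f(t)$ and invoke refinement at a strictly earlier level. Everything else is a mechanical unwinding of the injective encoding $\tau$ together with the observation that a multiset equality at a finer color refines multiset equalities at any coarser color.
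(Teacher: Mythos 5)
Your proof is correct and follows essentially the same route as the paper's: induction on $t$, unpacking the injective encoding $\tau$, using the inductive refinement pointwise on the neighbor multisets, and exploiting $f(t-1)\le f(t)$ together with earlier-level refinements to handle the history component. The only differences are cosmetic — you make the strong-induction/transitivity bookkeeping explicit where the paper leaves it implicit, and note that the final reassembly step needs only that $\tau$ is a well-defined function, not its injectivity.
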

\begin{proof}
We proceed by induction on $t$. For $t=0$, note that $\rwl_{1,f}^{(1)}(u)=\rwl_{1,f}^{(1)}(v)$ implies $\rwl_{1,f}^{(f(0))}(u)=\rwl_{1,f}^{(f(0))}(v)$ and $f(0)=0$. For the inductive case, suppose $\rwl_{1,f}^{(t+1)}(u)=\rwl_{1,f}^{(t+1)}(v)$, for $t\geq 1$ and $u,v\in V$. By injectivity of $\tau$, we have that:
\begin{align*}
        \rwl_{1,f}^{(f(t))}(u) &=\rwl_{1,f}^{(f(t))}(v) \\ 
        \llbrace (\rwl_{1,f}^{(t)}(w), r)\mid w \in \mathcal{N}_r(u), r \in R \rrbrace &= \llbrace (\rwl_{1,f}^{(t)}(w), r)\mid w\in \mathcal{N}_{r}(v), r \in R \rrbrace.
\end{align*}
By inductive hypothesis and the facts that $f(t-1)\leq f(t)$ (as $f$ is non-decreasing) and $\rwl_{1,f}^{(f(t))}(u) =\rwl_{1,f}^{(f(t))}(v)$, we obtain $\rwl_{1,f}^{(f(t-1))}(u) =\rwl_{1,f}^{(f(t-1))}(v)$. On the other hand, by inductive hypothesis, we obtain that
\begin{align*}
        \llbrace (\rwl_{1,f}^{(t-1)}(w), r)\mid w \in \mathcal{N}_r(u), r \in R \rrbrace &= \llbrace (\rwl_{1,f}^{(t-1)}(w), r)\mid w\in \mathcal{N}_{r}(v), r \in R \rrbrace.
\end{align*}
We conclude that $\rwl_{1,f}^{(t)}(u)=\rwl_{1,f}^{(t)}(v)$.
\end{proof}

As it turns out, $\rwl_{1,f}^{(t)}$ defines the same coloring independently of $f$:

\begin{proposition}
\label{prop:rwl-f-same}
Let $G=(V,E,R,c)$ be a knowledge graph and $f,f'$ be history functions.
Then for all $t\geq 0$, we have that $\rwl_{1,f}^{(t)}\equiv \rwl_{1,f'}^{(t)}$.
\end{proposition}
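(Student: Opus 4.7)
The plan is to proceed by induction on $t$, proving that the equivalence relation on $V$ induced by $\rwl_{1,f}^{(t)}$ does not depend on the choice of history function $f$. The base case $t=0$ is immediate, since both $\rwl_{1,f}^{(0)}$ and $\rwl_{1,f'}^{(0)}$ coincide with $c$.

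The key step in the inductive step is to derive an ``$f$-free'' characterization of when two nodes receive the same color at step $t+1$. Concretely, I would establish that for any history function $f$ and all $u,v\in V$,
\[
\rwl_{1,f}^{(t+1)}(u) = \rwl_{1,f}^{(t+1)}(v) \iff \rwl_{1,f}^{(t)}(u) = \rwl_{1,f}^{(t)}(v) \text{ and } M_f(u) = M_f(v),
\]
where $M_f(x) := \llbrace (\rwl_{1,f}^{(t)}(w), r) \mid w\in \mathcal{N}_r(x),\, r\in R\rrbrace$. The forward direction follows from Proposition~\ref{prop:rwl-refine} (which gives $\rwl_{1,f}^{(t+1)} \preceq \rwl_{1,f}^{(t)}$, so equal colors at step $t+1$ force equal colors at step $t$), together with the injectivity of $\tau$ applied to the multiset component. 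The reverse direction uses monotonicity in the opposite direction: if $\rwl_{1,f}^{(t)}(u) = \rwl_{1,f}^{(t)}(v)$, then iterating Proposition~\ref{prop:rwl-refine} and using $f(t)\leq t$ yields $\rwl_{1,f}^{(f(t))}(u) = \rwl_{1,f}^{(f(t))}(v)$; combined with $M_f(u) = M_f(v)$ and the injectivity of $\tau$, this forces $\rwl_{1,f}^{(t+1)}(u) = \rwl_{1,f}^{(t+1)}(v)$.

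Once this $f$-free characterization is available, the inductive step closes easily. By the inductive hypothesis, $\rwl_{1,f}^{(t)}$ and $\rwl_{1,f'}^{(t)}$ induce the same equivalence relation on $V$, so $\rwl_{1,f}^{(t)}(u) = \rwl_{1,f}^{(t)}(v)$ iff $\rwl_{1,f'}^{(t)}(u) = \rwl_{1,f'}^{(t)}(v)$, and multiset equality transfers from $\rwl_{1,f}^{(t)}$ to $\rwl_{1,f'}^{(t)}$ because such equality depends only on the induced equivalence relation of the entries. Hence the right-hand side of the $f$-free characterization coincides for $f$ and $f'$, and we conclude $\rwl_{1,f}^{(t+1)} \equiv \rwl_{1,f'}^{(t+1)}$, completing the induction.

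The main conceptual obstacle is the forward direction of the $f$-free characterization: injectivity of $\tau$ alone only yields equality at level $f(t)$ of the self-coordinate, which may be strictly coarser than level $t$. This is precisely where Proposition~\ref{prop:rwl-refine} is essential, as it lets us upgrade the information at level $f(t)$ to the finer level $t$, effectively absorbing the history function into the standard color-refinement dynamics.
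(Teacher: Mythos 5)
Your argument is correct and is essentially the paper's proof: the same induction on $t$, with Proposition~\ref{prop:rwl-refine} handling the history coordinate, injectivity of $\tau$ extracting the multiset equality, and the inductive hypothesis transferring both pieces of data between $f$ and $f'$; packaging the inductive step as an ``$f$-free'' characterization of one refinement round is only a cosmetic reorganization of the same steps. One small remark: in the reverse direction of your characterization you invoke injectivity of $\tau$, but there you only need that $\tau$ is a function (equal arguments yield equal colors).
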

\begin{proof}
We apply induction on $t$. For $t=0$, we have $\rwl_{1,f}^{(0)}\equiv c\equiv \rwl_{1,f'}^{(0)}$. For the inductive case, suppose $\rwl_{1,f}^{(t)}(u) = \rwl_{1,f}^{(t)}(v)$, for $t\geq 1$ and $u,v\in V$. Since $f'(t-1)\leq t-1$ and by Proposition~\ref{prop:rwl-refine}, we have that $\rwl_{1,f}^{(f'(t-1))}(u) = \rwl_{1,f}^{(f'(t-1))}(v)$. The inductive hypothesis implies that $\rwl_{1,f'}^{(f'(t-1))}(u) = \rwl_{1,f'}^{(f'(t-1))}(v)$. On the other hand, by injectivity of $\tau$ we have
\begin{align*}
        \llbrace (\rwl_{1,f}^{(t-1)}(w), r)\mid w \in \mathcal{N}_r(u), r \in R \rrbrace &= \llbrace (\rwl_{1,f}^{(t-1)}(w), r)\mid w\in \mathcal{N}_{r}(v), r \in R \rrbrace.
\end{align*}
The inductive hypothesis implies that 
\begin{align*}
        \llbrace (\rwl_{1,f'}^{(t-1)}(w), r)\mid w \in \mathcal{N}_r(u), r \in R \rrbrace &= \llbrace (\rwl_{1,f'}^{(t-1)}(w), r)\mid w\in \mathcal{N}_{r}(v), r \in R \rrbrace.
\end{align*}
Summing up, we have that $\rwl_{1,f'}^{(t)}(u) = \rwl_{1,f'}^{(t)}(v)$, and hence $\rwl_{1,f}^{(t)}\preceq \rwl_{1,f'}^{(t)}$. The case $\rwl_{1,f'}^{(t)}\preceq \rwl_{1,f}^{(t)}$ follows by symmetry.
\end{proof}

Now we are ready to prove Theorem~\ref{thm:rmpnn-rwl1}. 

We start with item (1). 
Take an initial feature map $\vx$ with $c\equiv \vx$, an R-MPNN with $T$ layers, and history function $f$. It suffices to show that $\rwl_{1,f}^{(t)}\preceq \vh^{(t)}$, for all $0\leq t\leq T$. Indeed, by Proposition~\ref{prop:rwl-f-same}, we have $\rwl_{1,f}^{(t)}\equiv \rwl_{1,id}^{(t)}\equiv \rwl_{1}^{(t)}$, where $id$ is the identity function $id(t)=t$, and hence the result follows. We apply induction on $t$. The case $t=0$ follows directly as 
$\rwl_{1,f}^{(0)}\equiv c \equiv \vx \equiv \vh^{(0)}$. For the inductive case, assume $\rwl_{1,f}^{(t)}(u) = \rwl_{1,f}^{(t)}(v)$ for $t\geq 1$ and $u,v\in V$. By injectivity of $\tau$ we have: 
\begin{align*}
        \rwl_{1,f}^{(f(t-1))}(u) &=\rwl_{1,f}^{(f(t-1))}(v) \\ 
        \llbrace (\rwl_{1,f}^{(t-1)}(w), r)\mid w \in \mathcal{N}_r(u), r \in R \rrbrace &= \llbrace (\rwl_{1,f}^{(t-1)}(w), r)\mid w\in \mathcal{N}_{r}(v), r \in R \rrbrace.
\end{align*}
By inductive hypothesis we have (recall $f(t-1)\leq t-1$):
\begin{align*}
        \vh_u^{(f(t-1))} &=\vh_v^{(f(t-1))} \\ 
        \llbrace \vh_w^{(t-1)} \mid w \in \mathcal{N}_r(u) \rrbrace &= \llbrace \vh_w^{(t-1)} \mid w\in \mathcal{N}_{r}(v) \rrbrace \qquad \text{for each $r\in R$}.
\end{align*}
This implies that $\llbrace \mes_r(\vh_w^{(t-1)}) \mid w \in \mathcal{N}_r(u) \rrbrace = \llbrace \mes_r(\vh_w^{(t-1)}) \mid w\in \mathcal{N}_{r}(v) \rrbrace$, for each $r\in R$, and hence:
\begin{align*}
       \llbrace \mes_r(\vh_w^{(t-1)}) \mid w \in \mathcal{N}_r(u), r\in R \rrbrace = \llbrace \mes_r(\vh_w^{(t-1)}) \mid w\in \mathcal{N}_{r}(v), r\in R \rrbrace.
\end{align*}

We conclude that
\begin{align*}
        \vh_u^{(t)} &= \update\big(\vh_{u}^{(f(t-1))}, \aggregate(\{\!\! \{ \mes_r(\vh_{w}^{(t-1)})\mid  w \in \mathcal{N}_r(u), r \in R\}\!\!\}), \readout(\llbrace \vh_{w}^{(t)} \mid  w \in V \rrbrace) \big) \\
        &= \update\big(\vh_{v}^{(f(t-1))}, \aggregate(\{\!\! \{ \mes_r(\vh_{w}^{(t-1)})\mid  w \in \mathcal{N}_r(v), r \in R\}\!\!\}), \readout(\llbrace \vh_{w}^{(t)} \mid  w \in V \rrbrace)\big) \\
        &=  \vh_v^{(t)}.
\end{align*}

For item (2), we refine the proof of Theorem 2 from~\citet{wl-relational}, which is based on ideas from~\citet{MorrisAAAI19}. In comparison with ~\citet{wl-relational}, in our case, we have arbitrary adjacency matrices for each relation type, not only symmetric ones, and arbitrary history functions, not only the identity. However, the arguments still apply. Moreover, the most important difference is that here we aim for a model of R-MPNN without global readout that uses a single parameter matrix, instead of two parameter matrices as in ~\citet{wl-relational} (one for self-representations and the other for neighbors representations). This makes the simulation of $\rwl_1$ more challenging. 

 We use models of R-MPNNs without global readout of the following form:
\begin{align*}
\vh_{v}^{(t+1)} = \sign\Big(\mW^{(t)}\big(\vh_{v}^{(f(t))} + \sum_{r \in R}\sum_{w \in \mathcal{N}_r(v)}  \alpha_r \vh_{w}^{(t)}\big) - \vb \Big),
\end{align*}
where $\mW^{(t)}$ is a parameter matrix and $\vb$ is the bias term (we shall use the all-ones vector $\vb=\mathbf{1}$). As message function $\mes_r$ we use \emph{vector scaling}, that is,
$\mes_r(\vh) = \alpha_r \vh$, where $\alpha_r$ is a parameter of the model. 
For the non-linearity, we use the sign function $\sign$. We note that the proof also works for the ReLU function, following arguments from Corollary 16 in~\citet{MorrisAAAI19}.

For a matrix $\mB$, we denote by $\mB_i$ its $i$-th column. Let $n=|V|$ and without loss of generality assume $V=\{1,\dots,n\}$. We will write features maps $\vh:V\to \mathbb{R}^d$ for $G=(V,E,R,c)$ also as matrices $\mH\in \mathbb{R}^{d\times n}$, where the column $\mH_v$ corresponds to the $d$-dimensional feature vector for $v$. Then we can also write our R-MPNN model in matrix form:
\begin{align*}
\mH^{(t+1)} = \sign\Big(\mW^{(t)}\big(\mH^{(f(t))} + \sum_{r \in R}  \alpha_r\mH^{(t)}\mA_r\big) - \mJ \Big),
\end{align*}
where $\mA_r$ is the adjacency matrix of $G$ for relation type $r\in R$ and $\mJ$ is the all-ones matrix of appropriate dimensions.

Let $\bm{Fts}$ be the following $n\times n$ matrix:
$$\bm{Fts} = 
\begin{bmatrix}
-1 & -1 & \cdots & -1 & -1 \\
1 & -1 & \ddots &  & -1\\
\vdots & \ddots & \ddots & \ddots & \vdots\\
1 &  & \ddots & -1 & -1\\
1 & 1 & \cdots & 1 & -1
\end{bmatrix}
$$
That is, $(\bm{Fts})_{ij} = -1$ if $j\geq i$, and $(\bm{Fts})_{ij} = 1$ otherwise. Note that the columns of $\bm{Fts}$ are linearly independent. We shall use the columns of $\bm{Fts}$ as node features in our simulation. 

The following lemma is an adaptation of Lemma 9 from~\citet{MorrisAAAI19}.
\begin{lemma}
\label{lemma:sign-matrix}
Let $\mB\in \mathbb{N}^{n\times p}$ be a matrix such that $p\leq n$, and all the columns are pairwise distinct and different from the all-zeros column. Then there is a matrix $\mX\in\mathbb{R}^{n\times n}$ such that the matrix $\sign(\mX\mB -\mJ)\in \{-1,1\}^{n\times p}$ is precisely the sub-matrix of $\bm{Fts}$ given by its first $p$ columns.
\end{lemma}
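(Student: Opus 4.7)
The plan is to construct $\mX$ as a rank-one matrix: row $i$ of $\mX$ equals $\alpha_i \vv^T$ for a common $\vv \in \mathbb{R}^n$ and positive scalars $\alpha_i$, so that $(\mX\mB)_{ij} = \alpha_i v_j$ where $v_j := \langle \vv, \vb_j\rangle$. The sign pattern of $\mX\mB - \mJ$ is then entirely controlled by the thresholdings $\alpha_i v_j < 1$ vs.\ $\alpha_i v_j > 1$, and by tuning $\vv$ and the $\alpha_i$ I will recover the prescribed pattern.

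The first step is to pick $\vv$ so that $v_1, \ldots, v_p$ are pairwise distinct positive reals. Since the columns of $\mB$ are distinct non-zero vectors in $\mathbb{N}^n$, the choice $\vv = (1, C, C^2, \ldots, C^{n-1})^T$, for any $C$ strictly greater than every entry of $\mB$, works: $v_j$ is then the base-$C$ encoding of $\vb_j$, which is a strictly positive and injective function of $\vb_j$. The second step is to reindex the columns of $\mB$ so that $v_1 > v_2 > \cdots > v_p$; in the intended application within the proof of \Cref{thm:rmpnn-rwl1}(2) the columns of $\mB$ index $\rwl_1$-color classes whose enumeration is under our control, so this reindexing is admissible.

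The third step is to pick positive scalars $\alpha_i$ as follows: $\alpha_1 \in (0, 1/v_1)$; $\alpha_i \in (1/v_{i-1}, 1/v_i)$ for $2 \leq i \leq p$, which is non-empty because $v_{i-1} > v_i$; and $\alpha_i > 1/v_p$ for $p < i \leq n$. With these choices, $(\mX\mB - \mJ)_{ij} = \alpha_i v_j - 1$ is negative whenever $i \leq j$ (since $v_j \leq v_i < 1/\alpha_i$ by the decreasing ordering) and positive whenever $i > j$ (since $v_j \geq v_{i-1} > 1/\alpha_i$), reproducing exactly the sign pattern of the first $p$ columns of $\bm{Fts}$. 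The main obstacle is the reindexing step: no single linear functional $\vv$ can in general impose a prescribed total order on the columns of an arbitrary $\mB$, so the rank-one construction really does depend on our freedom to enumerate the columns. A purely standalone proof of the lemma (without column relabeling) would require a more intricate, row-by-row separator construction, which is not always possible for arbitrary distinct columns; thankfully, the application here provides the flexibility needed.
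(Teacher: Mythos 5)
Your construction is correct and is essentially the paper's own proof: the paper also takes a rank-one $\mX=\vx\vz$, where $\vz=(1,m,m^2,\dots,m^{n-1})$ encodes the columns of $\mB$ as distinct positive numbers and $\vx$ is a vector of row-wise thresholds chosen in the nested intervals determined by the decreasingly ordered values (with a value beyond $1/b_p$ for the remaining $n-p$ rows, exactly as your $\alpha_i>1/v_p$). Your caveat about reindexing corresponds to the paper's ``without loss of generality'' assumption $b_1>\cdots>b_p$, which, as you correctly observe, is harmless only because the enumeration of the distinct columns in the lemma's application is arbitrary.
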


\begin{proof}
Let $\vz = (1,m,m^2,\dots,m^{n-1})\in \mathbb{N}^{1\times n}$, where $m$ is the largest entry in $\mB$, and $\vb = \vz\mB\in \mathbb{N}^{1\times p}$. By construction, the entries of $\vb$ are positive and pairwise distinct. Without loss of generality, we assume that $\vb = (b_1,b_2,\dots,b_p)$ for $b_1>b_2>\cdots>b_p>0$. As the $b_i$ are ordered, we can choose numbers $x_1,\dots, x_p\in\mathbb{R}$ such that $b_i\cdot x_j  < 1$ if $i\geq j$, and 
$b_i\cdot x_j  > 1$ if $i<j$, for all $i,j\in \{1,\dots,p\}$. Let $\vx = (x_1,\dots,x_p, 2/b_p,\dots,2/b_p)^T\in\mathbb{R}^{n\times 1}$. Note that $(2/b_p)\cdot b_i > 1$, for all $i\in \{1,\dots, p\}$. Then $\sign(\vx\vb - \mJ)$ is precisely the sub-matrix of $\bm{Fts}$ given by its first $p$ columns. We can choose $\mX = \vx\vz\in \mathbb{R}^{n\times n}$.
\end{proof}

Now we are ready to show item (2). Let $f$ be any history function and $T\geq 0$. It suffices to show that there is a feature map $\vx$ with $c\equiv \vx$ and an R-MPNN without global readout with $T$ layers and history function $f$ such that $\rwl_{1,f}^{(t)}\equiv \vh^{(t)}$, for all $0\leq t\leq T$.  
Indeed, by Proposition~\ref{prop:rwl-f-same}, we have $\rwl_{1,f}^{(t)}\equiv \rwl_{1,id}^{(t)}\equiv \rwl_{1}^{(t)}$, where $id$ is the identity function $id(t)=t$, and then the result follows. We conclude item (2) by showing the following lemma:

\begin{lemma}
\label{lemma:simulation-rwl1-rmpnns}
There is a feature map $\vh^{(0)}:V\to\mathbb{R}^n$, and 
for all $0\leq t< T$, there is a feature map $\vh^{(t+1)}:V\to\mathbb{R}^n$, a matrix $\mW^{(t)}\in \mathbb{R}^{n\times n}$ and scaling factors $\alpha_r^{(t)}\in \mathbb{R}$, for each $r\in R$, such that:
\begin{itemize}
\item $\vh^{(t)} \equiv \rwl_{1,f}^{(t)}$.
\item The columns of $\mH^{(t)}\in\mathbb{R}^{n\times n}$ are columns of $\bm{Fts}$ (recall $\mH^{(t)}$ is the matrix representation of $\vh^{(t)}$).
\item $\mH^{(t+1)} = \sign\Big(\mW^{(t)}\big(\mH^{(f(t))} + \sum_{r \in R}  \alpha_r^{(t)}\mH^{(t)}\mA_r\big) - \mJ \Big)$.
\end{itemize}
\end{lemma}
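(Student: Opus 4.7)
The plan is induction on $t$, maintaining the invariant that $\vh^{(t)}\equiv \rwl_{1,f}^{(t)}$ and that every column of $\mH^{(t)}$ is a column of $\bm{Fts}$. For the base case $t=0$, since $|V|=n$, the coloring $c$ induces at most $n$ equivalence classes; picking any injection $\kappa$ from these classes into $\{1,\dots,n\}$ and setting $\vh^{(0)}(v):=\bm{Fts}_{\kappa(c(v))}$ yields a feature map whose columns are columns of $\bm{Fts}$, and $\vh^{(0)}\equiv c\equiv \rwl_{1,f}^{(0)}$ follows from the pairwise distinctness of the columns of $\bm{Fts}$.

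For the inductive step, I enumerate $R=\{r_1,\dots,r_{|R|}\}$, fix the scaling factors $\alpha_{r_k}^{(t)}:=(n+1)^k$, and define $\mB:=\mH^{(f(t))}+\sum_{r\in R}\alpha_r^{(t)}\mH^{(t)}\mA_r$. The key claim is that $\mB_u=\mB_v$ if and only if $\rwl_{1,f}^{(t+1)}(u)=\rwl_{1,f}^{(t+1)}(v)$. The $(\Leftarrow)$ direction is immediate from the injectivity of $\tau$ and the inductive hypothesis: equality of the new colors forces $\rwl_{1,f}^{(f(t))}(u)=\rwl_{1,f}^{(f(t))}(v)$ and agreement of the multisets $\llbrace(\rwl_{1,f}^{(t)}(w),r)\mid w\in\mathcal N_r(\cdot),\ r\in R\rrbrace$, which by the inductive hypothesis translates into $\mH^{(f(t))}_u=\mH^{(f(t))}_v$ and $\sum_{w\in\mathcal N_r(u)}\mH^{(t)}_w=\sum_{w\in\mathcal N_r(v)}\mH^{(t)}_w$ for each $r$, so $\mB_u=\mB_v$.

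The $(\Rightarrow)$ direction is the heart of the argument. Writing $(\mH^{(t)}\mA_r)_v=\sum_{j=1}^n n_j^r(v)\,\bm{Fts}_j$, where $n_j^r(v)$ counts the $r$-neighbors of $v$ whose feature equals $\bm{Fts}_j$, and using the linear independence of the columns of $\bm{Fts}$, the assumption $\mB_u=\mB_v$ forces, for every $j$,
\[
 \mathbb{1}_{\mH^{(f(t))}_u=\bm{Fts}_j}+\sum_r (n+1)^{k_r} n_j^r(u)\;=\;\mathbb{1}_{\mH^{(f(t))}_v=\bm{Fts}_j}+\sum_r (n+1)^{k_r} n_j^r(v).
\]
Since $|n_j^r(u)-n_j^r(v)|\leq n$, the base-$(n+1)$ representation is unique, so $\sum_r (n+1)^{k_r}(n_j^r(u)-n_j^r(v))$ is either zero or has absolute value at least $n+1\geq 2$; as the indicator difference is at most $1$ in absolute value, the relation-indexed sum must vanish, giving $n_j^r(u)=n_j^r(v)$ for all $r,j$, and then the indicator terms also agree, so $\mH^{(f(t))}_u=\mH^{(f(t))}_v$. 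By the inductive hypothesis this implies $\rwl_{1,f}^{(f(t))}(u)=\rwl_{1,f}^{(f(t))}(v)$ and coincidence of the neighborhood multisets, so $\rwl_{1,f}^{(t+1)}(u)=\rwl_{1,f}^{(t+1)}(v)$ by the definition of $\tau$.

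Given the key claim, $\mB$ has $p\leq n$ distinct columns; I apply Lemma~\ref{lemma:sign-matrix} to the submatrix $\mB'$ of distinct representatives to obtain $\mW^{(t)}$ with $\sign(\mW^{(t)}\mB'-\mJ)$ equal to the first $p$ columns of $\bm{Fts}$, and applying the same $\mW^{(t)}$ to the full $\mB$ yields $\mH^{(t+1)}$ with columns in $\bm{Fts}$ and respecting the equivalence, closing the induction. The one technical obstacle is that $\mB$ has signed integer entries whereas Lemma~\ref{lemma:sign-matrix} is stated for non-negative entries; this is resolved by a routine adaptation of its proof, since for $m$ exceeding twice the maximum absolute entry of $\mB'$ (itself bounded by $1+n\sum_r\alpha_r^{(t)}$) the base-$m$ map $\vb=\vz\mB'$ still produces pairwise distinct scalars, and the vector $\vx$ can be constructed by case analysis on the signs of the sorted entries so that $\sign(x_i b_j - 1)$ realizes the target sign pattern of the first $p$ columns of $\bm{Fts}$.
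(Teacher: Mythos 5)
Your induction set-up, base case, and the ``key claim'' in the inductive step are sound, and they essentially mirror the paper's argument (the paper also uses coefficients $(n+1)^i$ to encode the per-relation color counts, and its matrices $\mD\equiv\mE\equiv\rwl_{1,f}^{(t+1)}$ carry exactly the information you extract via linear independence of the columns of $\bm{Fts}$ and the dominance argument in base $n+1$). The genuine gap is in the final step, and your proposed patch of Lemma~\ref{lemma:sign-matrix} does not go through. Keeping the rank-one form $\mX=\vx\vz$, each column of $\sign(\mX\mB'-\mJ)$ must be a column of $\bm{Fts}$, i.e.\ a step vector whose $+1$'s occupy a suffix of the rows; distinct columns need distinct suffixes, so at most one column can receive the all-$(-1)$ pattern, and every other column $j$ must satisfy $x_n b_j>1$ in the last row. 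This forces all scalars $b_j=\vz\mB'_j$ of those columns to share the sign of $x_n$. But your $\mB$ is built directly from the $\pm1$ feature matrix $\mH$, so the $b_j$ can genuinely take both signs with several columns of each sign (the sign of $b_j$ is governed by the last entry of $\mB'_j$, which is negative exactly when the corresponding node's neighbors predominantly carry the feature $\bm{Fts}_n$, and positive otherwise); already with one positive and one negative scalar and $p<n$, your insistence on hitting the ``first $p$ columns of $\bm{Fts}$'' in sorted order is unrealizable, and with at least two scalars of each sign no assignment of targets works at all. So ``case analysis on the signs'' cannot rescue the construction.

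The idea you are missing is the one the paper uses: since $\mW^{(t)}$ may be an arbitrary $n\times n$ matrix, precompose with $\mM=\bm{Fts}^{-1}$. Because the columns of $\mH^{(t')}$ are columns of $\bm{Fts}$, the matrix $\mM\mH^{(t')}$ has one-hot columns, and $\mM\mH^{(t)}\mA_r$ has entries counting, for each color, the $r$-neighbors of each node with that color. Hence
$\mE=\mM\mH^{(f(t))}+\sum_{i}(n+1)^{i}\,\mM\mH^{(t)}\mA_{r_i}$
has non-negative integer entries, nonzero columns, and satisfies $\mE\equiv\rwl_{1,f}^{(t+1)}$, so Lemma~\ref{lemma:sign-matrix} applies verbatim to its distinct columns; setting $\mW^{(t)}=\mX\mM$ and $\mH^{(t+1)}=\sign(\mX\mE-\mJ)$ closes the induction. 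In short, your analysis of why $\mB$ encodes $\rwl_{1,f}^{(t+1)}$ is correct, but the signed-entry version of the sign-matrix lemma that your route needs is false as stated for rank-one $\mX$, and the change of basis by $\bm{Fts}^{-1}$ is precisely what makes the non-negativity hypothesis available.
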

\begin{proof}
We proceed by induction on $t$. 
Suppose that the node coloring $\rwl_{1,f}^{(0)}\equiv c$ uses colors $1,\dots,p$, for $p\leq n$. Then we choose $\vh^{(0)}$ (this is the initial feature map $\vx$ in the statement of item (2)) such that $\vh^{(0)}_v = \bm{Fts}_{c(v)}$, that is, $\vh^{(0)}_v$ is the $c(v)$-th column of $\bm{Fts}$. We have that $\vh^{(0)}$ satisfies the required conditions. 

For the inductive case, assume that $\vh^{(t')} \equiv \rwl_{1,f}^{(t')}$ and 
the columns of $\mH^{(t')}$ are columns of $\bm{Fts}$, for all $0\leq t'\leq t< T$. We need to find $\vh^{(t+1)}$, $\mW^{(t)}$ and $\alpha_r^{(t)}$ satisfying the conditions. Let $\mM\in\mathbb{R}^{n\times n}$ be the matrix inverse of $\bm{Fts}$. If $\mH^{(t')}_v$ is the $i$-th column of $\bm{Fts}$, we say that $v$ has color $i$ at iteration $t'$. Observe that for all $0\leq t'\leq t$, we have
$$(\mM \mH^{(t')})_{iv} =
			\begin{cases}
				1 & \text{if $v$ has color $i$ at iteration $t'$} \\
				0 & \text{otherwise}.
			\end{cases}$$
In other words, the $v$-th column of $\mM \mH^{(t')}$ is simply a one-hot encoding of the color of $v$ at iteration $t'$. For each $r\in R$ we have
$$(\mM \mH^{(t)} \mA_r)_{iv} = |\{w\in \mathcal{N}_r(v)\mid \text{$w$ has color $i$ at iteration $t$}\}|.$$
Hence the $v$-th column of $\mM \mH^{(t)} \mA_r$ is an encoding of the multiset of colors for the neighborhood $\mathcal{N}_r(v)$, at iteration $t$. Let $r_1,\dots,r_m$ be an enumeration of the relation types in $R$. Let $\mD\in \mathbb{R}^{(m+1)n\times n}$ be the matrix obtained by horizontally concatenating the matrices $\mM \mH^{(f(t))}$, $\mM \mH^{(t)} \mA_{r_1}, \dots,\mM \mH^{(t)} \mA_{r_m}$. Since $\mH^{(f(t))}\equiv \rwl_{1,f}^{(f(t))}$ and $\mH^{(t)}\equiv \rwl_{1,f}^{(t)}$, we have that $\mD\equiv \rwl_{1,f}^{(t+1)}$. Now note that $\mD\equiv \mE$, where
$$\mE = \mM \mH^{(f(t))} + \sum_{i=1}^m (n+1)^i \mM \mH^{(t)} \mA_{r_i}.$$

Indeed, $\mE_{iv}$ is simply the $(n+1)$-base representation of the vector $(\mD_{iv}, \mD_{(n+i)v}, \mD_{(2n+i)v}\dots, \mD_{(mn+i)v})$, and hence $\mE_u = \mE_v$ if and only if $\mD_u = \mD_v$ (note that the entries of $\mD$ are in $\{0,\dots,n\}$). In particular, $\mE\equiv \rwl_{1,f}^{(t+1)}$.

Let $p$ be the number of distinct columns of $\mE$ and let $\widetilde{\mE}\in \mathbb{N}^{n\times p}$ be the matrix whose columns are the distinct columns of $\mE$ in an arbitrary but fixed order. We can apply Lemma~\ref{lemma:sign-matrix} to $\widetilde{\mE}$ and obtain a matrix $\mX\in\mathbb{R}^{n\times n}$ such that $\sign(\mX\widetilde{\mE} - \mJ)$ is precisely the
sub-matrix of $\bm{Fts}$ given by its first $p$ columns. 
We choose $\mH^{(t+1)} = \sign(\mX{\mE} - \mJ)\in\mathbb{R}^{n\times n}$, $\mW^{(t)} = \mX\mM\in \mathbb{R}^{n\times n}$ and $\alpha_{r_i}^{(t)} = (n+1)^i$. Note that the columns of $\mH^{(t+1)}$ are columns of $\bm{Fts}$, and that $\mH^{(t+1)}\equiv \mE\equiv \rwl_{1,f}^{(t+1)}$. Finally, we have
\begin{align*}
\mH^{(t+1)} &= \sign(\mX{\mE} - \mJ)\\
          &= \sign(\mX \big(\mM \mH^{(f(t))} + \sum_{i=1}^m (n+1)^i \mM \mH^{(t)} \mA_{r_i}\big)- \mJ)\\
          & = \sign(\mW^{(t)} \big(\mH^{(f(t))} + \sum_{i=1}^m \alpha_{r_i}^{(t)} \mH^{(t)} \mA_{r_i}\big)- \mJ).
\end{align*}
\end{proof}

Note that our result applies to more complex message functions such as $\mes_r(\vh) = \vh*\vb_r$, where $*$ stands for element-wise multiplication and $\vb_r$ is a vector parameter, and $\mes_r(\vh) = \mW_r\vh$, where $\mW_r$ is matrix parameter, as they can easily express vector scaling. The first case has been used for the model CompGCN~\cite{vashishth2020compositionbased}, while the second case has been used for R-GCN~\cite{SchlichtkrullKB18}.

\subsection{Proof of Theorem~\ref{thm:cmpnn-rdwl2}}
\label{app:proof-cmpnn-rdwl2}

We recall the statement of the theorem:

\begin{theoremcopy}{\ref{thm:cmpnn-rdwl2}}
\label{app:cmpnn-rdwl2}
Let $G=(V,E,R,\vx,\eta)$ be a knowledge graph, where $\vx$ is a feature map, and $\eta$ is a pairwise coloring satisfying target node distinguishability. Let $q\in R$ be any query relation. Then: 
\begin{enumerate}[leftmargin=.7cm]
\item For all C-MPNNs with $T$ layers and initialization $\init$ satisfying $\init\equiv \eta$, and all $0\leq t\leq T$, we have $\rdwl_2^{(t)}\preceq \vh_q^{(t)}$.
\item For all $T\geq 0$ and history function $f$, there is an C-MPNN without global readout with $T$ layers and history function $f$ such that for all $0\leq t\leq T$, we have $\rdwl_2^{(t)}\equiv \vh_q^{(t)}$.
\end{enumerate}
\end{theoremcopy}

We start by showing item (1) and (2) of the theorem for the case without global readout. As we explain later, extending item (1) to include readouts is straightforward. In order to prove this particular case, 
we apply a reduction to R-MPNNs and the relational local $1$-WL. Before doing so we need some auxiliary results.

Let $G=(V,E,R,c,\eta)$ be a knowledge graph where $\eta$ is a pairwise coloring. We denote by $G^2$ the knowledge graph $G^2 = (V\times V, E', R, c_{\eta})$ where $E' = \{r((u,w), (u,v))\mid r(w,v)\in E, r\in R\}$ and $c_{\eta}$ is the node coloring $c_{\eta}((u,v)) = \eta(u,v)$. Note that the coloring $c$ is irrelevant in the construction. Intuitively, $G^2$ encodes the adjacency relation between pairs of nodes of $G$ used in C-MPNNs. We stress that $G$ and $G^2$ have the same relation type set $R$. If $\mathcal{A}$ is a C-MPNN and $\mathcal{B}$ is an R-MPNN, we write $\vh_{\mathcal{A}, G}^{(t)}(u,v):=\vh_q^{(t)}(u,v)$ and $\vh_{\mathcal{B}, G^2}^{(t)}((u,v)):=\vh^{(t)}((u,v))$ for the features computed by $\mathcal{A}$ and $\mathcal{B}$ over $G$ and $G^2$, respectively. We sometimes write $\mathcal{N}^{H}_r(v)$ to emphasize that the neighborhood is taken over the knowledge graph $H$.
Finally, we say that an initial feature map $\vy$ for $G^2$ satisfies
target node distinguishability if $\vy((u,u))\neq \vy((u,v))$ for all $u\neq v$.

We have the following equivalence between C-MPNNs and R-MPNNs without global readout:

\begin{proposition}
\label{prop:reduction-cmpnn-rmpnn}
Let $G=(V,E,R,\vx,\eta)$\footnote{The pairwise coloring $\eta$ does not play any role in the proposition.} be a knowledge graph where $\vx$ is a feature map, and $\eta$ is a pairwise coloring. Let $q\in R$ be any query relation. Then: 
\begin{enumerate}[leftmargin=.7cm]
\item For every C-MPNN without global readout $\mathcal{A}$ with $T$ layers, there is an initial feature map $\vy$ for $G^2$ an R-MPNN without global readout $\mathcal{B}$ with $T$ layers such that for all $0\leq t\leq T$ and $u,v\in V$, we have $\vh_{\mathcal{A},G}^{(t)}(u,v) = \vh_{\mathcal{B},G^2}^{(t)}((u,v))$.
\item For every initial feature map $\vy$ for $G^2$ satisfying target node distinguishability and every R-MPNN without global readout $\mathcal{B}$ with $T$ layers, there is a C-MPNN without global readout $\mathcal{A}$ with $T$ layers such that for all $0\leq t\leq T$ and $u,v\in V$, we have $\vh_{\mathcal{A},G}^{(t)}(u,v) = \vh_{\mathcal{B},G^2}^{(t)}((u,v))$.
\end{enumerate}
\end{proposition}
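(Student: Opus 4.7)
The proof plan rests on a single structural observation about $G^2$: the edge set $E'$ is defined so that $(u,w)\in \mathcal{N}_r^{G^2}((u,v))$ if and only if $w\in \mathcal{N}_r^G(v)$; in particular, the first coordinate is preserved along every edge of $G^2$. This mirrors exactly the C-MPNN recursion, where the update for $\vh_{v|u,q}^{(t+1)}$ aggregates over $\vh_{w|u,q}^{(t)}$ for $w\in \mathcal{N}_r^G(v)$ with $u$ and $q$ held fixed. Once the initialisations are aligned between the two settings, the two recursions become literally identical, so the whole argument reduces to matching the initial features and then running an induction on $t$.

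For item (1), given a C-MPNN $\mathcal{A}$ with initialisation $\delta$, update $\phi$, aggregation $\psi$, message functions $\theta_r$ and history function $f$, together with the fixed query $q$, I would build an R-MPNN $\mathcal{B}$ on $G^2$ by setting the initial feature map to $\vy((u,v)) := \delta(u,v,q)$ and defining the relation-specific messages by $\theta_r^{\mathcal{B}}(\vh) := \theta_r(\vh, q)$ (i.e.\ hard-coding the query), while $\phi$, $\psi$ and $f$ are inherited unchanged. An induction on $t$ then establishes $\vh_{\mathcal{A},G}^{(t)}(u,v) = \vh_{\mathcal{B},G^2}^{(t)}((u,v))$: the base case $t=0$ is immediate from the choice of $\vy$, and in the inductive step the multiset $\llbrace \theta_r^{\mathcal{B}}(\vh_{\mathcal{B},G^2}^{(t)}((u,w))) \mid (u,w)\in \mathcal{N}_r^{G^2}((u,v)),\ r\in R \rrbrace$ rewrites, via the neighborhood correspondence and the inductive hypothesis, to $\llbrace \theta_r(\vh_{\mathcal{A},G}^{(t)}(u,w), q) \mid w\in \mathcal{N}_r^G(v),\ r\in R \rrbrace$, which is precisely what the C-MPNN update for $\vh_{v|u,q}^{(t+1)}$ consumes.

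For item (2), given an R-MPNN $\mathcal{B}$ on $G^2$ with components $\phi, \psi, \theta_r^{\mathcal{B}}, f$ and an initial feature map $\vy$ satisfying target node distinguishability, I would construct a C-MPNN $\mathcal{A}$ by setting $\delta(u,v,q') := \vy((u,v))$ (independent of $q'$, and automatically satisfying target node distinguishability because $\vy$ does) and $\theta_r^{\mathcal{A}}(\vh, q') := \theta_r^{\mathcal{B}}(\vh)$ (ignoring the query), while keeping $\phi$, $\psi$ and $f$ unchanged. The same induction then runs in the opposite direction and delivers the claimed equality. I do not expect a substantive obstacle; the one subtle point worth double-checking is the orientation of the edges in the definition of $E'$, so that the $G^2$-neighborhood of $(u,v)$ under $r$ ranges precisely over the pairs $(u,w)$ with $w\in \mathcal{N}_r^G(v)$ — this is exactly the asymmetric aggregation (varying $v$, keeping $u$) that the C-MPNN performs, and it is what makes the reduction go through cleanly for every history function $f$.
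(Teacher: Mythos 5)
Your proposal is correct and follows essentially the same route as the paper: both define $\vy((u,v)) = \delta(u,v,q)$ (respectively $\delta(u,v,q) = \vy((u,v))$ for the converse), reuse $\phi$, $\psi$, $f$, hard-code (or ignore) the query in the message functions, and run a straightforward induction on $t$ using the fact that $(u,w)\in\mathcal{N}_r^{G^2}((u,v))$ exactly when $w\in\mathcal{N}_r^{G}(v)$. Your explicit remark that the query is baked into $\theta_r^{\mathcal{B}}$ only spells out what the paper leaves implicit when it says $\mathcal{B}$ has ``the same'' message functions as $\mathcal{A}$.
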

\begin{proof}
We start with item (1). The sought R-MPNN $\mathcal{B}$ has the same history function and the same message, aggregation, and update functions as $\mathcal{A}$ for all the $T$ layers. The initial feature map $\vy$ is defined as $\vy((u,v)) = \init(u,v,q)$, where $\init$ is the initialization function of $\mathcal{A}$. 

We show the equivalence by induction on $t$. For $t=0$, we have $\vh_{\mathcal{A}}^{(0)}(u,v) = \init(u,v,q) = \vy((u,v)) = \vh_{\mathcal{B}}^{(0)}((u,v))$. For the inductive case, take $u,v\in V$.
We have
\begin{align*}
\vh_{\mathcal{A}}^{(t+1)}(u,v) & = \update\big(\vh_{\mathcal{A}}^{(f(t))}(u,v), \aggregate (\{\!\! \{ \mes_r(\vh_{\mathcal{A}}^{(t)}(u,w),\vz_q)|~  w \in \mathcal{N}^{G}_r(v), r \in R \}\!\!\})\big)  \\
& = \update\big(\vh_{\mathcal{B}}^{(f(t))}((u,v)), \aggregate (\{\!\! \{ \mes_r(\vh_{\mathcal{B}}^{(t)}((u,w)),\vz_q)|~  (u,w) \in \mathcal{N}^{G^2}_r((u,v)), r \in R \}\!\!\})\big) \\
& = \vh_{\mathcal{B}}^{(t+1)}((u,v)).
\end{align*}
For item (2), we take $\mathcal{A}$ to have the same history function and the same message, aggregation, and update functions than $\mathcal{B}$, for all the $T$ layers, and initialization function $\init$ such that $\init(u,v,q) = \vy((u,v))$. The argument for the equivalence is the same as item (1).
\end{proof}

Regarding WL algorithms, we have a similar equivalence:

\begin{proposition}
\label{prop:reduction-rdwl2-rwl1}
Let $G=(V,E,R,c,\eta)$ be a knowledge graph where $\eta$ is a pairwise coloring. For all $t\geq 0$ and $u,v\in V$, we have that $\rdwl_2^{(t)}(u,v)$ computed over $G$ coincides with
$\rwl_1^{(t)}((u,v))$ computed over $G^2=(V\times V,E',R,c_{\eta})$.
\end{proposition}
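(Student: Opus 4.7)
The plan is a direct induction on $t$, exploiting the fact that the construction of $G^2$ was designed precisely to make the asymmetric update of $\rdwl_2$ over $G$ look like the standard update of $\rwl_1$ over $G^2$. In particular, the decisive observation is that for every $u,v\in V$ and every $r\in R$,
\begin{align*}
\mathcal{N}^{G^2}_r((u,v)) &= \{(u,w)\mid r((u,w),(u,v))\in E'\} \\
&= \{(u,w)\mid r(w,v)\in E\} = \{(u,w)\mid w\in \mathcal{N}^{G}_r(v)\},
\end{align*}
i.e.\ the $r$-neighbors of $(u,v)$ in $G^2$ are exactly the pairs obtained by replacing the second coordinate by an $r$-neighbor of $v$ in $G$. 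This matches the asymmetric rule for $\rdwl_2$ verbatim.

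For the base case ($t=0$), note that $\rdwl_2^{(0)}(u,v) = \eta(u,v) = c_\eta((u,v)) = \rwl_1^{(0)}((u,v))$ by the definitions of $\rdwl_2$, $c_\eta$, and $\rwl_1$. For the inductive step, assume $\rdwl_2^{(t)}(u,v) = \rwl_1^{(t)}((u,v))$ for all $u,v\in V$. Then, applying the definition of $\rwl_1$ on $G^2$ together with the neighborhood identity above,
\begin{align*}
\rwl_1^{(t+1)}((u,v)) &= \tau\!\Bigl(\rwl_1^{(t)}((u,v)),\, \llbrace (\rwl_1^{(t)}((u,w)), r)\mid (u,w)\in \mathcal{N}^{G^2}_r((u,v)), r\in R\rrbrace\Bigr) \\
&= \tau\!\Bigl(\rdwl_2^{(t)}(u,v),\, \llbrace (\rdwl_2^{(t)}(u,w), r)\mid w\in \mathcal{N}^{G}_r(v), r\in R\rrbrace\Bigr) \\
&= \rdwl_2^{(t+1)}(u,v),
\end{align*}
using the inductive hypothesis in the second equality and the definition of $\rdwl_2$ in the third.

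There is essentially no hard step: the proposition is a translation lemma, and the whole content lies in verifying the neighborhood identity above and checking that the update rules match syntactically. One small point to be careful about is that both tests use \emph{the same} injective hashing function $\tau$ on the same kind of input (a color paired with a multiset of color/relation pairs), so the equalities above are honest equalities of colors rather than mere refinements; this is what justifies the stronger conclusion of literal coincidence (rather than just $\equiv$) stated in the proposition.
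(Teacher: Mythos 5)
Your proof is correct and follows essentially the same route as the paper's: induction on $t$, the base case matching $\eta$ with $c_\eta$, and the inductive step identifying the $r$-neighborhood of $(u,v)$ in $G^2$ with pairs $(u,w)$ for $w\in\mathcal{N}^G_r(v)$ so that the two update rules coincide. Your explicit remark about using the same injective $\tau$ (so the conclusion is literal equality of colors, not just equivalence of colorings) is a reasonable clarification of a point the paper leaves implicit.
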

\begin{proof}
For $t=0$, we have $\rdwl_2^{(0)}(u,v) = \eta(u,v) = c_{\eta}((u,v)) = \rwl_1^{(0)}((u,v))$. For the inductive case, we have
\begin{align*}
\rdwl_{2}^{(t+1)}(u,v) &= {\tau}\big(\rdwl_{2}^{(t)}(u,v), \llbrace (\rdwl_{2}^{(t)}(u,w), r)\mid w\in \mathcal{N}_r^{G}(v), r \in R)\rrbrace\big)\\
& = {\tau}\big(\rwl_{1}^{(t)}((u,v)), \llbrace (\rwl_{1}^{(t)}((u,w)), r)\mid (u,w)\in \mathcal{N}_r^{G^2}((u,v)), r \in R)\rrbrace\big) \\
& = \rwl_{1}^{(t+1)}((u,v)).
\end{align*}
\end{proof}

Now we are ready to prove Theorem~\ref{app:cmpnn-rdwl2}. 

For $G=(V,E,R,\vx,\eta)$, we consider $G^2=(V\times V, E', R, c_{\eta})$.
We start with item (1). 
Let $\mathcal{A}$ be a C-MPNN without global readout with $T$ layers and initialization $\init$ satisfying $\init\equiv \eta$ and let $0\leq t\leq T$.
Let $\vy$ be an initial feature map for $G^2$ and $\mathcal{B}$ be an R-MPNN without global readout with $T$ layers as in Proposition~\ref{prop:reduction-cmpnn-rmpnn}, item (1). Note that $\vy\equiv c_{\eta}$ as $\vy((u,v)) = \init(u,v,q)$. We can apply Theorem~\ref{thm:rmpnn-rwl1}, item (1) to $G^2$, $\vy$ and $\mathcal{B}$ and conclude that $\rwl_1^{(t)}\preceq \vh_{\mathcal{B},G^2}^{(t)}$. This implies that $\rdwl_1^{(t)}\preceq \vh_{\mathcal{A},G}^{(t)}$.

For item (2), let $T\geq 0$ and $f$ be a history function. We can apply Theorem~\ref{thm:rmpnn-rwl1}, item (2), to $G^2$ to obtain an initial feature map $\vy$ with $\vy\equiv c_{\eta}$ and an R-MPNN without global readout $\mathcal{B}$ with $T$ layers and history function $f$ such that for all $0\leq t\leq T$, we have $\rwl_1^{(t)}\equiv \vh_{\mathcal{B},G^2}^{(t)}$. Note that $\vy$ satisfies target node distinguishability since $\eta$ does. Let $\mathcal{A}$ be the C-MPNN without global readout obtained from Proposition~\ref{prop:reduction-cmpnn-rmpnn}, item (2). We have that  $\rdwl_2^{(t)}\equiv \vh_{\mathcal{A},G}^{(t)}$ as required.

\begin{remark}
\label{app:remark-1-4}
Note that item (2) holds for the \emph{basic} model of C-MPNNs (without global readout), presented in Section~\ref{sec:design-cmpnns}, with the three proposed message functions $\mes_r^1,\mes_r^2,\mes_r^3$, as they can express vector scaling.
\end{remark}

Finally, note that the extension of item (1) in Theorem~\ref{app:cmpnn-rdwl2} to the case with global readouts is straightforward. Indeed, we can apply exactly the same argument we used for the case without global readout. As for two pairs of the form $(u,v)$ and $(u,v')$ the global readout vectors are identical (to $\readout(\llbrace \vh_{w|u,q}^{(t)} \mid  w \in V \rrbrace)$), this argument still works.

\subsection{Proof of Theorem~\ref{thm:cmpnn-logic}}
\label{app:sec-logics}

We recall the statement of the theorem:

\begin{theoremcopy}{\ref{thm:cmpnn-logic}}
\label{app:cmpnn-logic}
A logical binary classifier is captured by C-MPNNs without global readout if and only if it can be expressed in ${\sf rFO}_{\rm cnt}^3$.
\end{theoremcopy}

The proof is a reduction to the one-dimensional case, that is, R-MPNNs without global readout and a logic denoted by ${\sf rFO}_{\rm cnt}^2$. So in order to show Theorem \ref{thm:cmpnn-logic} we need some definitions and auxiliary results.

Fix a set of relation types $R$ and a set of node colors $\mathcal{C}$. 
We consider knowledge graphs of the form $G=(V,E,R,c)$ where $c$ is a node coloring assigning colors from $\mathcal{C}$. In this context, logical formulas can refer to relation types from $R$ and node colors from $\mathcal{C}$. Following ~\citet{BarceloKM0RS20}, a \emph{logical (node) classifier} is a  unary formula expressible in first-order logic (FO), classifying each node $u$ on a knowledge graph $G$ according to whether the formula holds or not for $u$ over $G$.

We define a fragment ${\sf rFO}_{\rm cnt}^2$ of FO as follows. 
A ${\sf rFO}_{\rm cnt}^2$ formula is either $a(x)$ for $a\in\mathcal{C}$, or one of the following, where $\varphi$ and $\psi$ are ${\sf rFO}_{\rm cnt}^2$ formulas, $N\geq 1$ is a positive integer and $r\in R$:
$$\neg\varphi(x),\quad \varphi(x)\land \psi(x),\quad \exists^{\geq N} y\, (\varphi(y)\land  r(y,x)).$$

We remark that ${\sf rFO}_{\rm cnt}^2$ is actually the fragment of FO used in~\citet{BarceloKM0RS20} to characterize GNNs, adapted to multiple relations. It is well-known that ${\sf rFO}_{\rm cnt}^2$ is equivalent to \emph{graded modal logic}~\cite{DBLP:journals/sLogica/Rijke00}. 

The following proposition provides useful translations from ${\sf rFO}_{\rm cnt}^2$ to ${\sf rFO}_{\rm cnt}^3$ and vice versa. Recall from Section~\ref{app:proof-cmpnn-rdwl2}, that given a knowledge graph $G=(V,E,R,\eta)$ where $\eta$ is a pairwise coloring, we define the knowledge graph $G^2 = (V\times V, E', R, c_{\eta})$ where $E' = \{r((u,w), (u,v))\mid r(w,v)\in E, r\in R\}$ and $c_{\eta}$ is the node coloring $c_{\eta}((u,v)) = \eta(u,v)$.

\begin{proposition}
\label{prop:translations-logic}
We have the following:
\begin{enumerate}[leftmargin=.7cm]
\item For all ${\sf rFO}_{\rm cnt}^3$ formula $\varphi(x,y)$, there is a formula $\tilde{\varphi}(x)$ in ${\sf rFO}_{\rm cnt}^2$ such that for all knowledge graph $G=(V,E,R,\eta)$, we have $G,u,v\models \varphi$ if and only if $G^2,(u,v)\models \tilde{\varphi}$.
\item For all formula $\varphi(x)$ in ${\sf rFO}_{\rm cnt}^2$, there is a ${\sf rFO}_{\rm cnt}^3$ formula $\tilde{\varphi}(x,y)$ such that for all knowledge graph $G=(V,E,R,\eta)$, we have $G,u,v\models \tilde{\varphi}$ if and only if $G^2,(u,v)\models \varphi$.
\end{enumerate}
\end{proposition}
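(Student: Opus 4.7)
The plan is to prove both items by structural induction on formulas, with the translation mirroring the identification of pair atoms in $G$ with node atoms in $G^2$ and of $R$-edges in $G$ (with the first coordinate frozen) with $R$-edges in $G^2$. Throughout I would lean on the single observation that, by definition of $E'$, the $r$-predecessors of $(u,v)$ in $G^2$ are exactly the pairs $(u,w)$ such that $r(w,v)\in E$; this fact carries the whole argument.

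For item (1) I would define $\tilde{\varphi}$ recursively. The base case $\varphi(x,y)=a(x,y)$ translates to $\tilde{\varphi}(x)=a(x)$, which is correct because $\eta(u,v)=a$ iff $c_{\eta}((u,v))=a$. Negation and conjunction commute with the translation in the obvious way. The main case is the guarded counting quantifier
\[
\varphi(x,y)\ =\ \exists^{\geq N} z\,\bigl(\psi(x,z)\land r(z,y)\bigr),
\]
which I would translate to
\[
\tilde{\varphi}(x)\ :=\ \exists^{\geq N} y\,\bigl(r(y,x)\land \tilde{\psi}(y)\bigr).
\]
Correctness follows from the neighborhood observation above together with the inductive hypothesis that $G,u,w\models\psi$ iff $G^2,(u,w)\models\tilde{\psi}$, since the $r$-predecessors $(u',w')$ of $(u,v)$ in $G^2$ are forced to have $u'=u$.

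For item (2) I would proceed symmetrically, sending an atom $a(x)$ in graded modal logic to the pair atom $a(x,y)$, and sending a graded modal formula $\exists^{\geq N} y\,(r(y,x)\land\varphi(y))$ to $\exists^{\geq N} z\,(\tilde{\varphi}(x,z)\land r(z,y))$. The correctness argument is essentially item (1) read in reverse: under the bijection $w\mapsto (u,w)$ between the $r$-predecessors of $v$ in $G$ and the $r$-predecessors of $(u,v)$ in $G^2$, counting is manifestly preserved.

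The step I expect to require the most care is checking that the translations land inside the intended fragments and not in a strictly larger logic. The otherwise puzzling restrictions in the definition of ${\sf rGFO}_{\rm cnt}^3$ become visible exactly here: the free variables $x,y$ remain aligned with the first and second coordinate of a pair, and each quantifier binds a fresh $z$ that plays the role of the second coordinate of an intermediate pair $(x,z)$ before being consumed by the $r$-edge into $y$. This matches precisely the way graded modal logic walks along a single incoming edge. Once this alignment is set up and one notes that graded modal logic is exactly ${\sf rGFO}_{\rm cnt}^2$ up to rewriting, the induction is routine bookkeeping and no further case analysis is needed.
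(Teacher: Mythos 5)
Your proposal is correct and follows essentially the same route as the paper: the same syntax-directed translation in both directions (atoms to atoms, Boolean connectives commuting, and the guarded counting quantifier $\exists^{\geq N} z\,(\psi(x,z)\land r(z,y))$ exchanged with the graded modality $\exists^{\geq N} y\,(\tilde{\psi}(y)\land r(y,x))$), verified by structural induction using the key fact that the $r$-predecessors of $(u,v)$ in $G^2$ are exactly the pairs $(u,w)$ with $r(w,v)\in E$. No gaps to flag.
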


\begin{proof}
We start with item (1). We define $\tilde{\varphi}(x)$ by induction on the formula $\varphi(x,y)$:
\begin{enumerate}[leftmargin=.7cm]
\item If $\varphi(x,y)=a(x,y)$ for color $a$, then $\tilde{\varphi}(x)= a(x)$.
\item If $\varphi(x,y)=\neg \psi(x,y)$, then $\tilde{\varphi}(x)=\neg \tilde{\psi}(x)$.
\item If $\varphi(x,y)=\varphi_1(x,y) \land \varphi_2(x,y)$, then $\tilde{\varphi}(x)=\tilde{\varphi}_1(x)\land \tilde{\varphi}_2(x)$.
\item If $\varphi(x,y)=\exists^{\geq N} z\, (\psi(x,z) \land r(z,y))$ then 
$\tilde{\varphi}(x)=\exists^{\geq N} y\,(\tilde{\psi}(y)\land r(y,x))$.
\end{enumerate}

Fix $G=(V,E,R,\eta)$ and $G^2=(V\times V,E',R,c_\eta)$. We show by induction on the formula $\varphi$ that $G,u,v\models \varphi$ if and only if $G^2,(u,v)\models \tilde{\varphi}$. 

For the base case, that is, case (1) above, we have that $\varphi(x,y)=a(x,y)$ and hence $G,u,v\models \varphi$ iff 
$\eta(u,v)=a$ iff $c_\eta((u,v))=a$ iff $G^2,(u,v)\models \tilde{\varphi}$.

Now we consider the inductive case. For case (2) above, we have $\varphi(x,y)=\neg \psi(x,y)$. Then $G,u,v\models \varphi$ iff 
$G,u,v\not\models \psi$ iff $G^2,(u,v)\not\models \tilde{\psi}$ iff
$G^2,(u,v)\models \tilde{\varphi}$.

For case (3), we have $\varphi(x,y)=\varphi_1(x,y) \land \varphi_2(x,y)$. 
Then $G,u,v\models \varphi$ iff 
$G,u,v\models \varphi_1$ and $G,u,v\models \varphi_2$ iff $G^2,(u,v)\models \tilde{\varphi_1}$ and $G^2,(u,v)\models \tilde{\varphi_2}$ iff 
$G^2,(u,v)\models \tilde{\varphi}$.

Finally, for case (4), we have 
$\varphi(x,y)=\exists^{\geq N} z\, (\psi(x,z) \land r(z,y))$. 
Assume $G,u,v\models \varphi$, then there exist at least $N$ nodes $w\in V$
such that $G,u,w\models \psi$ and $r(w,v)\in E$. By the definition of $G^2$, there exist at least $N$ nodes in $G^2$ of the form $(u,w)$ such that 
$G^2,(u,w)\models \tilde{\psi}$ and $r((u,w),(u,v))\in E'$. It follows that 
$G^2,(u,v)\models \tilde{\varphi}$. On the other hand, suppose
$G^2,(u,v)\models \tilde{\varphi}$. Then there exist at least $N$ nodes $(o,o')$ in $G^2$ such that $G,(o,o')\models \tilde{\psi}$ and $r((o,o'),(u,v))\in E'$. By definition of $G^2$ each $(o,o')$ must be of the form $(o,o')=(u,w)$ for some $w\in V$ such that $r(w,v)$. Then there are at least $N$ nodes $w\in V$ 
such that $G,u,w\models \psi$ and $r(w,v)\in E$. It  follows that 
$G,u,v\models \varphi$.

Item (2) is similar. We define $\tilde{\varphi}(x,y)$ by induction on the formula $\varphi(x)$:
\begin{enumerate}[leftmargin=.7cm]
\item If $\varphi(x)=a(x)$ for color $a$, then $\tilde{\varphi}(x,y)= a(x,y)$.
\item If $\varphi(x)=\neg \psi(x)$, then $\tilde{\varphi}(x,y)=\neg \tilde{\psi}(x,y)$.
\item If $\varphi(x)=\varphi_1(x) \land \varphi_2(x)$, then $\tilde{\varphi}(x,y)=\tilde{\varphi}_1(x,y)\land \tilde{\varphi}_2(x,y)$.
\item If $\varphi(x)=\exists^{\geq N} y\, (\psi(y) \land r(y,x))$ then 
$\tilde{\varphi}(x,y)=\exists^{\geq N} z\,(\tilde{\psi}(x,z)\land r(z,y))$.
\end{enumerate}
Following the same inductive argument from item (1), we obtain that $G,u,v\models \tilde{\varphi}$ if and only if $G^2,(u,v)\models \varphi$.
\end{proof}

The following theorem is an adaptation of Theorem 4.2 from~\citet{BarceloKM0RS20}. The main difference with~\citet{BarceloKM0RS20} is that here we need to handle multiple relation types.

\begin{theorem}
\label{app:rmpnn-logic}
A logical classifier is captured by R-MPNNs without global readout if and only if it can be expressed in ${\sf rFO}_{\rm cnt}^2$.
\end{theorem}
\begin{proof}
We start with the backward direction. Let $\varphi(x)$ be a formula in ${\sf rFO}_{\rm cnt}^2$ for relation types $R$ and node colors $\mathcal{C}$. Let $\varphi_1,\dots,\varphi_L$ be an enumeration of the subformulas of $\varphi$ such that if $\varphi_i$ is a subformula of $\varphi_j$, then $i\leq j$. In particular, $\varphi_L=\varphi$. We shall define an R-MPNN without global readout $\mathcal{B}_{\varphi}$ with $L$ layers computing $L$-dimensional features in each layer. The idea is that at layer $\ell\in\{1,\dots,L\}$, the $\ell$-th component of the feature $\vh_v^{(\ell)}$ is computed correctly and corresponds to $1$ if $\varphi_\ell$ is satisfied in node $v$, and $0$ otherwise. We add an additional final layer that simply outputs the last component of the feature vector.

 We use models of R-MPNNs of the following form:
\begin{align*}
\vh_{v}^{(t+1)} = \sigma\Big(\mW\vh_{v}^{(t)} + \sum_{r \in R}\sum_{w \in \mathcal{N}_r(v)}  \mW_r \vh_{w}^{(t)} + \vb \Big),
\end{align*}
where $\mW\in \mathbb{R}^{L\times L}$ is a parameter matrix and $\vb\in \mathbb{R}^L$ is the bias term. As message function $\mes_r$ we use 
$\mes_r(\vh) = \mW_r  \vh$, where $\mW_r\in \mathbb{R}^{L\times L}$ is a parameter matrix . 
For the non-linearity $\sigma$ we use the truncated ReLU function $\sigma(x) = \min(\max(0,x),1)$. The $\ell$-th row of $\mW$ and $\mW_r$, and the $\ell$-th entry of $\vb$ are defined as follows (omitted entries are $0$):
\begin{enumerate}[leftmargin=.7cm]
\item If $\varphi_\ell(x) = a(x)$ for a color $a\in \mathcal{C}$, then $\mW_{\ell\ell} = 1$. 
\item If $\varphi_\ell(x) = \neg\varphi_k(x)$ then $\mW_{\ell k} = -1$, and $b_\ell = 1$. 

\item If $\varphi_\ell(x) = \varphi_j(x)\land \varphi_k(x)$ then $\mW_{\ell j} = 1$, $\mW_{\ell k} = 1$ and $b_\ell = -1$.
\item If $\varphi_\ell(x) = \exists^{\geq N} y\, (\varphi_k(y)\land r(y,x))$ then $(\mW_r)_{\ell k}=1$ and $b_\ell=-N+1$.
\end{enumerate}

Let $G=(V,E,R,c)$ be a knowledge graph with node colors from $\mathcal{C}$.
In order to apply $\mathcal{B}_\varphi$ to $G$, 
we choose initial $L$-dimensional features $\vh_v^{(0)}$ such that $(\vh_v^{(0)})_\ell=1$ if $\varphi_\ell = a(x)$ and $a$ is the color of $v$, and $(\vh_v^{(0)})_\ell=0$ otherwise. In other words, the $L$-dimensional initial feature $\vh_v^{(0)}$ is a one-hot encoding of the color of $v$.
It follows from the same arguments than Proposition 4.1 in ~\citet{BarceloKM0RS20} that for all $\ell\in\{1,\dots,L\}$ we have $(\vh_v^{(t)})_\ell =1 $ if $G,v\models \varphi_\ell$ and $(\vh_v^{(t)})_\ell =0$ otherwise, for all $v\in V$ and $t\in\{\ell,\dots,L\}$.
In particular, after $L$ layers, $\mathcal{B}_\varphi$ calculates 
$\vh_v^{(L)}$ such that $(\vh_v^{(L)})_L = 1$ if $G,v\models \varphi$ and 
$(\vh_v^{(L)})_L = 0$ otherwise. As layer $L+1$ extracts the $L$-th component of the feature vector, the result follows.

For the forward direction, we follow the strategy of Theorem 4.2 from~\citet{BarceloKM0RS20}.
Given  a knowledge graph $G=(V,E,R,c)$ and a number $L\in \mathbb{N}$, we define the \emph{unravelling} of $v\in V$ at depth $L$, denoted ${\sf Unr}_G^L(v)$ is the knowledge graph having:
\begin{itemize}
\item  A node $(v,u_1,\dots, u_i)$ for each directed path $u_i,\dots, u_1,v$ in $G$ of length $i\leq L$.
\item For each $r\in R$, a fact $r((v,u_1,\dots,u_{i}),(v,u_1,\dots,u_{i-1}))$ for all facts $r(u_i, u_{i-1})\in E$ (here $u_0:=v$).
\item Each node $(v,u_1,\dots, u_i)$ is colored with $c(u_i)$, that is, the same color as $u_i$.
\end{itemize}

Note that the notion of directed path is defined in the obvious way, as for directed graphs but ignoring relation types. Note also that 
${\sf Unr}_G^L(v)$ is a tree in the sense that the underlying undirected graph is a tree.

The following proposition is a trivial adaptation of Observation C.3 from~\citet{BarceloKM0RS20}. We write ${\sf Unr}_G^L(v)\simeq {\sf Unr}_{G'}^L(v')$ if there exists an isomorphism $f$ from ${\sf Unr}_G^L(v)$ 
to ${\sf Unr}_{G'}^L(v')$ such that $f(v)=v'$.

\begin{proposition}
\label{prop:unr-wl}
Let $G$ and $G'$ be two knowledge graphs and $v$ and $v'$ be nodes in $G$ and $G'$, respectively. Then, for all $L\in \mathbb{N}$, we have that $\rwl_1^{(L)}(v)$ on $G$ coincides with $\rwl_1^{(L)}(v')$ on $G'$ if and only if ${\sf Unr}_G^L(v)\simeq {\sf Unr}_{G'}^L(v')$.
\end{proposition}

As a consequence of Theorem~\ref{thm:rmpnn-rwl1}, we obtain:
\begin{proposition}
\label{prop:unr-rmpnn}
Let $G$ and $G'$ be two knowledge graphs and $v$ and $v'$ be nodes in $G$ and $G'$, respectively, such that ${\sf Unr}_G^L(v)\simeq {\sf Unr}_{G'}^L(v')$ for all $L\in \mathbb{N}$. Then for any R-MPNN without global readout with $T$ layers, we have that $\vh_{v}^{(T)}$ on $G$ coincides with $\vh_{v'}^{(T)}$ on $G'$.
\end{proposition}

Finally, the following theorem follows from Theorem C.5 in~\citet{BarceloKM0RS20}, which in turn follows from Theorem 2.2 in~\citet{otto19}. The key observation here is that the results from~\citet{otto19} are actually presented for multi-modal logics, that is, multiple relation types.

\begin{theorem}{\cite{otto19}}
\label{thm:otto}
Let $\alpha$ be a unary FO formula over knowledge graphs.
If $\alpha$ is not equivalent to a ${\sf rFO}_{\rm cnt}^2$ formula, then there exist two knowledge graphs $G$ and $G'$, and two nodes $v$ in $G$ and $v'$ in $G'$ such that ${\sf Unr}_G^L(v)\simeq {\sf Unr}_{G'}^L(v')$ for all $L\in \mathbb{N}$ and such that $G,v\models \alpha$ but $G',v'\not\models \alpha$.
\end{theorem}

Now we are ready to obtain the forward direction of the theorem. Suppose that a logical classifier $\alpha$ is captured by an R-MPNN without global readout $\mathcal{B}$ with $T$ layers, and assume by contradiction that $\alpha$ is not equivalent to a ${\sf rFO}_{\rm cnt}^2$ formula. Then we can apply Theorem~\ref{thm:otto} and obtain two knowledge graphs $G$ and $G'$, and two nodes $v$ in $G$ and $v'$ in $G'$ such that ${\sf Unr}_G^L(v)\simeq {\sf Unr}_{G'}^L(v')$ for all $L\in \mathbb{N}$ and such that $G,v\models \alpha$ but $G',v'\not\models \alpha$. Applying Proposition~\ref{prop:unr-rmpnn}, we have that $\vh_{v}^{(T)}$ on $G$ coincides with $\vh_{v'}^{(T)}$ on $G'$, and hence $\mathcal{B}$ classifies  either both $v$ and $v'$ as $\true$ over $G$ and $G'$, respectively, or both  as $\false$. This is a contradiction. 
\end{proof}

Now Theorem~\ref{app:cmpnn-logic} follows easily. 
Let $\alpha$ be a logical binary classifier and suppose it is captured by a C-MPNN without global readout $\mathcal{A}$. By Proposition~\ref{prop:reduction-cmpnn-rmpnn},item (1), we know that $\mathcal{A}$ can be simulated by a R-MPNN without global readout $\mathcal{B}$ over $G^2$. In turn, we can apply Theorem~\ref{app:rmpnn-logic} and obtain a formula $\varphi$ in ${\sf rFO}_{\rm cnt}^2$ equivalent to $\mathcal{B}$. 
Finally, we can apply the translation in Proposition~\ref{prop:translations-logic}, item (2), and obtain  a corresponding formula $\tilde{\varphi}$ in ${\sf rFO}_{\rm cnt}^3$. We claim that 
$\tilde{\varphi}$ captures $\mathcal{A}$. Let $G$ be a knowledge graph and $u,v$ two nodes. We have that $G,u,v\models \tilde{\varphi}$ iff 
$G^2,(u,v)\models \varphi$ iff $\mathcal{B}$ classifies $(u,v)$
as $\true$ over $G^2$ iff $\mathcal{A}$ classifies $(u,v)$
as $\true$ over $G$. 

The other direction is obtained analogously following the reverse translations.

\subsection{Proof of Theorem~\ref{thm:cmpnn-readout-logic}}
\label{app:sec-logics-readout}

We recall the statement of the theorem:

\begin{theoremcopy}{\ref{thm:cmpnn-readout-logic}}
\label{app:cmpnn-readout-logic}
Each logical binary classifier expressible in ${\sf erFO}_{\rm cnt}^3$ can be captured by a C-MPNN.
\end{theoremcopy}

We start by formally defining the logic ${\sf erFO}_{\rm cnt}^3$, which is a simple extension of ${\sf rFO}_{\rm cnt}^3$ from Section~\ref{sec:log}. 
Fix a set of relation types $R$ and a set of pair colors $\mathcal{C}$. Recall we consider knowledge graphs of the form $G=(V,E,R,\eta)$ where $\eta$ is a  mapping assigning colors from $\mathcal{C}$ to pairs of nodes from $V$. 
The logic ${\sf erFO}_{\rm cnt}^3$ contains only binary formulas and it is defined inductively as follows: First, $a(x,y)$ for $a\in\mathcal{C}$, is in ${\sf erFO}_{\rm cnt}^3$. Second, if $\varphi(x,y)$ and $\psi(x,y)$ are in ${\sf erFO}_{\rm cnt}^3$, $N\geq 1$ is a positive integer, and $r\in R$, then the formulas 
\begin{equation*}
\neg\varphi(x,y),\quad \varphi(x,y)\land \psi(x,y),\quad \exists^{\geq N} z\, (\varphi(x,z)\land  r(z,y)), \quad \exists^{\geq N} z\, (\varphi(x,z)\land  \neg r(z,y))
\end{equation*}
are also in ${\sf erFO}_{\rm cnt}^3$. As expected, $a(u,v)$ holds in 
$G=(V,E,R,\eta)$ if $\eta(u,v) = a$, and $\exists^{\geq N} z\, (\varphi(u,z)\land  \ell(z,v))$ holds in $G$, for $\ell\in R \cup \{\neg r\mid r\in R\}$, if there are at least $N$ nodes $w\in V$ for which $\varphi(u,w)$ and $\ell(w,v)$ hold in $G$.

Intuitively, ${\sf erFO}_{\rm cnt}^3$ extends ${\sf rFO}_{\rm cnt}^3$ with \emph{negated modalities}, that is, one can check for non-neighbors of a node. As it turns out, ${\sf erFO}_{\rm cnt}^3$ is strictly more expressive than ${\sf rFO}_{\rm cnt}^3$.

\begin{proposition}
\label{prop:extended-rFO-power}
There is a formula in ${\sf erFO}_{\rm cnt}^3$ that cannot be expressed by any formula in ${\sf rFO}_{\rm cnt}^3$.
\end{proposition}

\begin{proof}
Consider the following two knowledge graphs $G_1$ and $G_2$ over $R = \{ r \}$ and colors $\mathcal{C} = \{\text{Green}\}$. The graph $G_1$
 has two nodes $u$ and $v$ and one edge $r(u,v)$. The graph $G_2$ has three nodes $u$, $v$ and $w$ and one edge $r(u,v)$ (hence $w$ is an isolated node). In both graphs, all pairs are colored \text{Green}. 

 We show first that for every formula $\varphi(x,y)$ in ${\sf rFO}_{\rm cnt}^3$, it is the case that $G_1,u,n\models \varphi$ iff $G_2,u,n\models \varphi$, for $n\in\{u,v\}$. We proceed by induction on the formula. For $\varphi(x,y) = \text{Green}(x,y)$, we have that $G_1,u,n\models \varphi$ and $G_2,u,n\models \varphi$ and hence we are done. Suppose now that $\varphi(x,y) = \neg \psi(x,y)$. Assume $G_1,u,n\models \varphi$ for $n\in \{u,v\}$. Then $G_1,u,n\not\models \psi$, and by inductive hypothesis, we have $G_2,u,n\not\models \psi$ and then $G_2,u,n\models \varphi$. The other direction is analogous. Assume now that $\varphi(x,y) = \psi_1(x,y) \land \psi_2(x,y)$. Suppose $G_1,u,n\models \varphi$ for $n\in \{u,v\}$. Then $G_1,u,n\models \psi_1$ and $G_1,u,n\models \psi_2$, and by inductive hypothesis, $G_2,u,n\models \psi_1$ and $G_2,u,n\models \psi_2$. Then $G_2,u,n\models \varphi$. The other direction is analogous. Finally, suppose that $\varphi(x,y) = \exists^{\geq N} z (\psi(x,z)\land r(z,y))$. Assume that $G_1,u,n\models \varphi$ for $n\in\{u,v\}$. Then there exist at least $N$ nodes $w\in \mathcal{N}_r(n)$ in $G_1$ such that $G_1,u,w\models \psi$. Since $\mathcal{N}_r(u)=\emptyset$, we have $n=v$.
 As $\mathcal{N}_r(v)=\{u\}$, we have $w=u$. 
 Since the neighborhood $\mathcal{N}_r(v)$ is the same in $G_1$ and $G_2$, and by the inductive hypothesis, we have that there are at least $N$ nodes $w\in \mathcal{N}_r(v)$ in $G_2$ such that $G_2,u,w\models \psi$. This implies that $G_2,u,v\models \varphi$. The other direction is analogous.

 Now consider the ${\sf erFO}_{\rm cnt}^3$ formula $\varphi(x,y) = \exists^{\geq 2} z (\text{Green}(x,z)\land \neg r(z,y))$. We claim that there is no ${\sf rFO}_{\rm cnt}^3$ formula equivalent to $\varphi(x,y)$. By contradiction, suppose we have such an equivalent formula $\varphi'$. As shown above, we have $G_1,u,v\models \varphi'$ iff $G_2,u,v\models \varphi'$. On the other hand, by definition, we have that $G_1,u,v\not\models \varphi$ and $G_2,u,v\models \varphi$. This is a contradiction.
\end{proof}

Now we are ready to prove Theorem~\ref{app:cmpnn-readout-logic}. We follow the same strategy as in the proof of the backward direction of Theorem~\ref{app:rmpnn-logic}. 

Let $\varphi(x, y)$ be a formula in ${\sf erFO}_{\rm cnt}^3$, for relation types $R$ and pair colors $\mathcal{C}$. Let $\varphi_1,\dots,\varphi_L$ be an enumeration of the subformulas of $\varphi$ such that if $\varphi_i$ is a subformula of $\varphi_j$, then $i\leq j$. In particular, $\varphi_L=\varphi$. We will construct a C-MPNN $\mathcal{A}_{\varphi}$ with $L$ layers computing $L$-dimensional features in each layer. At layer $\ell\in\{1,\dots,L\}$, the $\ell$-th component of the feature $\vh_{v|u,q}^{(\ell)}$ will correspond to $1$ if $\varphi_\ell$ is satisfied on $(u,v)$, and $0$ otherwise. The query relation $q$ plays no role in the construction, that is, for any possible $q\in R$ the output of $\mathcal{A}_{\varphi}$ is the same. Hence, for simplicity, in the remaining of the proof we shall write $\vh_{v|u}^{(t)}$ instead of $\vh_{v|u,q}^{(t)}$. We add an additional final layer that simply outputs the last component of the feature vector.

 We use models of C-MPNNs of the following form:
\begin{align*}
\vh_{v|u}^{(t+1)} = \sigma\Big(\mW_0\vh_{v}^{(t)} + \sum_{r \in R}\sum_{w \in \mathcal{N}_r(v)}  \mW_r \vh_{w|u}^{(t)} + \mW_1 \sum_{w \in V}  \vh_{w|u}^{(t)} + \vb \Big),
\end{align*}
where $\mW_0, \mW_1\in \mathbb{R}^{L\times L}$ are parameter matrices and $\vb\in \mathbb{R}^L$ is the bias term. As message function $\mes_r$ we use 
$\mes_r(\vh) = \mW_r  \vh$, where $\mW_r\in \mathbb{R}^{L\times L}$ is a parameter matrix . 
For the non-linearity $\sigma$ we use the truncated ReLU function $\sigma(x) = \min(\max(0,x),1)$. The $\ell$-th row of $\mW_0$, $\mW_1$ and $\mW_r$, and the $\ell$-th entry of $\vb$ are defined as follows (omitted entries are $0$):
\begin{enumerate}[leftmargin=.7cm]
\item If $\varphi_\ell(x,y) = a(x,y)$ for a color $a\in \mathcal{C}$, then $(\mW_0)_{\ell\ell} = 1$. 
\item If $\varphi_\ell(x,y) = \neg\varphi_k(x,y)$ then $(\mW_0)_{\ell k} = -1$, and $b_\ell = 1$. 

\item If $\varphi_\ell(x,y) = \varphi_j(x,y)\land \varphi_k(x,y)$ then $(\mW_0)_{\ell j} = 1$, $(\mW_0)_{\ell k} = 1$ and $b_\ell = -1$.
\item If $\varphi_\ell(x,y) = \exists^{\geq N} z\, (\varphi_k(x,z)\land r(z,y))$ then $(\mW_r)_{\ell k}=1$ and $b_\ell=-N+1$.
\item If $\varphi_\ell(x,y) = \exists^{\geq N} z\, (\varphi_k(x,z)\land \neg r(z,y))$ then $(\mW_r)_{\ell k}=-1$, $(\mW_1)_{\ell k}=1$ and $b_\ell=-N+1$.
\end{enumerate}

Let $G=(V,E,R,\eta)$ be a knowledge graph with pair colors from $\mathcal{C}$.
In order to apply $\mathcal{A}_\varphi$ to $G$, 
we choose the initialization {\init} such that $L$-dimensional initial features $\vh_{v|u}^{(0)}$ satisfy $(\vh_{v|u}^{(0)})_\ell=1$ if $\varphi_\ell = a(x,y)$ and $\eta(u,v) = a$ and $(\vh_{v|u}^{(0)})_\ell=0$ otherwise. That is, the $L$-dimensional initial feature $\vh_{v|u}^{(0)}$ is a one-hot encoding of the color of $(u,v)$.
Using the same arguments as in the proof of Theorem~\ref{app:rmpnn-logic}, we have $(\vh_{v|u}^{(t)})_\ell =1 $ if $G,u,v\models \varphi_\ell$ and $(\vh_{v|u}^{(t)})_\ell =0$ otherwise, for all $u,v\in V$ and $t\in\{\ell,\dots,L\}$.
In particular, after $L$ layers, $\mathcal{A}_\varphi$ calculates 
$\vh_{v|u}^{(L)}$ such that $(\vh_{v|u}^{(L)})_L = 1$ if $G,u,v\models \varphi$ and 
$(\vh_{v|u}^{(L)})_L = 0$ otherwise. As layer $L+1$ extracts the $L$-th component of the feature vector, the result follows.

\begin{remark}
We note that in~\citet{BarceloKM0RS20}, it is shown that a more expressive graded modal logic (with more expressive modalities) can be captured by R-MPNNs in the context of single-relation graphs. It is by no means obvious that an adaption of this logic to the multi-relational case is captured by our C-MPNNs. We leave as an open problem to find more expressive logics that can be captured by C-MPNNs. 
\end{remark}

\subsection{Proofs of the results from \Cref{sec:beyond-rawl}}
\label{sec:proof-in-appendix}

First, recall the definition of $\rdwl_2$. Given a knowledge graph $G = (V,E,R,c,\eta)$, we have 
\begin{align*}
&\rdwl_{2}^{(0)}(u,v)  = \eta(u,v), \\
   &\rdwl_{2}^{(t+1)}(u,v) = {\tau}\big(\rdwl_{2}^{(t)}(u,v), \llbrace (\rdwl_{2}^{(t)}(u,w), r)\mid w\in \mathcal{N}_r(v), r \in R)\rrbrace\big)
\end{align*}

Note that $\rdwl_2$, and hence C-MPNNs, are \emph{one-directional}: the neighborhood $\mathcal{N}_r(v)$ only considers facts in one direction, in this case, \emph{from} neighbors \emph{to} $v$. Hence, a natural extension is to consider \emph{bi-directional} neighborhoods. 
Fortunately, we can define this extension by simply applying the same test $\rdwl_1$ to knowledge graphs extended with \emph{inverse} relations. We formalize this below.

For a test ${\sf T}$, we sometimes write ${\sf T}(G,u,v)$, or ${\sf T}(G,v)$ in case of unary tests, to emphasize that the test is applied over $G$, and ${\sf T}(G)$ for the node/pairwise coloring given by the test.
Let $G=(V,E,R,c,\eta)$ be a knowledge graph. We define its \emph{augmented knowledge graph} to be $G^+=(V,E^+,R^+,c,\eta)$, where $R^+$ is the disjoint union of $R$ and $\{r^-\mid r\in R\}$, and 
$$E^+ = E \cup \{r^{-}(v,u)\mid r(u,v)\in E, u\neq v\}.$$
We define the \emph{augmented relational asymmetric local 2-WL} test on $G$, denoted by $\rdwl_2^+$, as 
$${\rdwl_2^+}^{(t)}(G,u,v)=\rdwl_2^{(t)}(G^+,u,v),$$
for all $t\geq 0$ and $u,v\in V$. 
As we show below, ${\rdwl_2^+}$ is strictly more powerful than ${\rdwl_2}$.

\begin{proposition}
\label{prop:rawl-vs-rawl+} 
The following statements hold:
\begin{enumerate}[leftmargin=.7cm]
\item For all $t\geq 0$ and all knowledge graphs $G$, we have ${\rdwl_2^+}^{(t)}(G)\preceq {\rawl_2}^{(t)}(G)$. 
\item There is a knowledge graph $G$ and nodes $u,v,u',v'$ such that ${\rdwl_2^+}^{(1)}(G,u,v)\neq {\rdwl_2^+}^{(1)}(G,u',v')$ but ${\rdwl_2}^{(t)}(G,u,v) = {\rdwl_2}^{(t)}(G,u',v')$ for all $t\geq 1$.
\end{enumerate}
\end{proposition}

\begin{proof}

To prove $\rdwl_2^+(G) \preceq \rdwl_2(G)$ first, we consider induction on iteration $t$. The base case for $t = 0$ is trivial by the assumption, so it is enough to consider the inductive step. We need to show that for some $k$,
\begin{equation*}
    {\rdwl_2^+}^{(k+1)}(u,v) ={\rdwl_2^+}^{(k+1)}(u',v') \implies \rdwl_2^{(k+1)}(u,v) =\rdwl_2^{(k+1)}(u',v')
\end{equation*}
By the definition of ${\rdwl_2^+}^{(k+1)}(u,v)$, ${\rdwl_2^+}^{(k+1)}(u',v')$ and the injectivity of $\tau$, it holds that
\begin{align*}
    {\rdwl_2^+}^{(k)}(u,v) &={\rdwl_2^+}^{(k)}(u',v') \\ 
    \llbrace ({\rdwl_2^+}^{(k)}(u,x), {r})\mid x \in \mathcal{N}^{+}_{r}(v), r \in R^{+}\rrbrace &=   \llbrace ({\rdwl_2^+}^{(k)}(u',x'), {r'})\mid x' \in \mathcal{N}^{+}_{r'}(v'), r' \in R^{+}\rrbrace
\end{align*}
Because all inverse relations $r^-$ are newly introduced, so it is impossible to be mixed with $r$, we can split the second equation into the following equations:
\begin{align*}
\llbrace ({\rdwl_2^+}^{(k)}(u,x), {r})\mid x \in \mathcal{N}_{r}(v), r \in R\rrbrace &=   \llbrace ({\rdwl_2^+}^{(k)}(u',x'), {r'})\mid x'\in \mathcal{N}_{r'}(v'), r' \in R\rrbrace\\ 
\llbrace ({\rdwl_2^+}^{(k)}(u,y), {r^{-}})\mid y \in \mathcal{N}_{r^-}(v), r \in R \rrbrace &=   \llbrace ({\rdwl_2^+}^{(k)}(u',y'), {r'^{-}})\mid y' \in \mathcal{N}_{r'^-}(v'), r \in R\rrbrace
\end{align*}
By the inductive hypothesis and unpacking the first equation, we can further imply that
\begin{equation*}
     \llbrace ({\rdwl}_2^{(k)}(u,x), {r})\mid x \in \mathcal{N}_{r}(v), r \in R\rrbrace =   \llbrace ({\rdwl}_2^{(k)}(u',x'), {r'})\mid x'\in \mathcal{N}_{r'}(v'), r' \in R\rrbrace
\end{equation*}
Thus, By definition of ${\rdwl}_2^{(k+1)}(u,v)$, ${\rdwl}_2^{(k+1)}(u',v') $ it holds that
\begin{equation*}
    {\rdwl}_2^{(k+1)}(u,v) ={\rdwl}_2^{(k+1)}(u',v') 
\end{equation*}
For the counterexample, we consider a relational graph with two types of relation $G'=(V',E',R',c,\eta)$ such that $V' = \{u,v,v'\}$, $E' = \{r_1(v,u),r_2(v',u)\}$, and $R' = \{r_1,r_2\}$.
Let the initial pairwise labeling $\eta$ for node pairs $(u,v)$ and $(u',v)$ satisfy ${\rdwl_2^+}^{(0)}(u,v) = {\rdwl_2^+}^{(0)}(u,v') $ and $\rdwl_2^{(0)}(u,v) = \rdwl_2^{(0)}(u,v') $. For such graph $G'$, we consider node pair $(u,v)$ and $(u,v')$.  For $\rdwl_2^{(t)}$ where $t \geq 0$, we show by induction that $\rdwl_2^{(t)}(u,v) = \rdwl_2^{(t)}(u,v')$. The base case is trivial by assumption. The inductive step shows that by the inductive hypothesis,
\begin{align*}
    \rdwl_2^{(t+1)}(u,v) &= \tau(\rdwl_2^{(t)}(u,v),\llbrace \rrbrace) \\
     &= \tau(\rdwl_2^{(t)}(u,v'),\llbrace \rrbrace) \\
     &= \rdwl_2^{(t+1)}(u,v')
\end{align*}
On the other hand, we have
\begin{align*}
    {\rdwl_2^+}^{(1)}(u,v) &= \tau({\rdwl_2^+}^{(0)}(u,v),\llbrace ({\rdwl_2^+}^{(0)}(x,v),r_1^-) \rrbrace) \\
   \neq {\rdwl_2^+}^{(1)}(u,v') &= \tau({\rdwl_2^+}^{(0)}(u,v'),\llbrace ({\rdwl_2^+}^{(0)}(x,v'),r_2^-))\rrbrace) 
\end{align*}
\end{proof}

We can also extend C-MPNNs with bi-directionality in an obvious way, obtaining \emph{augmented} C-MPNNs. By applying Theorem~\ref{thm:cmpnn-rdwl2} to the augmented graph $G^+$, we obtain the equivalence between augmented C-MPNNs and the test ${\rdwl_2^+}$ directly. In turn, Proposition~\ref{prop:rawl-vs-rawl+} implies that augmented C-MPNNs are strictly more powerful that C-MPNNs in distinguishing nodes in a graph.

Recall the definition of $\rwl_2$. Given a knowledge graph $G = (V,E,R,c,\eta)$, we have 
\begin{align*}
\rwl_{2}^{(t+1)}(u,v) = & {\tau}\big(\rwl_{2}^{(t)}(u,v), \llbrace (\rwl_{2}^{(t)}(w,v), r)\mid w\in \mathcal{N}_r(u), r \in R)\rrbrace, \\ & \llbrace (\rwl_{2}^{(t)}(u,w), r)\mid w\in \mathcal{N}_r(v), r \in R)\rrbrace\big)
\end{align*} 

We define the augmented version $\rwl_2^+$ in an obvious way, that is,  
\begin{align*}
   {\rwl_2^+}^{(t)}(G,u,v)=\rwl_2^{(t)}(G^+,u,v),
\end{align*}
for all $t\geq 0$ and $u,v\in V$.

We will drop the notation of $G$ during the proof when the context is clear for simplicity.

\begin{proposition}
\label{prop:lwl2-vs-dlwl2-app}
For all $t\geq 0$ and all knowledge graph $G$, let $\rwl_2^{(0)}(G) \equiv \rdwl_2^{(0)}(G)$. The following statements hold:
\begin{enumerate}[leftmargin=.7cm]
\item For every $t >0$, it holds that $\rwl_2^{(t)}(G) \preceq \rdwl_2^{(t)}(G).$
\item There is a knowledge graph $G$ and pair of nodes $(u,v)$ and $(u',v')$ such that 
$\rdwl_2^{(t)}(G,u,v) = \rdwl_2^{(t)}(G,u',v')$ for all $t\geq 0$ but ${\rwl}_2^{(1)}(G,u,v) \neq {\rwl}_2^{(1)}(G,u',v').$
\end{enumerate}
\end{proposition}
\begin{proof}
First, we show $\rwl_2^{(t)}(G) \preceq \rdwl_2^{(t)}(G)$ and proceed by induction on $t$. The base case for $t = 0$ is trivial by assumption. 
For the inductive step, given that we have  $\rwl_2^{(k+1)}(u,v) = \rwl_2^{(k+1)}(u',v')$, by the definition of $\rwl_2^{(k+1)}(u,v)$ and $\rwl_2^{(k+1)}(u',v')$, and as $\tau$ is injective, it holds that 
\begin{align*}   
        \rwl_2^{(k)}(u,v) &=\rwl_2^{(k)}(u',v') \\ 
        \llbrace (\rwl_2^{(k)}(x,v), r)\mid x \in \mathcal{N}_{r}(u), r \in R \rrbrace &= \llbrace (\rwl_2^{(k)}(x',v'), r')\mid x' \in \mathcal{N}_{r'}(u'), r' \in R \rrbrace \\
        \llbrace (\rwl_2^{(k)}(u,x), {r})\mid x \in \mathcal{N}_{r}(v), r \in R\rrbrace &=   \llbrace (\rwl_2^{(k)}(u',x'), {r'})\mid x' \in \mathcal{N}_{r'}(v'), r' \in R\rrbrace
\end{align*}
Now, by the inductive hypothesis we have $\rdwl_2^{(k)}(u,v) =\rdwl_2^{(k)}(u',v') $. We can further transform the last equation by applying the inductive hypothesis again after unpacking the multiset. This results in 
\begin{equation*}
    \llbrace (\rdwl_2^{(k)}(u,x), {r})\mid x \in \mathcal{N}_{r}(v), r \in R\rrbrace =   \llbrace (\rdwl_2^{(k)}(u',x'), {r'})\mid x' \in \mathcal{N}_{r'}(v'), r' \in R\rrbrace 
\end{equation*}
Thus, it holds that $\rdwl_2^{(k+1)}(u,v) =\rdwl_2^{(k+1)}(u',v')$ by definition of $\rdwl_2^{(k+1)}(u,v)$ and $\rdwl_2^{(k+1)}(u',v')$.

For counter-example, we show the case for $t \geq 0$. 
Consider a relational graph $G'=(V',E',R,c,\eta)$ such that $V' = \{u,u',v,x\}$, $E' = \{r(x,u')\}$, and $R = \{r\}$ with the initial labeling $\eta$ for node pairs $(u,v)$ and $(u',v)$ satisfies $\rwl_2^{(0)}(u,v) = \rwl_2^{(0)}(u',v) $ and $\rdwl_2^{(0)}(u,v) = \rdwl_2^{(0)}(u',v) $. For such graph $G'$, we consider node pair $(u,v)$ and $(u',v)$.
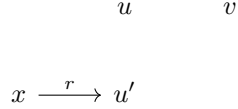
\begin{figure}[!t]
\[\begin{tikzcd}
	& u & v \\
	x & {u'}
	\arrow["r", from=2-1, to=2-2]
\end{tikzcd}\]
    \caption{Graph $G'$ as the counter-example in Proposition \ref{prop:lwl2-vs-dlwl2-app}. It is also shown in Proposition \ref{prop:dlwlsym-lwlasym-app} to prove ${\rdwl_2^+}^{(t)}(u,v) = {\rdwl_2^+}^{(t)}(u',v')$ for all $t \geq 0$ but ${\rwl}_2^{(1)}(u,v) \neq {\rwl}_2^{(1)}(u',v').$}
\end{figure}
For $\rdwl_2^{(t)}$ where $t \geq 0$, we show by induction that $\rdwl_2^{(t)}(u,v) = \rdwl_2^{(t)}(u',v)$. The base case is trivial by assumption. The inductive step shows that by the inductive hypothesis:
\begin{align*}
    \rdwl_2^{(t+1)}(u,v) &= \tau(\rdwl_2^{(t)}(u,v),\llbrace \rrbrace) \\
     &= \tau(\rdwl_2^{(t)}(u',v),\llbrace \rrbrace) \\
     &= \rdwl_2^{(t+1)}(u',v)
\end{align*}
On the other hand, we have
\begin{align*}
    \rwl_2^{(1)}(u,v) &= \tau(\rwl_2^{(0)}(u,v),\llbrace \rrbrace,\llbrace \rrbrace) \\
   \neq \rwl_2^{(1)}(u,v') &= \tau(\rwl_2^{(0)}(u,v'),\llbrace (\rwl_2^{(0)}(x,v'),r))\rrbrace, \llbrace \rrbrace) 
\end{align*}
\end{proof}

\begin{proposition}
\label{prop:dlwl-2-symm-app}
For all $t\geq 0$ and all knowledge graph $G$, let ${\rwl_2^+}^{(0)}(G) \equiv \rwl_2^{(0)}(G)$. The following statements hold:  
\begin{enumerate}[leftmargin=.7cm]
    \item For every $t >0$, ${\rwl_2^+}^{(t)}(G) \preceq \rwl_2^{(t)}(G)$
    \item There is a knowledge graph $G''$ and pair of nodes $(u,v)$ and $(u',v')$ such that $\rwl_2^{(t)}(G'',u,v) = \rwl_2^{(t)}(G'',u',v')$ for all $t \geq 0$ but ${\rwl}_2^{+(1)}(G'',u,v) \neq {\rwl}_2^{+(1)}(G'',u',v').$
\end{enumerate}
\end{proposition}

\begin{proof}

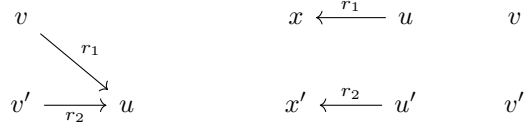
\begin{figure}[!t]
\centering
\[\begin{tikzcd}
	v &&& x & u & v \\
	{v'} & u && {x'} & {u'} & {v'}
	\arrow["{r_2}"', from=2-1, to=2-2]
	\arrow["{r_1}", from=1-1, to=2-2]
	\arrow["{r_1}"', from=1-5, to=1-4]
	\arrow["{r_2}"', from=2-5, to=2-4]
\end{tikzcd}\]
\caption{Two counterexamples shown in Proposition \ref{prop:rawl-vs-rawl+} and \ref{prop:dlwl-2-symm-app}. The left graph $G'$ is to show $\rdwl_2^{(t)}(u,v) = \rdwl_2^{(t)}(u',v')$ for all $t \geq 0$ but ${\rdwl}_2^{+(1)}(u,v) \neq {\rdwl}_2^{+(1)}(u',v')$, whereas the right graph $G''$ is to show $\rwl_2^{(t)}(u,v) = \rwl_2^{(t)}(u',v')$ for all $t \geq 0$ but ${\rwl}_2^{+(1)}(u,v) \neq {\rwl}_2^{+(1)}(u',v').$}
\end{figure}
As before, we first prove $\rwl_2^{+}(G) \preceq \rwl_2(G)$ by induction on iteration $t$. The base case for $t = 0$ is trivial by the assumption. By the inductive hypothesis, 
\begin{equation*}
    {\rwl_2^+}^{(k)}(u,v) ={\rwl}_2^{+(k)}(u',v') \implies \rwl_2^{(k)}(u,v) =\rwl_2^{(k)}(u',v')
\end{equation*} 
for some $k$. Thus, assuming ${\rwl_2^+}^{(k+1)}(u,v) ={\rwl}_2^{+(k+1)}(u',v')$, by the definition of ${\rwl_2^+}^{(k+1)}(u,v)$ and ${\rwl}_2^{+(k+1)}(u',v')$ and by the injectivity of $\tau$, it holds that
\begin{align*}
    {\rwl_2^+}^{(k)}(u,v) &={\rwl_2^+}^{(k)}(u',v') \\
    \llbrace ({\rwl_2^+}^{(k)}(x,v), r)\mid x \in \mathcal{N}_{r}^+(u), r \in R^+ \rrbrace &= \llbrace ({\rwl_2^+}^{(k)}(x',v'), r')\mid x' \in \mathcal{N}_{r'}^+(u'), r' \in R^+ \rrbrace\\ 
    \llbrace ({\rwl_2^+}^{(k)}(u,x), {r})\mid x \in \mathcal{N}_{r}^+(v), r \in R^+\rrbrace &=   \llbrace ({\rwl_2^+}^{(k)}(u',x'), {r'})\mid x' \in \mathcal{N}_{r'}^+(v'), r' \in R^+\rrbrace
\end{align*}
By the similar argument in proving $\rdwl_2^+ \preceq \rdwl_2$, we can split the multiset into two equations by decomposing $\mathcal{N}_r^{+}(u) =  \mathcal{N}_r(u)\cup \mathcal{N}_{r^-}(u)$ and $\mathcal{N}_r^{+}(v) =  \mathcal{N}_r(v)\cup \mathcal{N}_{r^-}(v)$. Thus, we have
\begin{align*}
    \llbrace ({\rwl_2^+}^{(k)}(x,v), {r})\mid  x\in \mathcal{N}_{r}(u), r \in R\rrbrace &=   \llbrace ({\rwl_2^+}^{(k)}(x',v'), {r'})\mid  x'\in \mathcal{N}_{r'}(u'), r' \in R\rrbrace\\
    \llbrace ({\rwl_2^+}^{(k)}(y,v), {r^{-}})\mid  y \in \mathcal{N}_{r^-}(u), r \in R \rrbrace &=   \llbrace ({\rwl_2^+}^{(k)}(y',v'), {r'^{-}})\mid  y' \in \mathcal{N}_{r'^-}(u'), r' \in R\rrbrace \\
    \llbrace ({\rwl_2^+}^{(k)}(u,x), {r})\mid  x\in \mathcal{N}_{r}(v), r \in R\rrbrace &=   \llbrace ({\rwl_2^+}^{(k)}(u',x'), {r'})\mid  x'\in \mathcal{N}_{r'}(v'), r' \in R\rrbrace\\
    \llbrace ({\rwl_2^+}^{(k)}(u,y), {r^{-}})\mid  y \in \mathcal{N}_{r^-}(v), r \in R \rrbrace &=   \llbrace ({\rwl_2^+}^{(k)}(u',y'), {r'^{-}})\mid  y' \in \mathcal{N}_{r'^-}(v'), r' \in R\rrbrace
\end{align*}
By the inductive hypothesis and unpacking the first and the third equations, we can further imply that
\begin{align*}
        {\rwl}_2^{(k)}(u,v) &={\rwl}_2^{(k)}(u',v') \\ 
        \llbrace (\rwl_2^{(k)}(x,v), {r})\mid x\in \mathcal{N}_{r}(u), r \in R\rrbrace &=   \llbrace (\rwl_2^{(k)}(x',v'), {r'})\mid x'\in \mathcal{N}_{r'}(u'), r' \in R \rrbrace\\ 
        \llbrace (\rwl_2^{(k)}(u,x), {r})\mid x\in \mathcal{N}_{r}(v), r \in R\rrbrace &=   \llbrace (\rwl_2^{(k)}(u',x'), {r'})\mid x'\in \mathcal{N}_{r'}(v'), r' \in R\rrbrace
\end{align*}
This would results in $\rwl_2^{(k+1)}(u,v) =\rwl_2^{(k+1)}(u',v') $ by the definition of $\rwl_2^{(k+1)}(u,v)$ and $\rwl_2^{(k+1)}(u',v') $

For the counter-example, we consider a relational graph $G''=(V'',E'',R'',c,\eta)$ such that $V'' = \{u,u',v,v',x,x'\}$, $ E''= \{r_1(u,x),r_2(u',x')\}$, and $R''=\{r_1,r_2\}$ . 
For such graph $G''$, we set the initial labeling $\eta$ for node pairs $(u,v)$ and $(u',v')$ to satisfy ${\rwl_2^+}^{(0)}(u,v) = {\rwl_2^+}^{(0)}(u',v') $ and $\rwl_2^{(0)}(u,v) = \rwl_2^{(0)}(u',v') $. For such graph $G''$, we consider node pair $(u,v)$ and $(u',v')$. 

Consider $\rwl_2^{(t)}$ where $t \geq 0$, we show by induction that $\rwl_2^{(t)}(u,v) = \rwl_2^{(t)}(u',v')$. The base case is trivial by assumption. The inductive step shows that by inductive hypothesis,
\begin{align*}
    \rwl_2^{(t+1)}(u,v) &= \tau(\rwl_2^{(t)}(u,v),\llbrace \rrbrace , \llbrace \rrbrace) \\
     &= \tau(\rwl_2^{(t)}(u',v'),\llbrace \rrbrace,\llbrace \rrbrace) \\
     &= \rwl_2^{(t+1)}(u',v')
\end{align*}
On the other hand, we have
\begin{align*}
    {\rwl_2^+}^{(1)}(u,v) &= \tau({\rwl_2^+}^{(0)}(u,v),\llbrace ({\rwl_2^+}^{(0)}(x,v),r_1^-) \rrbrace , \llbrace \rrbrace) \\
   \neq {\rwl_2^+}^{(1)}(u',v') &= \tau({\rwl_2^+}^{(0)}(u',v'),\llbrace ({\rwl_2^+}^{(0)}(x',v'),r_2^-))\rrbrace, \llbrace \rrbrace) 
\end{align*}
\end{proof}

\begin{proposition}
\label{prop:dlwlsym-lwlasym-app}
For all $t\geq 0$ and all knowledge graph $G$, let $\rwl_2^{(0)}(G) \equiv {\rdwl_2^+}^{(0)}(G)$, then the following statement holds:
\begin{enumerate}[leftmargin=.7cm]
    \item There is a knowledge graph $G$ and pair of nodes $(u,v)$ and $(u',v')$ such that ${\rwl_2}^{(t)}(G,u,v) = {\rwl_2}^{(t)}(G,u',v')$ for all $t \geq 0$ but ${\rdwl_2^+}^{(1)}(G,u,v) \neq {\rdwl_2^+}^{(1)}(G,u',v').$
    \item There is a knowledge graph $G'$ and pair of nodes $(u,v)$ and $(u',v')$ such that ${\rdwl_2^+}^{(t)}(G',u,v) = {\rdwl_2^+}^{(t)}(G',u',v')$ for all $t \geq 0$ but ${\rwl}_2^{(1)}(G',u,v) \neq {\rwl}_2^{(1)}(G',u',v').$
\end{enumerate}
\end{proposition}

\begin{proof}
    To show $\rwl_2(G)$ does not refine $\rdwl_2^+(G)$, we consider a relational graph $G=(V,E,R,c,\eta)$ such that $V= \{u,u',v,v',x,x'\}$, $ E = \{r_1(v,x),r_2(v',x')\}$, and $R = \{r_1,r_2\}$ . 
We let the initial labeling $\eta$ for node pairs $(u,v)$ and $(u',v')$ to satisfy ${\rwl}_2^{(0)}(u,v) = {\rwl}_2^{(0)}(u',v') $ and $\rwl_2^{(0)}(u,v) = \rwl_2^{(0)}(u',v') $. For such graph $G$, we consider node pair $(u,v)$ and $(u',v')$. For $\rwl_2^{(t)}$ where $t \geq 0$, we show by induction that $\rwl_2^{(t)}(u,v) = \rwl_2^{(t)}(u',v')$. The base case is trivial by assumption. The inductive step shows that by the inductive hypothesis,
\begin{align*}
    \rwl_2^{(t+1)}(u,v) &= \tau(\rwl_2^{(t)}(u,v),\llbrace \rrbrace , \llbrace \rrbrace) \\
     &= \tau(\rwl_2^{(t)}(u',v'),\llbrace \rrbrace,\llbrace \rrbrace) \\
     &= \rwl_2^{(t+1)}(u',v')
\end{align*}
On the other hand, we have
\begin{align*}
    {\rdwl_2^+}^{(1)}(u,v) &= \tau({\rdwl_2^+}^{(0)}(u,v), \llbrace ({\rdwl_2^+}^{(0)}(u,x),r_1^-) \rrbrace ) \\
   \neq {\rdwl_2^+}^{(1)}(u,v') &= \tau({\rdwl_2^+}^{(0)}(u',v'), \llbrace ({\rdwl_2^+}^{(0)}(u',x'),r_2^-)\rrbrace) 
\end{align*}
\begin{figure}[t!]
\centering
\[\begin{tikzcd}
	u & v & x \\
	{u'} & {v'} & {x'}
	\arrow["{r_1}"', from=1-2, to=1-3]
	\arrow["{r_2}"', from=2-2, to=2-3]
\end{tikzcd}\]
\caption{Counterexample shown in Proposition \ref{prop:dlwlsym-lwlasym-app} to prove ${\rwl_2}^{(t)}(u,v) = {\rwl_2}^{(t)}(u',v')$ for all $t \geq 0$ but ${\rdwl_2^+}^{(1)}(u,v) \neq {\rdwl_2^+}^{(1)}(u',v').$}
\end{figure}
Finally, to show $\rdwl_2^+$ does not refine $\rwl_2$, we demonstrate the case for $t \geq 0$. Consider a relational graph $G'=(V',E',R',c,\eta)$ such that $V' = \{u,u',v,x\}$, $E' = \{r(x,u')\}$, and $R'=\{r\}$. We let the initial labeling $\eta$ for node pairs $(u,v)$ and $(u',v)$ satisfy $\rwl_2^{(0)}(u,v) = \rwl_2^{(0)}(u',v) $ and ${\rdwl_2^+}^{(0)}(u,v) = {\rdwl_2^+}^{(0)}(u',v)$. For such graph $G'$, we consider node pair $(u,v)$ and $(u',v)$. For ${\rdwl_2^+}^{(t)}$ where $t \geq 0$, and show by induction that ${\rdwl_2^+}^{(t)}(u,v) = {\rdwl_2^+}^{(t)}(u',v)$. The base case is trivial by assumption. The inductive step shows that by the inductive hypothesis,
\begin{align*}
    {\rdwl_2^+}^{(t+1)}(u,v) &= \tau({\rdwl_2^+}^{(t)}(u,v),\llbrace \rrbrace) \\
     &= \tau({\rdwl_2^+}^{(t)}(u',v),\llbrace \rrbrace) \\
     &= {\rdwl_2^+}^{(t+1)}(u',v)
\end{align*}
On the other hand, we have
\begin{align*}
    \rwl_2^{(1)}(u,v) &= \tau(\rwl_2^{(0)}(u,v),\llbrace \rrbrace,\llbrace \rrbrace) \\
   \neq \rwl_2^{(1)}(u,v') &= \tau(\rwl_2^{(0)}(u,v'),\llbrace (\rwl_2^{(0)}(x,v'),r))\rrbrace, \llbrace \rrbrace) 
\end{align*}
Note that the counter-example graph $G'$ here is identical to the one in Proposition \ref{prop:lwl2-vs-dlwl2-app}.
\end{proof}

\section{Runtime analysis}
\label{app:complexity}

In this section, we present the asymptotic time complexity of R-MPNN, $\cmpnn$, and a labeling-trick-based relation prediction model GraIL \cite{GraIL} in \cref{table:model-complexities}, taken from \citet{nbfnet}. We also consider $2$-RN, the neural architecture from \citet{wl-relational} corresponding to $\rwl_2$ to showcase an example of higher-order GNNs. We present both the complexity of a single forward pass of the model as well as the amortized complexity of a single query to better reflect the practical use cases and the advantages stemming from parallelizability. To avoid confusion, we take CompGCN \cite{vashishth2020compositionbased} as the example of R-MPNN, and \emph{basic} $\cmpnn$, since specific architecture choices will impact the asymptotic complexity of the model. 

\textbf{Notation.} Given a knowledge graph $G = (V,E,R,c)$, we have that $|V|, |E|, |R|$ represents the size of vertices, edges, and relation types, respectively. $d$ is the hidden dimension of the model, and $T$ is the number of layers in the model. 

\textbf{Runtime of C-MPNNs.} Let us note that C-MPNNs subsume NBFNets. Indeed, if we consider C-MPNNs without readout and set the history function as $f(t) = 0$, we obtain (a slight generalization of) NBFNets, as stated in the paper. In terms of the runtime, C-MPNNs and NBFNets are comparable: even though the readout component of C-MPNNs incurs a linear overhead, this is dominated by other factors. Thus, as shown in Lemma 2 of \citet{nbfnet}, the total complexity for a forward pass is $\mathcal{O}(T(|E|d + |V|d^2))$ to compute $|V|$ queries at the same time, resulting an amortized complexity of $\mathcal{O}(T(\frac{|E|d}{|V|} + d^2))$. See \citet{nbfnet} for detailed discussion and proof.

\textbf{Comparison with R-MPNNs.} As directly carrying out relational message passing in R-MPNN for each triplet is costly, a common way to carry out link prediction task is to first compute all node representations and then use a binary decoder to perform link prediction. This would result in $\mathcal{O}(T(|E|d + |V|d^2))$ for a complete forward pass. Consequently, we obtain $|R||V|^2$ triplets, and since passing through binary decoder has a complexity of $\mathcal{O}(d)$, we have that the amortized complexity for each query is $\mathcal{O}(T(\frac{|E|d}{|R||V|^2} + \frac{d^2}{|R||V|} + d))$. Although the complexity of a forward pass for R-MPNN and $\cmpnn$ are the same, we compute $|R||V|^2$ queries with R-MPNN but only $|V|$ queries with $\cmpnn$, resulting difference in amortized complexity of a single query. However, as discussed in \Cref{sec:cmpnns}, this comes at the cost of expressivity. 

\textbf{Comparison with the architectures using labeling trick.} GraIL is an architecture using labeling trick and we focus on this to make the comparison concrete. In terms of runtime, architectures that rely on the labeling trick are typically slower, since they need to label both the source and the target node, as we state in \cref{sec:cmpnns}. This yields worse runtime complexity for these models, and particularly for GraIL
\footnote{Comparing to $\cmpnn$, the complexity for $\mes_r$ is different because GraIL utilizes RGCN as relational message passing model, which needs $\mathcal{O}(d^2)$ for linear transformation in its $\mes_r$.}, 
with an amortized complexity of a query being the same as the one for a forward pass, that is, $\mathcal{O}(T(|E|d^2 + |V|d^2))$. In $\cmpnn$s, we only label the source node $u$, which allows parallel comparison for all queries in the form $q(u,?)$: a single forward pass would compute all hidden representations of these queries. With the labeling trick, $|V|$ forward passes need to be carried out as each time, both source $u$ and target $v$ need to be specified. See \citet{nbfnet} for detailed discussion and proof.
 
\textbf{Comparison with higher-order GNNs.} Regarding higher-order GNNs, we take $2$-RN, the neural architecture from \citet{wl-relational} corresponding to $\rwl_2$. These higher-order models require $\mathcal{O}((|V||E|d + |V|^2d^2))$ in each layer (assuming each message function takes $\mathcal{O}(d)$ as before) to update all pairwise representations in a single forward pass, which is computationally prohibitive in practice. This makes the study of the amortized complexity obselete, but we provide an analysis for the sake of completeness. Similar to R-MPNN, we need an additional query-specific unary decoder for the task of link prediction on knowledge graphs, resulting in $\mathcal{O}(d)$ complexity overhead. The amortized complexity of a single query is thus $ \mathcal{O}(T(\frac{|E|d}{|R||V|} + \frac{d^2}{|R|} + d))$.

\begin{table*}[!t]
\centering
\caption{Model asymptotic runtime complexities. Observe that the amortized complexity of $2$-RN becomes obselete by the fact that its forward pass being prohibitive in practice (quadratic in $|V|$).}
\label{table:model-complexities}
\begin{tabular}{ccc}
\toprule
 \textbf{Model} & \textbf{Complexity of a forward pass} & \textbf{Amortized complexity of a query} \\
\midrule
R-MPNNs & $\mathcal{O}(T(|E|d + |V|d^2))$ & $\mathcal{O}(T(\frac{|E|d}{|R||V|^2} + \frac{d^2}{|R||V|} + d))$ \\
$\cmpnn$s & $\mathcal{O}(T(|E|d + |V|d^2))$ & $\mathcal{O}(T(\frac{|E|d}{|V|} + d^2))$ \\
GraIL & $\mathcal{O}(T(|E|d^2 + |V|d^2))$ & $\mathcal{O}(T(|E|d^2 + |V|d^2))$ \\
$2$-RN & $\mathcal{O}(T(|V||E|d + |V|^2d^2))$ & $ \mathcal{O}(T(\frac{|E|d}{|R||V|} + \frac{d^2}{|R|} + d)) $ \\
\bottomrule
\end{tabular}
\end{table*}

\section{Experiments on inductive link prediction}

\subsection{Details of the experiments reported in \cref{sec: inductive-experiment}}
\label{app:hyper-and-more}
In this section, we report the details of the experiments reported in the body of this paper. Specifically, we report the performance of some baseline models (\Cref{tab:baseline-inductive}), dataset statistics (\Cref{inductive-dataset-statistics}), hyperparameters (\Cref{inductive-hyper-parameter-statistics}), and the number of trainable parameters for each model variation (\Cref{inductive-table-parameterNum}).

\begin{table*}[ht!]
\centering
\caption{Inductive relation prediction results. We use \emph{basic} $\cmpnn$ architecture with $\aggregate = \text{sum}$, $\mes = \mes_r^1$, and $\init = \init^2$ with no readout component. All the baselines are taken from the respective works.}
\label{tab:baseline-inductive}
\begin{tabular}{ccccccccc}
\toprule  \multicolumn{1}{c}{ \textbf{Model} } & \multicolumn{4}{c}{ \textbf{WN18RR} } & \multicolumn{4}{c}{\textbf{FB15k-237}} \\
 & \textbf{v1} & \textbf{v2} & \textbf{v3} & \textbf{v4} & \textbf{v1} & \textbf{v2} & \textbf{v3} & \textbf{v4} \\
\midrule 
NeuralLP\cite{NeuralLP} & $0.744$ & $0.689$ & $0.462$ & $0.671$ & $0.529$ & $0.589$ & $0.529$ & $0.559$ \\
DRUM \cite{DRUM} & $0.744$ & $0.689$ & $0.462$ & $0.671$ & $0.529$ & $0.587$ & $0.529$ & $0.559$ \\
RuleN \cite{ruleN}& $0.809$ & $0.782$ & $0.534$ & $0.716$ & $0.498$ & $0.778$ & $0.877$ & $0.856$ \\
GraIL \cite{GraIL}& $0.825$ & $0.787$ & $0.584$ & $0.734$ & $0.642$ & $0.818$ & $0.828$ & $0.893$ \\ 
$\cmpnn$ & $\mathbf{0.932}$ & $\mathbf{0.896}$ & $\mathbf{0.900}$ & $\mathbf{0.881}$ & $\mathbf{0.794}$ & $\mathbf{0.906}$ & $\mathbf{0.947}$ & $\mathbf{0.933}$ \\
\bottomrule
\end{tabular}
\end{table*}

\begin{table*}[ht!] 
\centering
\caption{Dataset statistics for the inductive relation prediction experiments. \textbf{\#Query*} is the number of queries used in the validation set. In the training set, all triplets are used as queries. }
\label{inductive-dataset-statistics}
\begin{tabular}{cccccccccc}
\toprule  \multirow{2}{*}{ \textbf{Dataset} } & \multirow{2}{*}{ }&\multirow{2}{*}{\textbf{\#Relation}}&\multicolumn{3}{c}{ \textbf{Train \& Validation} } & \multicolumn{3}{c}{\textbf{Test}}\\
 & & & \textbf{\#Nodes} & \textbf{\#Triplet} & \textbf{\#Query*} & \textbf{\#Nodes} & \textbf{\#Triplet}  & \textbf{\#Query} \\
 \midrule
\multirow{4}{*}{WN18RR} 
& $v_1$ & $9$ & $2{,}746$ & $5{,}410$ & $630$ & $922$ & $1{,}618$ & $188$ \\
& $v_2$ & $10$ & $6{,}954$ & $15{,}262$ & $1{,}838$ & $2{,}757$ & $4{,}011$ & $441$ \\
& $v_3$ & $11$ & $12{,}078$ & $25{,}901$ & $3{,}097$ & $5{,}084$ & $6{,}327$ & $605$ \\
& $v_4$ & $9$ & $3{,}861$ & $7{,}940$ & $934$ & $7{,}084$ & $12{,}334$ & $1{,}429$ \\
\midrule 
\multirow{4}{*}{FB15k-237}
& $v_1$ & $180$ & $1{,}594$ & $4{,}245$ & $489$ & $1{,}093$ & $1{,}993$ & $205$ \\
& $v_2$ & $200$ & $2{,}608$ & $9{,}739$ & $1{,}166$ & $1{,}660$ & $4{,}145$ & $478$ \\
& $v_3$ & $215$ & $3{,}668$ & $17{,}986$ & $2{,}194$ & $2{,}501$ & $7{,}406$ & $865$ \\
& $v_4$ & $219$ & $4{,}707$ & $27{,}203$ & $3{,}352$ & $3{,}051$ & $11{,}714$ & $1{,}424$ \\

\bottomrule
\end{tabular}
\end{table*}

\begin{table*}[ht!] 
\centering
\caption{Hyperparameters for inductive experiments with $\cmpnn$.}
\label{inductive-hyper-parameter-statistics}
\begin{tabular}{llcc}
\toprule  \multicolumn{2}{l}{ \textbf{Hyperparameter} } &\multicolumn{1}{c}{ \textbf{WN18RR} } & \multicolumn{1}{c}{\textbf{FB15k-237}} \\
\midrule
\multirow{2}{*}{\textbf{GNN Layer}} & Depth$(T)$ & $6$ & $6$ \\
& Hidden Dimension & $32$ & $32$ \\
\midrule
\multirow{2}{*}{\textbf{Decoder Layer}} & Depth & $2$ & $2$ \\
& Hidden Dimension & $64$ & $64$\\
\midrule
\multirow{2}{*}{\textbf{Optimization}} & Optimizer & Adam & Adam\\
& Learning Rate & 5e-3 & 5e-3\\
\midrule
\multirow{4}{*}{\textbf{Learning}} & Batch size & $8$ & $8$ \\
& \#Negative Samples & $32$ & $32$ \\
& Epoch & $20$ & $20$   \\
& Adversarial Temperature & $1$ & $1$ \\
\bottomrule
\end{tabular}

\end{table*}

\begin{table*}[ht!] 
\centering
\caption{Number of trainable parameters used in inductive relation prediction experiments for $\cmpnn$ architectures with $\init^2$ initialization .}
\label{inductive-table-parameterNum}
\begin{tabular}{ccccccccccc}
\toprule  \multicolumn{2}{c}{ \textbf{Model architectures} } & \multicolumn{4}{c}{ \textbf{WN18RR} } & \multicolumn{4}{c}{\textbf{FB15k-237}} \\
 $\aggregate$ & $\mes_r$  & \textbf{v1} & \textbf{v2} & \textbf{v3} & \textbf{v4} & \textbf{v1} & \textbf{v2} & \textbf{v3} & \textbf{v4} \\
\midrule \multirow{1}{*} 
sum  & $\mes_r^{1}$  & 132k & 144k & 157k & 132k & 2,310k & 2,564k & 2,755k & 2,806k \\
sum  & $\mes_r^{2}$ & 21k & 22k & 22k & 21k & 98k & 107k & 113k & 115k \\
sum  & $\mes_r^{3}$  & 128k & 141k & 153k & 128k & 2,240k & 2,488k & 2,673k & 2,722k \\
\midrule
PNA  & $\mes_r^{1}$ & 199k & 212k & 225k & 199k & 2,377k & 2,632k & 2,823k & 2,874k  \\
PNA  & $\mes_r^{2}$ & 89k & 89k & 90k & 89k & 165k & 174k & 181k & 183k  \\
PNA  & $\mes_r^{3}$  & 196k & 208k & 221k & 196k & 2,308k & 2,555k & 2,740k & 2,790k  \\
\bottomrule
\end{tabular}
\end{table*}

All of the model variations minimize the negative log-likelihood of positive and negative facts. We follow the \emph{partial completeness assumption}~\cite{partial_completeness_assumption} by randomly corrupting the head entity or the tail entity to generate the negative samples. We parameterize the conditional probability of a fact $q(u,v)$ by $p(v \mid u,q) = \sigma(f(\vh^{(T)}_{v \mid u, q}))$, where $\sigma$ is the sigmoid function and $f$ is 2-layer MLP. Following RotatE~\cite{rotatE}, we adopt \emph{self-adversarial negative sampling} by sampling negative triples from the following distribution with $\alpha$ as the \emph{adversarial temperature}:
\begin{equation*}
    \mathcal{L}(v \mid u,q) =-\log p(v \mid u, q)-\sum_{i=1}^k w_{i,\alpha} \log (1-p(v_i^{\prime} \mid u_i^{\prime}, q)) \\
\end{equation*}
where $k$ is the number of negative samples for one positive sample and $(u^{\prime}_i, q, v^{\prime}_i)$ is the $i$-th negative sample. Finally, $w_i$ is the weight for the $i$-th negative sample, given by 
\[
w_{i,\alpha} := \operatorname{Softmax}\left(\frac{\log (1-p(v_i^{\prime} \mid u_i^{\prime}, q))}{\alpha}\right).
\]

\subsection{Experiments for evaluating the effect of initialization functions}
\label{sec:initialization}

\textbf{Initialization functions ({Q3}).} We argued that the initialization function $\init(u,v,q)$ needs to satisfy the property of \emph{target node distinguishability} to compute binary invariants. To validate the impact of different initialization regimes, we conduct a further experiment which is reported in \Cref{inductive-table-initialization}, with the same experiment settings as in \Cref{sec: inductive-experiment}. In addition to the initialization functions $\init^1,\init^2,\init^3$ defined in \Cref{sec:design-cmpnns}, we also experiment with a simple function $\init^0$ which assigns $\mathbf{0}$ to all nodes. 
As expected, using $\init^0$ initialization, results in a very sharp decrease in model performance in WN18RR, but less so in FB15k-237. Intuitively, the model suffers more in WN18RR since there are much fewer relations, and it is harder to distinguish node pairs without an initialization designed to achieve this. Perhaps one of the simplest functions satisfying target node distinguishability criteria is $\init^1 = \mathbb{1}_{u = v} * \mathbf{1}$, which pays no respect to the target query relation. Empirically, $\init^1$ achieves strong results, showing that even the simplest function ensuring this property could boost the model performance. Interestingly, the performance of models using $\init^1$ match or exceed models using $\init^2$, even though the latter additionally has a relation-specific learnable query vector. Note, however, that this shall not undermine the role of the learnable query relation: integrating the learnable query vector either in the initialization function \emph{or} in the message computation function seems to suffice.

\begin{table}[h!]
\centering
\caption{Inductive relation prediction of $\cmpnn$ using $\aggregate$ = sum, $\mes_r = \mes_r^{1}$, $f(t) = t$ and different initialization methods.}
\label{inductive-table-initialization}
\begin{tabular}{lcccccccc}
\toprule  \multicolumn{1}{l}{ \textbf{Initialization} } & \multicolumn{4}{c}{ \textbf{WN18RR} } & \multicolumn{4}{c}{\textbf{FB15k-237}} \\
$\init(u,v,q) $& \textbf{v1} & \textbf{v2} & \textbf{v3} & \textbf{v4} & \textbf{v1} & \textbf{v2} & \textbf{v3} & \textbf{v4} \\
\midrule 
$\init^0(u,v,q)$  & $0.615$ & $0.715$ & $0.811$ & $0.654$ & $0.777$ & $0.903$ & $0.894$ & $0.910$ \\
$\init^1(u,v,q)$ & $0.932$ & $0.894$ & $\mathbf{0.902}$ & $\mathbf{0.883}$ & $\mathbf{0.809}$ & $\mathbf{0.927}$ & $0.944$ & $0.911$\\
$\init^2(u,v,q)$ & $0.932$ & $\mathbf{0.896}$ & $0.900$ & $0.881$ & $0.794$ & $0.906$ & $\mathbf{0.947}$ & $0.933$  \\
$\init^3(u,v,q)$ & $\mathbf{0.934}$ & $0.890$ & $0.894$ & $0.877$ & $0.804$ & $0.924$ & $0.941$ & $\mathbf{0.944}$ \\
\bottomrule
\end{tabular} 
\end{table}
\textbf{Random initialization ({Q3}).} The idea of random node initialization is known to lead to more expressive models~\cite{SatoSDM2020,AbboudCGL21}. Inspired by this, we can incorporate varying degrees of randomization to the initialization, satisfying the target node distinguishability property in expectation. The key advantage is that the resulting models are still inductive and achieve a nontrivial expressiveness gain over alternatives.  
In this case, the models with $\init^3$ perform closer to the models with $\init^2$, but we do not see a particular advantage on these benchmarks.

\section{Experiments on transductive link prediction}

We further conducted transductive link prediction experiments to empirically validate that $\cmpnn$s are more expressive than R-MPNNs via the abstraction given by $\rawl_2$ and $\rwl_1$, respectively \textbf{(Q4)}.

\subsection{Experiments on WN18RR and FB15k-237}
\label{app:transductive}

\begin{table*}[t!]
\centering
\caption{Transductive knowledge graph completion task of $\cmpnn$s and R-MPNNs on WN18RR and FB15k-237. Here, $\cmpnn$-\emph{basic} refers to \emph{basic} $\cmpnn$ model without readout.}
\label{transductive-table-1}
\begin{tabular}{cccccccccc}
\toprule \multirow{2}{*}{ \textbf{Model class} } & \multirow{2}{*}{ \textbf{Architectures} } & \multicolumn{3}{c}{ \textbf{WN18RR} } & \multicolumn{3}{c}{\textbf{FB15k-237}} \\
& & \textbf{MR} & \textbf{MRR} & \textbf{Hits@10} & \textbf{MR} & \textbf{MRR} & \textbf{Hits@10} \\
\midrule 
\multirow{2}{*}{ \textbf{R-MPNNs} } & RGCN & $3069$ & $0.367$ & $0.405$ & $210$ & $0.205$ & $0.387$ \\
& CompGCN & $3590$ & $0.433$ & $0.519$ & $217$ & $0.334$ & $0.514$ \\
\midrule
\multirow{3}{*}{ \textbf{C-MPNNs} } & NeuralLP & $-$ & $0.435$ & $0.566$ & $-$ & $0.240$ & $0.362$ \\
& DRUM & $-$ & $0.486$ & $0.586$ & $-$ & $0.343$ & $0.516$ \\
& $\cmpnn$-\emph{basic}  & $\mathbf{687}$ & $\mathbf{0.534}$ & $\mathbf{0.643}$ & $\mathbf{121}$ & $\mathbf{0.400}$ & $\mathbf{0.583}$ \\
\bottomrule
\end{tabular}
\end{table*}

\begin{table*}[t!] 
\centering
\caption{Dataset statistics for transductive experiments on WN18RR and FB15k-237.  }
\label{transductive-dataset-statistics}
\begin{tabular}{cccccc}
\toprule  \multirow{2}{*}{ \textbf{Dataset} } & \multirow{2}{*}{\textbf{\#Nodes}}&\multirow{2}{*}{\textbf{\#Relation}}&\multicolumn{3}{c}{ \textbf{\#Triplet} } \\ 
 & & &  \textbf{\#Train} & \textbf{\#Valid} & \textbf{\#Test} \\
 \midrule
\multirow{1}{*}{WN18RR} 
& $40{,}943$  & $11$ & $86{,}835$ & $3{,}034$ & $3{,}134$ \\
\multirow{1}{*}{FB15k-237}
& $14{,}541$  &$237$ & $272{,}115$ & $17{,}535$ & $20{,}466$ \\
\bottomrule
\end{tabular}

\end{table*}

\begin{table*}[t!] 
\centering
\caption{Hyperparameters for transductive experiments on FB15k-237 and WN18RR with $\cmpnn$.}
\label{transductive-hyper-parameter-statistics}
\begin{tabular}{llcc}
\toprule  \multicolumn{2}{l}{ \textbf{Hyperparameter} } &\multicolumn{1}{c}{ \textbf{WN18RR} } & \multicolumn{1}{c}{\textbf{FB15k-237}} \\
\midrule
\multirow{2}{*}{\textbf{GNN Layer}} & Depth$(T)$ & $6$ & $6$ \\
& Hidden Dimension & $32$ & $32$ \\
\midrule
\multirow{2}{*}{\textbf{Decoder Layer}} & Depth & $2$ & $2$  \\
& Hidden Dimension & $64$ & $64$ \\
\midrule
\multirow{2}{*}{\textbf{Optimization}} & Optimizer & Adam & Adam \\
& Learning Rate & 5e-3 & 5e-3  \\
\midrule
\multirow{4}{*}{\textbf{Learning}} & Batch size & $8$ & $8$ \\
& \#Negative Sample & $32$ & $32$  \\
& Epoch & $20$ & $20$   \\
& Adversarial Temperature & $1$ & $1$ \\
\bottomrule
\end{tabular}

\end{table*}

\textbf{Datasets.}
We use two benchmark datasets for transductive link prediction experiments, namely WN18RR \cite{wordnet} and FB15k-237 \cite{freebase}, with the provided standardized train-test split. Similar to the inductive relation prediction experiments, we augment the fact $r(u,v)$ with its inverse fact $r^{-1}(v,u)$. The detailed data statistics are shown in Table \ref{transductive-dataset-statistics}.

\textbf{Implementation.}
For \emph{basic} $\cmpnn$ without readout (denoted as $\cmpnn$-\emph{basic} in \Cref{transductive-table-1}), all hyper-parameters used are reported in Table \ref{transductive-hyper-parameter-statistics}, and we adopt layer-normalization~\cite{Layer_normalization} and short-cut connection after each aggregation and before applying ReLU. We also discard the edges that directly connect query node pairs to prevent overfitting. 
For RGCN, we use two layers, each with a hidden dimension of $100$. We consider the same basis-decomposition trick with a basis number equal to $30$. In addition, we train the model with a learning rate of $0.1$ and a dropout rate of $0.2$, using Adam optimizer for $10{,}000$ epochs. 
For CompGCN, we use three layers, each with hidden dimension $200$, and adopt \emph{distmult} scoring function as well as element-wise multiplication as a composition function. During the training process, we consider a learning rate of $0.001$ and a dropout rate of $0.1$, using Adam optimizer for $500$ epochs. Additionally, since there are no node features for both of the datasets, we initialize node representations using learnable embeddings with Xavier initialization for RGCN and CompGCN. 
We take the results of NeuralLP from \citet{NeuralLP}, and the results of DRUM from \citet{DRUM}, respectively. The best checkpoint for each model instance is selected based on its performance on the validation set. All experiments are performed on one NVIDIA V100 32GB GPU.

\textbf{Evaluation.} We consider \emph{filtered ranking protocol}~\cite{transE} with $1$ negative sample per positive triplet, and report Mean Rank(MR), Mean Reciprocal Rank(MRR), and Hits@10 for each model. We also report averaged results of \emph{five} independent runs for all experiments. 

\textbf{Results.} 
From \Cref{transductive-table-1}, it is evident that models under the class of C-MPNN consistently outperform R-MPNN models across both datasets and all evaluation metrics, with only one exception on FB15k-237 with NeuralLP. This aligns with our theoretical understanding, as C-MPNN models are inherently more expressive than R-MPNNs. In particular, the $\cmpnn$-\emph{basic} model stands out by achieving the lowest MR and the highest values for both MRR and Hits@10, surpassing all other models by a significant margin, underscoring its efficiency and robustness in transductive knowledge graph completion tasks.

\subsection{Experiments on biomedical datasets}
\label{app: biomed}

To compare R-MPNNs and $\cmpnn$s on large-scale graphs, we carried out additional transductive knowledge graph completion experiments on biomedical datasets: Hetionet \cite{hetionet} and ogbl-biokg \cite{OGB-NeurIPS2020}.

\textbf{Datasets.}
Hetionet~\cite{hetionet} is a large biomedical knowledge graph that integrates data from $29$ different public databases, representing various biological connections and associations, and ogbl-biokg is a large-scale biomedical knowledge graph dataset, developed as a part of the Open Graph Benchmark (OGB)~\cite{OGB-NeurIPS2020} suite. We have used the provided standardized train-test split for both datasets and similarly augmented every fact $r(u,v)$ with its inverse relation $r^{-1}(u,v)$. We present the detailed dataset statistics in \Cref{transductive-dataset-statistics-biomed}. 

\textbf{Implementation.} 
We study the \emph{basic} C-MPNN without readout, referred to as $\cmpnn$ in the \cref{table:transductive-biomed-table}, the epoch number is set to $1$ due to the datasets being larger and denser, leading to longer execution times for $\cmpnn$. All hyperparameters used for $\cmpnn$ are reported in Table \ref{transductive-hyper-parameter-statistics-biomed}, and we adopt layer-normalization~\cite{Layer_normalization} and short-cut connection after each aggregation and before applying ReLU. We also discard the edges that directly connect query node pairs. For RGCN, we consider a batch size of $65536$, a learning rate of $0.001$, and $100$ epochs for both datasets. On Hetionet, we use a hidden dimension size of $200$, and a learning rate of $0.001$ across $4$ layers. On ogbl-biokg, we consider a dimension size of $500$, and a dropout rate of $0.2$ across $2$ layers. 
For CompGCN on Hetionet, the parameters include a batch size of $65536$, $100$ epochs, a dimension size of $200$, a learning rate of $0.01$, and a single layer. We adopt \emph{distmult} scoring function and element-wise multiplication as a composition function. On ogbl-biokg, CompGCN results cannot be reproduced to the best of our effort due to an out-of-memory (OOM) error. The best checkpoint for each model instance is selected based on its performance on the validation set. All experiments are performed on one NVIDIA V100 32GB GPU. In addition, since there are no node features for both of the datasets, we initialize node representations using learnable embeddings with Xavier initialization for RGCN and CompGCN. 

\textbf{Evaluation.} We consider \emph{filtered ranking protocol}~\cite{transE}, but restrict our ranking to entities of the same type. With each positive triplet, we use $32$ number of negative samples and report the averaged results of  Mean Reciprocal Rank(MRR), Hits@1, and Hits@10 over \emph{five} independent runs for each model.

\textbf{Results.} 
From \Cref{table:transductive-biomed-table}, it is evident that $\cmpnn$ outperforms R-MPNN on both datasets by a large margin, despite the challenges posed by the large and dense nature of biomedical knowledge graphs. These results are reassuring as they have further consolidated our theory, showing that the expressivity gain of $\cmpnn$s compared to R-MPNNs has a significant impact on real-world biomedical datasets.

\begin{table*}[htb!]
\centering
\caption{Transductive experiments on biomedical knowledge graphs. We use \emph{basic} $\cmpnn$ architecture with $\aggregate = \text{sum}$, $\mes = \mes_r^1$, and $\init = \init^2$ with no readout component. OOM stands for out of memory.}
\label{table:transductive-biomed-table}
\begin{tabular}{ccccccc}
\toprule  \multirow{2}{*}{ \textbf{Model} } & \multicolumn{3}{c}{ \textbf{Hetionet} } & \multicolumn{3}{c}{ \textbf{ogbl-biokg} } \\
  & \textbf{MRR} & \textbf{Hits@1} & \textbf{Hits@10}  & \textbf{MRR} & \textbf{Hits@1} & \textbf{Hits@10}  \\
\midrule 
RGCN  & $0.120$ & $0.067$ & $0.228$ & $0.636$& $0.511$ & $0.884$\\
CompGCN  & $0.152$ & $0.083$ & $0.292$ &  OOM & OOM & OOM \\
$\cmpnn$  & $\mathbf{0.479}$ & $\mathbf{0.394}$ & $\mathbf{0.649}$ & $\mathbf{0.790}$ & $\mathbf{0.718}$ & $\mathbf{0.927}$\\
\bottomrule
\end{tabular}
\end{table*}

\begin{table*}[t!] 
\centering
\caption{Dataset statistics for transductive biomedical knowledge graph completion.  }
\label{transductive-dataset-statistics-biomed}
\begin{tabular}{ccccccc}
\toprule  \multirow{2}{*}{ \textbf{Dataset} } & \multirow{2}{*}{\textbf{\#Nodes}}&\multirow{2}{*}{\textbf{\#Node Type}}&\multirow{2}{*}{\textbf{\#Relation}}&\multicolumn{3}{c}{ \textbf{\#Triplet} } \\ 
 & & & & \textbf{\#Train} & \textbf{\#Valid} & \textbf{\#Test} \\
 \midrule
\multirow{1}{*}{Hetionet}
& $47{,}031$ & $11$ &$24$ & $1{,}800{,}157$ & $225{,}020$ & $225{,}020$ \\
\multirow{1}{*}{ogbl-biokg}
& $93{,}773$ & $5$ &$51$ & $4{,}762{,}677$ & $162{,}886$ & $162{,}870$ \\
\bottomrule
\end{tabular}
\end{table*}

\begin{table*}[t!] 
\centering
\caption{Hyperparameters for the transductive experiments on biomedical knowledge graphs with $\cmpnn$.}
\label{transductive-hyper-parameter-statistics-biomed}
\begin{tabular}{llcc}
\toprule  \multicolumn{2}{l}{ \textbf{Hyperparameter} }  & \multicolumn{1}{c}{\textbf{Hetionet}} & \multicolumn{1}{c}{\textbf{ogbl-biokg}}\\
\midrule
\multirow{2}{*}{\textbf{GNN Layer}} & Depth$(T)$ & $4$ & $6$\\
& Hidden Dimension  & $32$ & $32$\\
\midrule
\multirow{2}{*}{\textbf{Decoder Layer}} & Depth  & $2$ & $2$ \\
& Hidden Dimension & $32$ & $32$\\
\midrule
\multirow{2}{*}{\textbf{Optimization}} & Optimizer & Adam & Adam\\
& Learning Rate& 2e-3 & 2e-4 \\
\midrule
\multirow{4}{*}{\textbf{Learning}} & Batch size & $64$ & $8$\\
& \#Negative Sample &  $32$& $32$ \\
& Epoch & $1$ & $1$  \\
& Adversarial Temperature & $1$ & $0.5$\\
\bottomrule
\end{tabular}

\end{table*}

\section{Evaluating the power of global readout on a synthetic dataset}
\label{app:readout-synt}

We constructed a synthetic experiment to showcase the power of global readout (\textbf{Q5}).

\textbf{Dataset.} We proposed a synthetic dataset TRI-SQR, which consists of multiple pairs of knowledge graphs in form $(G_1,G_2)$ s.t. $G_1 = (V_1,E_1,R,c_1)$, $G_2 = (V_2,E_2,R,c_2)$ where $R = \{r_0,r_1,r_2\}$. For each pair, we constructed as follows:
\begin{enumerate}[leftmargin=.7cm]
    \item Generate an Erdos-Renyi graph $G_{\text{init}}$ with $5$ nodes and a random probability $p$. We randomly select one of the nodes as the source node $u$. 
    \item $G_1$ is constructed by disjoint union $G_{\text{init}}$ with two triangles, one with edges relation of $r_1$, and the other with $r_2$. The target query is $r_3(u,v)$ for all $v$ in a triangle with an edge relation of $r_1$. 
    \item Similarly, $G_2$ is constructed by disjoint union another copy of $G_{\text{init}}$ with two squares, one with edges relation of $r_1$, and the other with $r_2$. The target query is $r_3(u,v)$ for all $v$ in a square with an edge relation of $r_2$.
\end{enumerate}
One example of such a pair can be shown in Figure \ref{fig:synthetic dataset}. We generate $100$ graph pairs and assign $70$ pairs as the training set and the remaining $30$ pairs as the testing set ($140$ training graphs and $60$ testing graphs). In total, there are $490$ training triplets and $210$ testing triplets. 

\textbf{Objective.} For all node $v \notin G_{\text{init}}$, we want a higher score for all the links $r_0(u,v)$ if either $v$ is in a triangle consists of $r_1$ relation or $v$ is in a square consists of $r_2$ relation, and a lower score otherwise. For negative sampling, we choose counterpart triplets for each graph, that is, we take $r_0(u,v)$ for all $v$ in a triangle with an edge relation of $r_2$ in $G_1$ and in a square with an edge relation of $r_1$ in $G_2$. 

\textbf{Model architectures.} We have considered two model architectures, namely $\cmpnn$s with $\init = \init^2, \aggregate = \operatorname{sum}, \mes = \mes_r^{2}$, and $f(t) = t$ without sum global readout:
\begin{align*}
\vh_{v|u,q}^{(0)} &= \mathbb{1}_{u = v} * \mathbf{z}_q ,  \\
\vh_{v|u,q}^{(t+1)} &= \sigma\Big(\mathbf{W}_0^{(t)}\big(\mathbf{h}_{v|u,q}^{(t)} + \sum_{r\in R}\sum_{w\in \mathcal{N}_r(v)} \mes_r^{2}(\mathbf{h}_{w|u,q}^{(t)},\mathbf{z}_q \big) \big)\Big),
\end{align*}

and the same $\cmpnn$ architecture with sum global readout:
\begin{align*}
\vh_{v|u,q}^{(0)} &= \mathbb{1}_{u = v} * \mathbf{z}_q ,  \\
\vh_{v|u,q}^{(t+1)} &= \sigma\Big(\mathbf{W}_0^{(t)}\big(\mathbf{h}_{v|u,q}^{(t)} + \sum_{r\in R}\sum_{w\in \mathcal{N}_r(v)} \mes_r^{2}(\mathbf{h}_{w|u,q}^{(t)},\mathbf{z}_q \big) \big) + \mathbf{W}_1^{(t)} \sum_{w \in V}\vh_{w|u,q}^{(t)}\Big),
\end{align*}

\textbf{Design.} We claim that $\cmpnn$s with sum readout can correctly predict all testing triplets, whereas $\cmpnn$s without sum readout will fail to learn this pattern and achieve $50\%$ as random guessing. Theoretically, the TRI-SQR dataset is designed in such a way that any R-MPNN model assigns identical node representations to nodes in triangles and squares with the same relation type. Consequently, any $\cmpnn$ model without global readout will be unable to determine which graph between $G_1$ and $G_2$ in the graph pair is being predicted, making it challenging to learn the conditional representation $\vh_{u|v,q}$. However, we anticipate that a $\cmpnn$ model with sum readout can differentiate between $G_1$ and $G_2$ in each pair, as it can access global information like the total number of nodes in the graph. This allows it to accurately identify the graph being predicted in the graph pair, even when the representations of the triangle and square with the same relation are identical. As a result, it can learn the target rules and achieve $100\%$ accuracy.

\textbf{Experiment details.} We configure each model variant with four layers and 32 hidden dimensions per layer. We set a learning rate of 1e-4 for both models and train them for $500$ epochs. Empirically, we find that $\cmpnn
$ with global sum readout achieves $100\%$ accuracy, while $\cmpnn$ without global sum readout reaches $50\%$ accuracy as random guesses, which is consistent with our theoretical expectations. 
\begin{figure}
\[\begin{tikzcd}
	& u &&&&& \bullet & \bullet \\
	\bullet && \bullet &&& \bullet & \bullet & \bullet & \bullet \\
	\bullet && \bullet \\
	&&&& {G_1} \\
	& u &&&& \bullet & \bullet & \bullet & \bullet \\
	\bullet && \bullet &&& \bullet & \bullet & \bullet & \bullet \\
	\bullet && \bullet \\
	&&&& {G_2}
	\arrow["{r_1}", from=2-1, to=2-3]
	\arrow["{r_1}"', from=2-3, to=3-1]
	\arrow["{r_2}"', from=3-1, to=1-2]
	\arrow["{r_2}", from=1-2, to=3-3]
	\arrow["{r_2}"', from=3-3, to=2-1]
	\arrow["{r_1}", from=2-1, to=1-2]
	\arrow["{r_2}", from=2-3, to=3-3]
	\arrow["{r_1}"', from=1-7, to=2-6]
	\arrow["{r_1}", from=2-6, to=2-7]
	\arrow["{r_2}"', from=1-8, to=2-8]
	\arrow["{r_1}"', from=2-7, to=1-7]
	\arrow["{r_2}"', from=2-9, to=1-8]
	\arrow["{r_2}"', from=2-8, to=2-9]
	\arrow["{r_0}"', curve={height=-12pt}, dashed, from=1-2, to=2-6]
	\arrow["{r_0}"', curve={height=-18pt}, dashed, from=1-2, to=1-7]
	\arrow["{r_0}"', curve={height=24pt}, dashed, from=1-2, to=2-7]
	\arrow["{r_1}", from=6-1, to=6-3]
	\arrow["{r_2}", from=6-3, to=7-3]
	\arrow["{r_1}"', from=7-3, to=6-1]
	\arrow["{r_2}"', from=6-3, to=7-1]
	\arrow["{r_1}", from=6-1, to=5-2]
	\arrow["{r_2}"', from=7-1, to=5-2]
	\arrow["{r_2}", from=5-2, to=7-3]
	\arrow["{r_1}"', from=5-6, to=5-7]
	\arrow["{r_1}"', from=5-7, to=6-7]
	\arrow["{r_1}"', from=6-7, to=6-6]
	\arrow["{r_1}"', from=6-6, to=5-6]
	\arrow["{r_2}"', from=5-8, to=5-9]
	\arrow["{r_2}"', from=5-9, to=6-9]
	\arrow["{r_2}"', from=6-9, to=6-8]
	\arrow["{r_2}"', from=6-8, to=5-8]
	\arrow["{r_0}"', curve={height=-12pt}, dashed, from=5-2, to=5-8]
	\arrow["{r_0}", curve={height=18pt}, dashed, from=5-2, to=6-8]
	\arrow["{r_0}", curve={height=-18pt}, dashed, from=5-2, to=5-9]
	\arrow["{r_0}"', curve={height=30pt}, dashed, from=5-2, to=6-9]
\end{tikzcd}\]
    \caption{Construction of graph pair in TRI-SQR for global readout}
    \label{fig:synthetic dataset}
\end{figure}
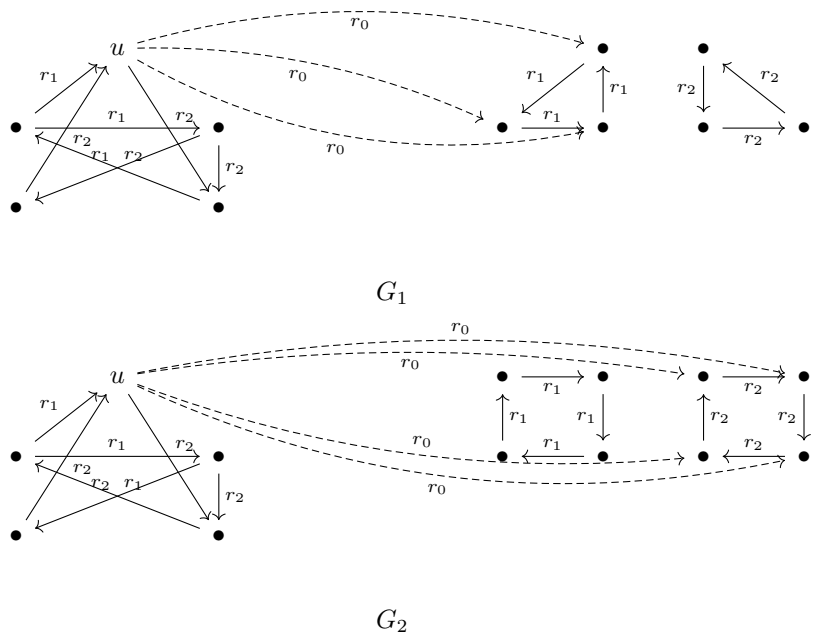

\end{document}